\documentclass{article} 
\usepackage{iclr2026_conference,times}

\usepackage{amsmath,amsfonts,bm}

\def\eqref#1{equation~\ref{#1}}

\def\1{\bm{1}}

\def\vtheta{{\bm{\theta}}}
\def\va{{\bm{a}}}
\def\vb{{\bm{b}}}

\def\vf{{\bm{f}}}
\def\vg{{\bm{g}}}

\def\vp{{\bm{p}}}

\def\vr{{\bm{r}}}

\def\vu{{\bm{u}}}
\def\vv{{\bm{v}}}
\def\vw{{\bm{w}}}
\def\vx{{\bm{x}}}
\def\vy{{\bm{y}}}
\def\vz{{\bm{z}}}

\def\mA{{\bm{A}}}

\def\mF{{\bm{F}}}
\def\mG{{\bm{G}}}
\def\mH{{\bm{H}}}
\def\mI{{\bm{I}}}

\def\mK{{\bm{K}}}
\def\mL{{\bm{L}}}

\def\mP{{\bm{P}}}
\def\mQ{{\bm{Q}}}
\def\mR{{\bm{R}}}
\def\mS{{\bm{S}}}

\def\mW{{\bm{W}}}
\def\mX{{\bm{X}}}

\def\mZ{{\bm{Z}}}

\DeclareMathAlphabet{\mathsfit}{\encodingdefault}{\sfdefault}{m}{sl}
\SetMathAlphabet{\mathsfit}{bold}{\encodingdefault}{\sfdefault}{bx}{n}
\newcommand{\tens}[1]{\bm{\mathsfit{#1}}}

\def\tG{{\tens{G}}}

\def\tP{{\tens{P}}}

\def\gD{{\mathcal{D}}}

\def\gG{{\mathcal{G}}}

\def\gM{{\mathcal{M}}}
\def\gN{{\mathcal{N}}}

\def\gS{{\mathcal{S}}}
\def\gT{{\mathcal{T}}}

\def\gV{{\mathcal{V}}}

\def\sD{{\mathbb{D}}}

\def\sG{{\mathbb{G}}}

\def\sP{{\mathbb{P}}}

\def\sR{{\mathbb{R}}}
\def\sS{{\mathbb{S}}}

\usepackage{hyperref}
\usepackage{booktabs}
\usepackage{amsmath}
\usepackage{url}
\usepackage{subcaption}
\usepackage{graphicx}
\usepackage{multirow}
\usepackage{amsthm}
\usepackage{amssymb}
\usepackage{wrapfig}
\usepackage{titletoc}

\usepackage{enumitem}
\usepackage{amsbsy}
\usepackage{adjustbox}
\usepackage{array}
\usepackage[table]{xcolor}
\usepackage{algorithm}
\usepackage{algpseudocode}
\usepackage{caption}
\title{Multi-domain Riemannian Graph Gluing for Building Graph Foundation Models}

\author{Li Sun$^{1}$\thanks{Corresponding Author: Li Sun, lsun@bupt.edu.cn},\; 
Zhenhao Huang$^{2}$,\; 
Silei Chen$^{2}$,\; 
Lanxu Yang$^{2}$,\;
Junda Ye$^{1}$,\\
\textbf{Sen Su}$^{1}$\textbf{,}\;
\textbf{Philip S. Yu}$^{3}$
	\\
	$^{1}$Beijing University of Posts and Telecommunications, Beijing 100876, China\;
    \\
    $^{2}$North China Electric Power University, Beijing 102206, China \;
    \\
    $^{3}$University of Illinois Chicago, IL, USA\;
}

\newcommand{\Ric}{\text{Ric}}
\newcommand{\diag}{\text{diag}}
\newtheorem{theorem}{Theorem}[section]

\newtheorem{lemma}[theorem]{Lemma}

\newtheorem{definition}[theorem]{Definition}

\iclrfinalcopy 
\begin{document}

\maketitle

\begin{abstract}
Multi-domain graph pre-training integrates knowledge from diverse domains to enhance performance in the target domains, which is crucial for building graph foundation models. 
Despite initial success, 
existing solutions often fall short of answering a fundamental question: \emph{how is knowledge integrated or transferred across domains?}
This theoretical limitation motivates us to rethink the consistency and transferability between model pre-training and domain adaptation.
In this paper, we propose a fresh Riemannian  geometry perspective, 
whose core idea is to merge any graph dataset into a unified, smooth \textbf{Riemannian manifold}, 
enabling a systematic understanding of knowledge integration and transfer. 
To achieve this, our key contribution is the theoretical establishment of neural manifold gluing, which first characterizes local geometry using an adaptive orthogonal frame and then ``glues'' the local pieces together into a coherent whole. 
Building on this theory, we present the \textbf{\textsc{GraphGlue}} framework, which supports batched pre-training with EMA prototyping and provides a transferability measure based on geometric consistence. 
Extensive experiments demonstrate its superior performance across diverse graph domains. 
Moreover, we empirically validated \textsc{GraphGlue}’s geometric scaling law, showing that larger quantities of datasets improve model transferability by producing a smoother manifold. 
\textbf{Codes} are available at \url{https://github.com/RiemannGraph/GraphGlue}.
\end{abstract}

\section{Introduction}

Foundation models have revolutionized the representation learning in natural language processing \cite{bommasani2021opportunities, brown2020language, devlin2019bert} and computer vision \cite{dosovitskiy2020image} by integrating multi-domain knowledge during pre-training and transferring it to target domains. Graph-structured data are ubiquitous non-Euclidean structures in real-world applications, ranging from social network analysis \cite{zhou2020graph, sharma2024survey} to molecular design \cite{guo2022graph, wang2023graph}. Hence, recent efforts have been made to replicate the success of the foundation model in the field of graphs, achieving multi-domain pre-training and cross-domain transfer learning for graphs. 

Multi-domain graph pre-training is challenging given the significant semantic heterogeneity across different domains, such as social networks and biological molecules. In the literature, one line of work extracts multi-domain knowledge via Large Language Models (LLMs), leveraging the well-pretrained textual semantics but remaining limited to text-attributed graphs \cite{zhu2025llm, xia2024opengraph, tang2024graphgpt, ren2024survey, chen2024llaga}. However, many real graphs lack explicit textual attributes. Moreover, textual annotation is labor-intensive and may introduce hallucinations through LLM generation.

Rather than being tied to textual information, multi-domain pre-training for text-free graphs has garnered increasing attention recently. A series of methods seek to learn shared or invariant knowledge during pre-training using graph codebooks \cite{wang2024gft, sun2025riemanngfm,jiang2024ragraph, bo2025quantizing}, motifs \cite{sun2025riemanngfm}, computation trees \cite{wang2024gft, wang2024towards}, etc. Meanwhile, advanced adaptation techniques are introduced to improve the downstream tasks, e.g., domain tokens \cite{yu2025samgpt, Jiao2025HGMP, yuan2025how, wang2025multi} and in-context learning \cite{huang2023prodigy, liu2023one}. 
While existing solutions have achieved encouraging results, a fundamental question remains inadequately addressed: \textbf{how is knowledge integrated or transferred across domains?} The theoretical underpinnings in this context remain underexplored. Though \cite{wang2024gft, zhang2024beyond, ruiz2020graphon} give similarity measures across different domains, they do not frame model pre-training and domain adaptation within a consistent framework. This gap limits its ability to assess transfer difficulty, especially for the unseen graphs. Thus, we are motivated to rethink the consistency and transferability to target domains. 


\begin{wrapfigure}{r}{0.52\textwidth}
    \centering
    \includegraphics[width=\linewidth]{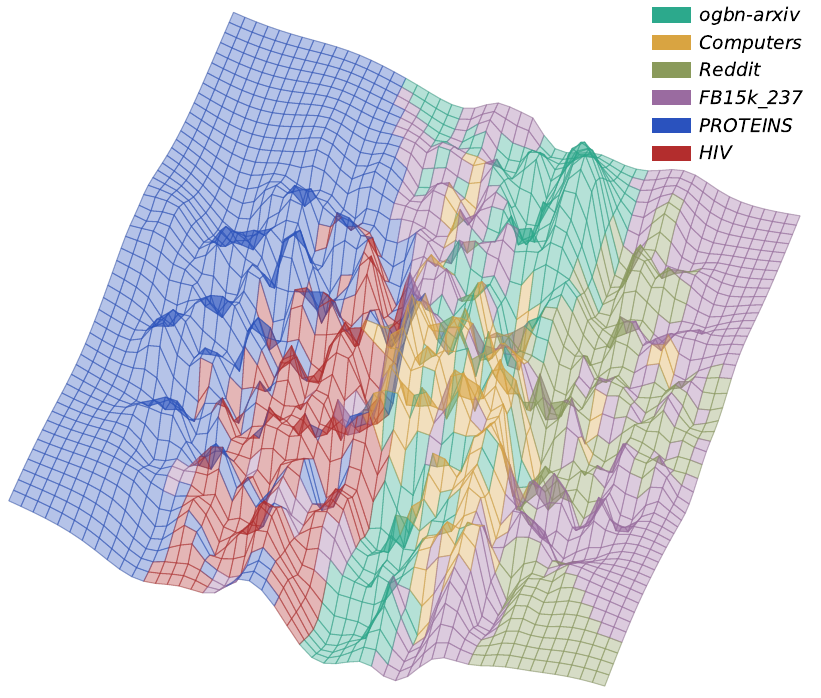}
    \caption{An illustration of manifold gluing. The domains are distinguished by colors.}
    \label{fig:example}
\end{wrapfigure}
In this paper, we propose a fresh differential geometry perspective, 
whose core is the integration of any graph dataset into \textbf{a unified, smooth Riemannian manifold}, 
providing a rigorous foundation for systematically analyzing knowledge integration and transfer. 
To achieve this, we introduce a new theory -- \textbf{Neural Manifold Gluing}, 
whose intuitive idea is to first characterize the local geometry, and then ``glue'' these local pieces together into a coherent whole. 
Specifically, 
we propose a sparse perturbation and an adaptive orthogonal frame to learn the local geometry. 
Gluing local pieces is achieved through metric compatibility along the edges (Theorem \ref{theorem. isometry}) and triangle triviality with respect to the concept of holonomy (Theorem \ref{theorem. triangle_trivial}).
Finally, we smooth the manifold by controlling the change ratio of volume elements (Theorem \ref{theorem. curv}), enhancing knowledge transport along the manifold.



Building on the theory established above, we design a pre-training-adaptation framework named \textbf{\textsc{GraphGlue}}, which extends local geometry to the global scale. 
During pre-training, we incorporate an Exponential Moving Average (EMA) prototyping before gluing, which distinguishes domain semantics through different locations on the manifold and efficiently handles large-scale graphs in a batched manner.
In the adaptation phase, \textsc{GraphGlue} employs learnable prompts and a Riemannian Mixture-of-Experts, while gluing target domains to the pre-trained manifold, ensuring geometric consistency.
A Geometric Transfer Metric (GTM) is naturally defined by metric compatibility to quantify transfer difficulty. 
Moreover,  \textsc{GraphGlue} exhibits a \textbf{geometric scaling law}: larger quantities of graph datasets produce a smoother manifold, thereby improving model transferability.

In summary, key contributions are listed as follows. 
\textbf{1. Problem.} We investigate the theoretical underpinnings of multi-domain graph pre-training, and study a foundational problem of how  knowledge  is integrated and transferred across different domains. 
\textbf{2. Theory.} We introduce a fresh differential geometry perspective for systematically understanding knowledge transfer, and propose the theory of neural manifold gluing, which consistently integrates multi-domain graphs into a unified, smooth Riemannian manifold via ``gluing’’.
\textbf{3. Methodology.}  We propose a \textsc{GraphGlue} framework based on the above theory, which supports batched pre-training for large-scale graphs and incorporates a natural metric to quantify its transferability. 
\textbf{4. Experiment.} We evaluate \textsc{GraphGlue}  in cross-domain transfer learning and empirically demonstrate its geometric scaling law.

\section{Related Work}

\paragraph{Graph Foundation Models}
Graph Foundation Models (GFMs) aim to provide pre-trainable, general-purpose deep learning architectures for graphs \cite{wang2025graph, liu2025graph}. Recently, the capabilities of Large Language Models (LLMs) have extended to text-attributed graphs \cite{zhu2025llm, xia2024opengraph, tang2024graphgpt, ren2024survey, chen2024llaga}. Also, GFMs have been developed for various specialized domains, such as knowledge graphs \cite{huang2025how, luo2025gfmraggraphfoundationmodel}, recommender systems \cite{wu2025graphfoundationmodelsrecommendation}, and molecular graphs \cite{xia2023molebert, sypetkowski2024on}. Given the prevalence of text-free graphs, recent efforts have focused on building general-purpose models via multi-domain pre-training \cite{zhao2025textfreegraphfoundationmodels}.
\paragraph{Multi-domain Graph Pre-training}
In graph pre-training, Graph Neural Networks (GNNs) are trained by self-supervised learning—either generative \cite{hou2022graphmae} or contrastive \cite{veličković2018deep, qiu2020gcc}.
In light of the semantic heterogeneity across different domains, several methods have been proposed to learn shared or invariant knowledge \cite{yuan2025how, chen2025autogfm, wang2025multi}. Despite the encouraging results, the theoretical foundations of how knowledge is integrated and transferred remain underexplored.
\paragraph{Graph Fine-tuning and Prompt Learning}
The alignment of pre-trained models with downstream tasks necessitates an adaptation phase, which is roughly categorized into two paradigms: 1) Graph fine-tuning adapts the model behavior using limited target-domain data  \cite{sun2024fine}, and recent advances introduce parameter-efficient fine-tuning methods such as low-rank adaptation \cite{yang2025graphlora}. 2) Graph prompting keeps pre-trained parameters frozen and enhances performance by inserting learnable prompt vectors \cite{yu2025samgpt,liu2023graphprompt, Sun2022gppt, fang2023universal}. Yet, how to quantify the transfer effort to target domains remains an open issue.
\paragraph{Riemannian Graph Representation Learning}
Most existing Riemannian models are tailored to specific tasks \cite{chami2019hyperbolic,grover2025curvgad,bachmann2020constant,gu2018learning}.  Recently,  \cite{sun2025riemanngfm} design a new GNN backbone on the product manifold for GFM. In contrast, our focus lies on developing a framework for multi-domain pre-training, and on constructing a general manifold, rather specific ones.
(Full related work is provided in Appendix~\ref{app:related_work}.)
\section{Notations and Preliminaries}
This part briefly reviews the key concepts of Riemannian manifold, frame and holonomy, and then states multi-domain pre-training where we reconsider its consistency and transferability from a fresh differential geometry perspective.
Important notations are summarized in Appendix A.
\paragraph{Riemannian Geometry}
Riemannian geometry provides an elegant framework for studying graphs and structures.
A \underline{Riemannian manifold} $(\mathcal{M}, \tG)$ with dimension $M$ is a smooth  manifold $\mathcal{M}$ endowed with a Riemannian metric tensor $\tG$. 
Each point $\vp \in \mathcal{M}$ ties to a tangent space $\mathcal T_{\vp}\mathcal{M}$, and its \underline{volume element}  is the determinant of the Riemannian metric tensor, denoted as $|\mG(\vp)|$.  
The coordinate chart of tangent space is denoted as $(U, x^1,...,x^M)$.
\underline{Ricci curvature} $\operatorname{Ric}(X, Y)$ governs the change ratio of volume elements along the geodesic.
The concept of \underline{holonomy} describes the changes of a tangent vector traversing a closed curve.
Rigorous elaborations are  in Appendix \ref{app:differential_geometry}.
\paragraph{Cartan's Method of Moving Frame}
This renowned method \cite{Eliot24Cartan} offers a principled way to study manifold geometry with a frame.
Though \'Elie Cartan laid the mathematical principle, its deep learning methodology remains largely unexplored. Our work seeks to bridge this gap.
\paragraph{Multi-domain Graph Pre-training}
In this context, a deep learning architecture is first pre-trained on different source domains and then adapted to a target domain. 
A graph is described as $\gG=(\mathcal{V}, \mathcal{E})$ with a feature matrix $\mX \in \mathbb R^{|\mathcal{V}|\times F}$, where $\mathcal{V}$ and $\mathcal{E}$ denote the node set and edge set, respectively. 
We consider a collection of $K$ graphs $\sS=\{\gS^1, \gS^2, \cdots, \gS^K\}$ from $L$ domains  $\sD=\{\gD^1, \gD^2, \cdots, \gD^L\}$.
A model $f_{\mathbf{\Theta}}(\operatorname{GNN}(\cdot))$  is pre-trained on the graph dataset $\sG$ with an encoder $\operatorname{GNN}(\cdot)$, after which the pre-trained parameters $\{\mathbf{\Theta}_f^\star, \mathbf{\Theta}_{\operatorname{GNN}}^\star\}$ are frozen.
The encoder is implemented with popular graph neural networks such as GCN \cite{kipf2017semisupervised}.
Given a graph $\gG^t$ of the target domain $\gD^t$, the pre-trained model can generate informative representations for $\gG^t$ with slight adaptation.
Note that the target domain can be seen $\gD^t\in \sD$ or unseen $\gD^t\notin \sD$ during pre-training.
Unlike existing solutions, \emph{our goal is to design a transferable graph model with a principled interpretation}.

\section{Theory: Constructing A Unified, Smooth  Manifold }
\label{sec:theory}


 
Existing solutions often lack a principled framework to interpret how knowledge is integrated or transferred across domains. To fill this gap, we introduce a differential geometry perspective for multi-domain graph pre-training. The core of our approach is the construction of a \textbf{pre-trainable, unified, and smooth Riemannian manifold}, 
which provides a rigorous foundation for systematically analyzing knowledge integration and transfer. 
In the literature, Riemannian graph representation learning primarily studies the specific manifolds, e.g., hyperbolic spaces \cite{chami2019hyperbolic, yang2025hyperbolicgraphneuralnetworks}, spherical spaces \cite{liu2022spherical}, and product manifolds \cite{gu2018learning}. However, constructing a general manifold underlying multi-domain graphs remains unexplored.

To achieve this, we establish a novel theory -- \textbf{neural manifold gluing}, whose intuitive idea is to first characterize the local geometry, and then ``glue'' these local pieces together to form a unified, smooth Riemannian manifold. Derivations and proofs of our establishment are provided in Appendix \ref{app.proof}.
\subsection{Learning Local Geometry with Adaptive Orthogonal Frame} \label{sec:LearnLocalGeo}
In a Riemannian manifold, the local geometry at a given point is characterized by its tangent space.
Going beyond the classic Cartan's method \cite{Eliot24Cartan}, we present a deep learning approach to infer the basis of the tangent space. 
Specifically,
we introduce a $(k,M)$-sparse perturbation, mimicking the directional derivative $D_\vv f=\lim_{t \rightarrow0}\frac{f(\vp+t\vv)-f(\vp)}{t}$, to generate a set of tangent vectors at the given point, 
after which an adaptive orthogonal frame is applied to form the basis of the tangent space.
Note that the perturbation is attached with a parametric $f_{\operatorname{GNN}}$ in our establishment. 
\begin{definition}[\textbf{$(k,M)$-sparse perturbation}]
    Given a graph perturbation set that consists of $M$ nodes $\sP=\{p_i\}$ with parameters $\{\vp_i\}$, for $\gG=(\mathcal{V},\mathcal{E})$, the perturbed graph is denoted as $\hat{\gG}=(\hat{\mathcal{V}}, \hat{\mathcal{E}}):=\gG \oplus \sP=\left(\mathcal{V} \cup \{p_m\}_{m=1}^M,\mathcal{E} \cup \{(v_{i_m},p_m)\}_{i_m=1,m=1}^{k,M} \right)$, where $(v_i,p_m)$ is a edge weighted by an attentive function $h(\vx_i,\vp_m)$, $v_{i_m}$ are $k$ nodes selected based on top-$k$ $h(\vx_i,\vp_m)$.
\end{definition}
\begin{definition}[\textbf{Adaptive Orthogonal Frame, AOF}]
\label{def. AFB}
    With tangent vectors generated by the above perturbation and a graph encoder $f_\text{GNN}$, after QR-decomposition with length recovery,
    the adaptive orthogonal frame is $\{\vw_m: \vz^{(i)} \mapsto \vw_m^{(i)}\in \sR^d\}_{m=1}^{M}$ for every representation $\vz^{(i)}$. There exists a dual frame $\{\vtheta^m \}$ such that $\vtheta^m(\vw_l)=\delta_{ml}$, where $\delta_{ml}$ is the Kronecker delta. 
\end{definition}
We show that the aforementioned length recovery of the basis is important, since the length of the tangent vector, describing the space deformation, is upper-bounded by the perturbation. In fact, the angles and lengths of the basis vectors reflect how the space is stretched and twisted, respectively. 
\begin{theorem}[\textbf{Upper bound of Tangent Vector Length, Appendix \ref{proof. relation of perturbation}}]
\label{theorem. tangent scale}
    Given a connected $\gG$ with $N$ nodes, the adjacency matrix $\mA$, the Laplacian $\mL$, and the feature matrix of perturbation nodes $\mP$, apply $(k,M)$-sparse perturbation to $\gG$, suppose $\frac{kM}{N}=\varepsilon$, where $\varepsilon > 0$ is small, and added edge weights satisfy $\sum_l h(v_i,p_l)=1$. Then, the upper bound 
$
 \| \vw_m^\vp\|\leq (1+\varepsilon) \| \mP \|
$
 holds, where $\vw_m^\vp$ is the component of $\vw_m$ determined by perturbation.
\end{theorem}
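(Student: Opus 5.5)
We will work with a single linear message-passing (GCN-type) layer as the model of $f_{\operatorname{GNN}}$; deeper encoders follow by composing the same estimate layer by layer. The plan is to identify $\vw_m^\vp$ explicitly as the part of the perturbed representation that is linear in $\mP$, and then bound the propagation operator that produces it. Write the perturbed graph $\hat{\gG}=\gG\oplus\sP$ with block adjacency
$$\hat{\mA}=\begin{pmatrix}\mA & \mH\\ \mH^\top & 0\end{pmatrix},$$
where $\mH\in\sR^{N\times M}$ collects the attentive weights $h(v_i,p_m)$. Each column of $\mH$ has at most $k$ nonzeros, and each row satisfies $\sum_l h(v_i,p_l)=1$ on the nodes that are touched. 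Propagation over $\hat{\gG}$ gives node representations
$$\hat{\mZ}=\hat{\mA}_{\text{norm}}\begin{pmatrix}\mX\\ \mP\end{pmatrix}\mW .$$
Subtracting the unperturbed output, the directional-derivative surrogate splits into a component driven by the change of normalization acting on $\mX$ and a component $\mH_{\text{norm}}\mP\mW$ driven directly by the perturbation features. We define $\vw_m^\vp$ to be the latter, read off along the $m$-th perturbation direction. QR-decomposition with length recovery keeps the direction from $\mQ$ but restores the original length, so $\|\vw_m^\vp\|$ equals the norm of this pre-orthogonalization vector. Hence it suffices to bound $\|\mH_{\text{norm}}\mP\|$, taking $\mW$ normalized or absorbed into $\mP$.

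The key step is bounding the normalized perturbation block. After the perturbation, node degrees become $\hat d_i = d_i + \sum_l h(v_i,p_l)$, which is $d_i+1$ for touched nodes and $d_i$ otherwise. The perturbation node degrees are $\hat d_{p_m}=\sum_{i}h(v_i,p_m)\le k$. Row-stochasticity of $\mH$ gives $\|\mH\|_\infty\le 1$, and column sparsity gives $\|\mH\|_1\le k$. The fraction of nodes touched by the perturbation is at most $kM/N=\varepsilon$. Since $\gG$ is connected, every touched node has $d_i\ge 1$, so symmetric normalization damps each entry $h/\sqrt{\hat d_i\hat d_{p_m}}$. I would bound the aggregation from perturbation nodes into each touched node by $\|\mP\|$, which follows from the convex combination $\sum_l h=1$. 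The normalization change then inflates the spectral norm of the full propagated block by at most a relative factor equal to the touched fraction, which yields $(1+\varepsilon)$. In spectral terms, $\hat{\mL}-\mL$ is supported on at most $kM$ rows, and its effect relative to $N$ nodes is an $O(\varepsilon)$ relative change, which can be controlled by Weyl or interlacing applied to $\hat{\mA}_{\text{norm}}$ restricted to the perturbation columns.

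The hardest step is converting sparsity into a \emph{multiplicative} factor exactly $1+\varepsilon$ rather than a constant depending on $k$. A naive operator-norm bound gives $\sqrt{\|\mH\|_1\|\mH\|_\infty}\le\sqrt{k}$. I would first try to avoid this by averaging over nodes, measuring $\|\vw_m^\vp\|$ as a representation aggregated over $\gG$, for instance through a readout or per-node RMS. In that measure, only an $\varepsilon$ fraction of nodes receives perturbation mass. The bound then comes out as $\|\mP\|$ from the convex-combination term plus $\varepsilon\|\mP\|$ from the renormalization cross-term, using $\varepsilon$ small to drop higher-order terms via a first-order expansion of $(\mI+\Delta)^{-1/2}$. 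If that fails, the fallback is to state the bound per touched node and absorb the $\sqrt{k}$ into the choice of normalization of $\mH$.
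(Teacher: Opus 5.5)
Your proposal has a genuine gap at the step you flag as hardest: the factor $1+\varepsilon$ is never derived, and the mechanisms you offer for it do not work. Weyl or interlacing controls the effect of the Laplacian perturbation through its operator norm. That norm is of order one, because it contains the added degrees and the weights $h(v_i,p_l)$, no matter how large $N$ is. Being supported on at most $kM$ rows does not make a perturbation small in operator norm, so no relative $O(\varepsilon)$ change follows. Likewise, renormalizing a touched node from $d_i$ to $d_i+1$ is a relative change of order $1/(d_i+1)$, unrelated to $kM/N$.

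Your accounting is also inconsistent. If you pool over the $N$ nodes, the convex-combination term itself picks up the factor $S/N\le\varepsilon$, and nothing of size $\|\mP\|$ remains. If you do not pool, nothing in the estimate is of size $\varepsilon$. The fallback of absorbing $\sqrt{k}$ into the normalization of $\mH$ changes the hypothesis $\sum_l h(v_i,p_l)=1$, so it does not prove the stated bound. The $\sqrt{k}$ appears only because you bound the unnormalized $\mH$. With symmetric normalization by the perturbed degrees, the bipartite block already has spectral norm at most $1$. That bounds your $\mH_{\mathrm{norm}}\mP$ by $\|\mP\|$ with no $\varepsilon$ at all, which signals that your object is not the quantity the theorem is about.

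The missing idea is what $\vw_m$ actually is in the paper. Propagation is modeled as heat diffusion $\mF(t)=\exp(-t\hat{\mL})\mF(0)$ on the stacked features $\mX$ and $\mP$. The tangent vector is $\vw_m=\vf_m-\vz$: the diffused row of the $m$-th perturbation node minus the mean-pooled graph representation $\vz=\frac{1}{N}\mathbf{1}_N^\top\mF_N(t)$. By linearity, the $\mP$-dependent part is $\vw_m^\vp=\bigl(\mK_{MM}(t)_{[m,:]}-\va(t)^\top\bigr)\mP$ with $\va(t)=\frac{1}{N}\mK_{SM}(t)^\top\mathbf{1}_S$, where $S\le kM$ counts the touched nodes.

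The leading $1$ comes from the perturbation node's own row of the heat kernel. The $\varepsilon$ comes only from the pooling: at most $S\le kM=\varepsilon N$ of the $N$ averaged rows carry perturbation mass, giving $\|\va(t)\|\le S/N\le\varepsilon$. The triangle inequality then yields $(1+\varepsilon)\|\mP\|$. The paper justifies these two coefficient bounds only informally from heat-kernel properties, but this split is what produces the factor.

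Your definition of $\vw_m^\vp$, as the $\mP$-linear change on the original nodes, discards the perturbation node's own representation, which is exactly where the leading term lives. That is why you could not find a source for the $1$ and the $\varepsilon$ at the same time.
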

Thus, the local metric at each point is derived through the basis vectors of its tangent space.
In particular, 
given the representation of $\gG_i$ as $\vz_i \in \sR^d$, 
the coordinates $U_i$ in a neighborhood around $\vz_i$, 
and the learned dual frame $\vtheta^m$, 
the local metric tensor  $\tG_i$ on $U_i$ takes the form of 
$  
        \tG_i(\vw^{(i)}_m,\vw^{(i)}_l) = g_{ml}(\vz_i) (\vtheta ^m \otimes \vtheta^l)
$, where $g_{ml}(\vz_i)=\langle \vw^{(i)}_m,\vw^{(i)}_l  \rangle$.
Equivalently, the matrix form of $\tG_i$ is written as 
\begin{align}
        \mG_i = \mW^{(i)\top}\mW^{(i)}=\diag(\|\vw_1\|^2,...,\|\vw_M\|^2),
\end{align}
with the basis of tangent space formulated as $\mW^{(i)}=[\vw^{(i)}_1,...,\vw^{(i)}_M] \in \sR^{d\times M}$.
The inner product w.r.t. $\mG_i$ is given as $\mG_i(\vu, \vv):= \vu^\top \mG_i \vv$ for tangent vectors $\vu, \vv \in T_{\vz^{(i)}}U_i$.

\subsection{Gluing Local Pieces to Form A Smooth Manifold}
Given a set of isolated Riemannian manifolds $ \{\mathcal{M}^{(i)}=(\vz^{(i)}, T_{\vz^{(i)}}U_{i}, \mG_i)\}_{i=1}^N$, 
we are devoted to gluing them together to construct a unified, smooth Riemannian manifold with a global metric.
In a nutshell,
These local pieces are connected through the edges and triangles with the concept of holonomy,
after which the constructed manifold is smoothed by controlling the Ricci curvature.


\textbf{Gluing.} We begin with the  \emph{compatibility of metric} along edges, which is necessary for the existence of a global metric.
According to \cite{edelsbrunner2010computational,chungSpectralGraphTheory2009}, the gluing boundary can be defined by the adjacency in graph topology. 
To preserve compatibility, we perform a tangent translation along an edge $(i,j)\in\mathcal{E}$, referred to as edge tangent translation, to transform the local metrics. We show that it ensures metric compatibility along an edge, 
and is proven to induce a global metric. 
In addition, its computational complexity is reduced to $\mathcal{O}(M)$ with the  QR-decomposition above.



\begin{definition}[\textbf{Edge Tangent Translation}]
\label{def. tangent edge translation}
Given an edge $(i, j) \in \mathcal{E}$, the tangent spaces of its two endpoints $T_{\vz^{(i)}}U_{i}$ and $T_{\vz^{(j)}}U_{j}$,  and the Riemannian metric of $T_{\vz^{(i)}}U_i$ denoted as $\mG_i$,
the edge tangent translation  is defined as a linear map $\mP^{(i,j)}: T_{\vz^{(i)}}U_i \rightarrow T_{\vz^{(j)}}U_j$ on edge $(i, j) \in \mathcal{E}$ as 
    \begin{align}
        \mP^{(i,j)} = \mG_j^{-1/2} \left( \mG_j^{1/2} \mG_i \mG_j^{1/2} \right)^{1/2} \mG_j^{-1/2}.
    \end{align}
\end{definition}
\begin{theorem}[\textbf{Tangent Edge Translation as Isometry, Appendix \ref{proof. isometry}}]
\label{theorem. isometry}
The tangent edge translation in Definition \ref{def. tangent edge translation} is the optimal solution of 
    \begin{align}
        \min\nolimits_{\mP \in \text{GL}(M)} \left\| \mP^\top \mG_j \mP - \mG_i \right\|_F^2,
    \end{align}
where $\operatorname{GL}$ denotes the general linear group, such that
$\mG_j(\mP^{(i,j)}\vu, \mP^{(i,j)}\vv)=\mG_i(\vu, \vv)$,
which induces an isometry $\phi^{(i,j)}$ between manifold boundaries $\partial U_i$ and $\partial U_j$.
\end{theorem}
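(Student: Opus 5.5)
The plan is to show that the objective in the statement attains its absolute minimum value $0$ at $\mP^{(i,j)}$. The isometry identity and the induced map $\phi^{(i,j)}$ then follow almost immediately. Throughout I use that $\mG_i,\mG_j$ are symmetric positive definite, which holds because $\mG_i=\diag(\|\vw^{(i)}_1\|^2,\dots,\|\vw^{(i)}_M\|^2)$ from the QR-decomposition with length recovery. I assume all recovered lengths are nonzero, so that $\mG_j^{\pm 1/2}$ and the principal square root are well defined.

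\textbf{Step 1 (well-posedness).} Set $\mS:=\mG_j^{1/2}\mG_i\mG_j^{1/2}$. This matrix is symmetric positive definite, being congruent to $\mG_i$. Hence $\mS^{1/2}$ exists and is symmetric positive definite. It follows that $\mP^{(i,j)}=\mG_j^{-1/2}\mS^{1/2}\mG_j^{-1/2}$ is symmetric positive definite, and in particular lies in $\operatorname{GL}(M)$.

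\textbf{Step 2 (exact metric matching).} Using $\mP^{(i,j)\top}=\mP^{(i,j)}$, I will compute directly:
\begin{align}
\mP^{(i,j)\top}\mG_j\mP^{(i,j)} = \mG_j^{-1/2}\mS^{1/2}\mG_j^{-1/2}\mG_j\mG_j^{-1/2}\mS^{1/2}\mG_j^{-1/2} = \mG_j^{-1/2}\mS\mG_j^{-1/2} = \mG_i .
\end{align}
So the objective $\|\mP^\top\mG_j\mP-\mG_i\|_F^2$ vanishes at $\mP^{(i,j)}$. Since the objective is nonnegative, $\mP^{(i,j)}$ is a global minimizer over $\operatorname{GL}(M)$. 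I will remark that the minimizer is not unique: $\mQ\mP^{(i,j)}$ also attains $0$ for any $\mG_j$-orthogonal $\mQ$. The definition selects the unique symmetric positive definite one, which is the same choice as in the Bures/Wasserstein geometric mean.

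In the diagonal case everything commutes, and the formula collapses to $\mP^{(i,j)}=\diag(\|\vw^{(i)}_m\|/\|\vw^{(j)}_m\|)_{m=1}^M$. This explains the claimed $\mathcal{O}(M)$ cost.

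\textbf{Step 3 (isometry).} For $\vu,\vv\in T_{\vz^{(i)}}U_i$ we have
\begin{align}
\mG_j(\mP^{(i,j)}\vu,\mP^{(i,j)}\vv)=\vu^\top\mP^{(i,j)\top}\mG_j\mP^{(i,j)}\vv=\vu^\top\mG_i\vv=\mG_i(\vu,\vv),
\end{align}
so $\mP^{(i,j)}$ is a linear isometry of inner-product spaces. To obtain $\phi^{(i,j)}$, I will define the affine map in the local coordinates
\begin{align}
\phi^{(i,j)}(\vz)=\vz^{(j)}+\mP^{(i,j)}(\vz-\vz^{(i)}),
\end{align}
restricted to the gluing boundary $\partial U_i$ determined by the edge $(i,j)$. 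Its differential is $\mP^{(i,j)}$ at every point. Since each local piece carries the constant metric $\mG_i$ (respectively $\mG_j$) on its chart, the pullback satisfies $\phi^{(i,j)*}\mG_j=\mG_i$, so $\phi^{(i,j)}$ is an isometry onto its image in $\partial U_j$.

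\textbf{Main obstacle.} The main obstacle is Step 3. The paper does not pin down precisely what $\partial U_i$ is or whether the local metric is constant on $U_i$. I will adopt the constant-metric reading of the local pieces $(\vz^{(i)},T_{\vz^{(i)}}U_i,\mG_i)$, and identify the boundary with the image of the chart under the affine gluing map. Under that reading, isometry reduces to the pointwise identity already established in Step 3.
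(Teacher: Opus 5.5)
Your proof is correct, and its core is the same as the paper's. Both verify directly that $\mP^{(i,j)\top}\mG_j\mP^{(i,j)}=\mG_i$, so the nonnegative objective attains $0$, and the identity $\mG_j(\mP^{(i,j)}\vu,\mP^{(i,j)}\vv)=\mG_i(\vu,\vv)$ is immediate.

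You depart from the paper in two places, and in both your version is the sounder one.

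\textbf{Uniqueness.} The paper tries to justify ``the'' optimal solution by claiming that $\mP^{(i,j)}$ is the minimum-Frobenius-norm element of $\{\mP^{(i,j)}\mR : \mR^\top\mG_i\mR=\mG_i\}$. That claim is false in general. Take $\mG_j=\diag(1,1/4)$ and $\mG_i=\diag(1,4)$. Then $\mP^{(i,j)}=\diag(1,4)$, with squared Frobenius norm $17$. The matrix $\bigl(\begin{smallmatrix}0&2\\2&0\end{smallmatrix}\bigr)$ also satisfies $\mP^\top\mG_j\mP=\mG_i$, and its squared norm is only $8$. Your reading is the sense in which the statement is actually true: the zero set is an orbit under $\mG_j$-orthogonal maps, and Definition~\ref{def. tangent edge translation} selects its unique symmetric positive definite member.

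\textbf{Construction of $\phi^{(i,j)}$.} The paper lifts $\mP^{(i,j)}$ through the exponential map to a diffeomorphism whose differential is prescribed only at $\vz^{(i)}$. That alone does not make $\phi^{(i,j)}$ an isometry. Your affine map has differential $\mP^{(i,j)}$ at every point, so under the constant-metric reading $\phi^{(i,j)*}\mG_j=\mG_i$ holds on the whole chart. The cost is that you must state that modelling assumption explicitly, which you do.

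One cosmetic fix: write the affine map in $M$-dimensional chart (frame) coordinates, not with the representations $\vz^{(i)},\vz^{(j)}\in\sR^d$. As written it does not typecheck, because $\mP^{(i,j)}$ is $M\times M$ and the paper takes $d\neq M$.
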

\begin{theorem}[\textbf{Existence of Global Metric, Appendix \ref{proof. global_metric}}]
\label{theorem. global_metric}
Let $(\{\mG_i\}_{i=1}^N, \{\mP^{(i,j)}\}_{(i,j)\in \mathcal{E}})$ be local metrics and tangent edge translations. There exists a unique global continuous metric $\mG$ on ${(\bigcup_\phi)}_{i=1}^N U_i$ such that the restriction of $\mG|_{U_i} = \mG_i$ for all $i$.
\end{theorem}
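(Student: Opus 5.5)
This is a gluing argument: use the isometries of Theorem \ref{theorem. isometry} to build the glued space as a quotient, define the metric chartwise, and show it is well defined. Form $\gM := \bigsqcup_{i=1}^N U_i/\sim$, where $x\sim\phi^{(i,j)}(x)$ for $x\in\partial U_i$ and $(i,j)\in\mathcal{E}$. Then take the equivalence relation generated by these identifications, writing $\pi_i:U_i\to\gM$ for the induced maps.

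\textbf{Step 1: the gluing maps are consistent.} I first check that $\mP^{(j,i)}=(\mP^{(i,j)})^{-1}$, so that $\sim$ is symmetric with invertible transition data. Write $\mP=\mP^{(i,j)}$, a symmetric positive definite matrix. From the closed form in Definition \ref{def. tangent edge translation}, $\mP\mG_i^{-1}\mP=\mG_j^{-1}$; this is the unique SPD solution of the Riccati-type equation $\mP^\top\mG_j\mP=\mG_i$. Exchanging the roles of $i$ and $j$ shows that $\mP^{-1}$ solves the equation defining $\mP^{(j,i)}$. By uniqueness of the SPD solution, $\mP^{-1}=\mP^{(j,i)}$. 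In the diagonal case of the AOF (where $\mG_i$ and $\mG_j$ commute), everything is explicit: $\mP=\mG_j^{-1/2}\mG_i^{1/2}$ entrywise.

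\textbf{Step 2: define $\mG$ and show it is well defined.} On $\pi_i(U_i)$, set $\mG:=(\pi_i^{-1})^*\mG_i$. On the interiors of the $U_i$ there is nothing to check, since the interiors are disjoint in the quotient. On an identified boundary point $x\sim\phi^{(i,j)}(x)$, the two candidate metrics are $\mG_i$ and $\mG_j$, related by the differential $d\phi^{(i,j)}=\mP^{(i,j)}$. Theorem \ref{theorem. isometry} gives $\mG_j(\mP\vu,\mP\vv)=\mG_i(\vu,\vv)$, so the two candidates are the same bilinear form on the identified tangent space. Hence $\mG$ is single-valued, and $\mG|_{U_i}=\mG_i$ holds by construction.

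\textbf{Step 3: continuity and uniqueness.} Each $\mG_i$ is continuous on $U_i$, and on boundaries the two pieces agree by Step 2. The pasting lemma on the finite closed cover $\{\pi_i(\overline{U_i})\}$ therefore gives a continuous $\mG$. Uniqueness is immediate: the sets $\pi_i(U_i)$ cover $\gM$, so any metric restricting to $\mG_i$ on every $U_i$ is determined at every point.

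\textbf{Main obstacle.} The hard step is a point reached through several edges, for instance a vertex shared by the boundaries of a triangle $(i,j,k)$. There well-definedness requires the cocycle condition $\mP^{(j,k)}\mP^{(i,j)}=\mP^{(i,k)}$, i.e.\ trivial holonomy. For the \emph{metric} alone, a weaker fact suffices: every composite of isometries is an isometry, so all the pulled-back forms coincide with the single form determined by $\mG_i$. I would argue the statement at the level of the bilinear form rather than the frames, and note that frame-level consistency is what the triangle-triviality result addresses later. If the generated equivalence relation produces a nontrivial loop, I would fall back on fixing a spanning tree of each connected component to define the identifications, and show that the remaining edges still agree at the level of metrics.
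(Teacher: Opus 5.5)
Your argument is sound at the paper's level of rigor, and its decisive step is the same as the paper's. You define $\mG$ chartwise, obtain chart-independence from $\mP^{(i,j)\top}\mG_j\mP^{(i,j)}=\mG_i$, and obtain uniqueness because the pieces cover.

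The scaffolding differs.
\begin{itemize}
\item The paper takes the glued space, its smooth structure, and hence the tangent spaces at the seams from a cited gluing lemma for manifolds with boundary (Lemma \ref{lemma. glue manifolds}), applied ``iteratively over all edges.'' It then checks compatibility on overlaps $U_i\cap U_j$. It never verifies $\mP^{(j,i)}=(\mP^{(i,j)})^{-1}$ and never discusses points reached through several edges, so it silently assumes the cocycle condition that only Theorem \ref{theorem. manifold} (with $\tP$ trivial) supplies.
\item You instead form a bare topological quotient with disjoint interiors. You prove the inverse relation from uniqueness of the SPD solution of $X\mG_jX=\mG_i$, and you get continuity from the pasting lemma.
\end{itemize}
Your route is more elementary and closer to what a merely \emph{continuous} metric requires. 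However, the quotient carries no tangent bundle of its own. To make ``the identified tangent space'' and the pasting step meaningful, glue the trivial bundles $\overline{U_i}\times\sR^M$ along the seams by the matrices $\mP^{(i,j)}$, and treat $\mG$ as a section of symmetric bilinear forms on that bundle. This is the work the paper outsources to the lemma.

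Your obstacle paragraph needs correcting: ``composites of isometries are isometries'' does not by itself make the metric well defined at a junction. Suppose two edge paths from $i$ to $k$ give transports $A$ and $B$ with $A^\top\mG_kA=B^\top\mG_kB=\mG_i$. For vectors carried along different paths, $\mG_k(A\vu,B\vv)=\mG_i(H\vu,\vv)$ with $H=B^{-1}A$. This equals $\mG_i(\vu,\vv)$ for all $\vu,\vv$ only if $H=\mI$. So only the norm is path-independent. Passing the generated equivalence relation to tangent vectors would identify $\vu$ with $H\vu$ and destroy the linear structure of the fiber.

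Your spanning-tree fallback is the right repair, and your Step 1 is exactly what makes it work. With $\mP^{(j,i)}=(\mP^{(i,j)})^{-1}$, backtracking along the tree cancels, so tree transports satisfy the cocycle condition and define the bundle consistently. Continuity across a non-tree seam still holds, because the tree transport is again an isometry. The price has two parts:
\begin{itemize}
\item Along a non-tree edge the fiber identification is the tree transport rather than $\mP^{(i,j)}$. That gluing map therefore no longer has differential $\mP^{(i,j)}$ unless the holonomy of the corresponding cycle is trivial.
\item Uniqueness holds only relative to the chosen bundle.
\end{itemize}
This is precisely the point the paper defers to Theorems \ref{theorem. triangle_trivial} and \ref{theorem. manifold}.
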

The edge tangent translations connect gluing boundaries in accordance to Theorem \ref{theorem. isometry} and \ref{theorem. global_metric}. 
However, when gluing along higher-order motifs, such as triangles and cycles, some offsets may occur when going round trips, so that gluing boundaries are not well aligned. 
In other words, although the glued manifold is connected, it is not yet continuous everywhere. 
To address this issue, we introduce the \emph{concept of holonomy}, describing how the tangent vector changes when traversing along a closed curve, and define a holonomy map to measure the changes.
We show that, when the holonomy map of triangles is trivial, the offset at the gluing boundaries is eliminated.
\begin{definition}[\textbf{Holonomy Map and Holonomy Loss}]
\label{def. holonomy map}
Let $\mathcal{Z}_1(\mathcal{G})$ denote the real vector space of 1-cycles on graph $\mathcal{G} = (\mathcal{V}, \mathcal{E})$
under symmetric difference. For any cycle $\mathcal{C} = (i_0, i_1, \dots, i_L = i_0)$, its \textbf{holonomy map} is defined as the composition of transport maps along the path,
\begin{align}
    \mH(\mathcal{C}) := \prod\nolimits_{\ell=0}^{L-1} \mP^{(i_\ell, i_{\ell+1})} \in \mathrm{GL}(M).
\end{align}
The collection $\tP:=\{\mP^{(i,j)}\}$ is said to be \textbf{trivial} if $\mH(\mathcal{C})$ is the identity map for $\forall \mathcal{C} \in \mathcal{Z}_1(\mathcal{G})$.
Given the set of all triangles $\mathcal{A}_{ijk}=((v_i, v_j), (v_j, v_k))$, the corresponding holonomy loss is formulated as 
\begin{align}
\label{eq. holonomy loss}
    \mathcal{L}_{holo}(\gG) = \frac{1}{|\mathcal{A} |} \sum\nolimits_{\mathcal{A}_{ijk}} \|\mP^{(k, i)}\mP^{(j,k)} \mP^{(i,j)} - \mI \|^2_F.
\end{align}
\end{definition}
\begin{theorem}[\textbf{Triangle Triviality, Appendix \ref{proof. triangle}}]
\label{theorem. triangle_trivial}
If every edge belongs to at least one triangle, and $\mH(\mathcal{T}) = \mI$ for all triangular cycles $\mathcal{T}$ in $\mathcal{G}$, then $\mH(\mathcal{C}) = \mI$ for all cycles $\mathcal{C} \in \mathcal{Z}_1(\mathcal{G})$. 
\end{theorem}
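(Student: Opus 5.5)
The plan is to reduce an arbitrary cycle to a product of triangular cycles. Since holonomy is a non-commutative product in $\mathrm{GL}(M)$, I will work with closed edge-walks up to combinatorial homotopy, not with symmetric differences in $\mathcal{Z}_1(\mathcal{G})$.

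\textbf{Step 1 (backtracking is trivial).} First check that $\mP^{(j,i)}=(\mP^{(i,j)})^{-1}$. By Definition~\ref{def. tangent edge translation}, $\mP^{(i,j)}$ is the matrix geometric mean $\mG_j^{-1}\#\mG_i$. The identity $(\mA\#\mathbf{B})^{-1}=\mA^{-1}\#\mathbf{B}^{-1}$ then gives $(\mP^{(i,j)})^{-1}=\mG_j\#\mG_i^{-1}=\mG_i^{-1}\#\mG_j=\mP^{(j,i)}$. In our setting the $\mG_i$ are diagonal, so this is immediate: $\mP^{(i,j)}=\diag\bigl(\|\vw^{(i)}_m\|/\|\vw^{(j)}_m\|\bigr)_m$. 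Consequently, inserting or deleting a back-and-forth step $i\to j\to i$ in a closed walk does not change its holonomy.

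\textbf{Step 2 (conjugation invariance).} For a triangle $\mathcal{T}$ based at $v$ and any path $\gamma$ from a base point $v_0$ to $v$, the lasso $\gamma\mathcal{T}\gamma^{-1}$ has holonomy $\mH(\gamma)^{-1}\mH(\mathcal{T})\mH(\gamma)=\mI$, using Step~1 for $\gamma^{-1}$. Changing the base point of a triangle, i.e.\ cyclically rotating it, likewise gives a conjugate of $\mI$, which is again $\mI$.

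\textbf{Step 3 (decomposition of cycles).} This is the main obstacle. I would show that every closed walk $\mathcal{C}$ based at $v_0$ is equivalent, modulo backtracking moves, to a product of lassos around triangles. Holonomy is multiplicative under concatenation, so Steps~1--2 would then give $\mH(\mathcal{C})=\mI$.

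The natural route is to view the clique 2-complex $X$ of $\mathcal{G}$, with vertices, edges, and triangles as 2-cells, and invoke van Kampen. The edge-path group $\pi_1(X,v_0)$ is the free group on non-tree edges modulo the normal closure of the triangle boundaries. If $\pi_1(X)$ is trivial, every closed walk is such a product of lassos.

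\textbf{The hypothesis needs strengthening.} The assumption ``every edge lies in a triangle'' does not by itself give simple connectivity. A triangulated annulus has every edge in a triangle, yet its core cycle is not a product of triangles; even its homology class is not spanned by triangle boundaries. So the argument has to add that the triangle boundaries generate the cycle space, or, more precisely for the non-abelian statement, that the clique complex is simply connected. I would make this explicit as the operative assumption, interpreting the theorem's condition as ``the triangles fill every cycle''.

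In the diagonal case the obstacle disappears altogether. Holonomy along any closed walk is $\diag\bigl(\prod_\ell \|\vw^{(i_\ell)}_m\|/\|\vw^{(i_{\ell+1})}_m\|\bigr)$, which telescopes to $\mI$. I would therefore present this telescoping argument as a direct proof for the paper's diagonal-metric setting. The general-metric statement would then follow from Steps~1--3 under the strengthened topological hypothesis.
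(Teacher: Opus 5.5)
Your proposal is correct, and it takes a different and sounder route than the paper. The paper's proof is one step. It asserts, citing Hatcher, that once every edge lies in a triangle the triangular cycles generate $\mathcal{Z}_1(\mathcal{G})$. It then says $\mH$ is multiplicative on $\mathcal{Z}_1(\mathcal{G})$ and trivial on these generators.

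You rightly reject both ingredients:
\begin{itemize}
\item \textbf{Generation fails.} Edge coverage by triangles does not make triangles span the cycle space. Your annulus shows this; a smaller instance is a $4$-cycle $v_1v_2v_3v_4$ with an extra vertex $u_i$ adjacent to $v_i$ and $v_{i+1}$. There the four triangles span a $4$-dimensional subspace of the $5$-dimensional cycle space and miss the square, since each ear edge lies in exactly one triangle.
\item \textbf{Multiplicativity is ill-defined.} A vector space under symmetric difference cannot carry a multiplicative map into the non-abelian group $\mathrm{GL}(M)$. So working with closed edge walks modulo backtracking and triangle moves, i.e.\ with $\pi_1$ of the clique complex, is the right framework.
\end{itemize}
Under your simple-connectivity hypothesis, Steps 1--3 give a correct proof.

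Your counterexample is purely topological, but the paper's own transports realize it. This confirms that the hypothesis really must be strengthened for general SPD metrics.
\begin{itemize}
\item \textbf{Triangles are trivial.} In the ear graph, set $\mG_{u_i}=\mG_{v_{i+1}}$ (indices mod $4$). Then $\mP^{(v_{i+1},u_i)}=\mI$. Also $\mP^{(u_i,v_i)}=\mP^{(v_{i+1},v_i)}=(\mP^{(v_i,v_{i+1})})^{-1}$ by your Step 1. Hence every triangle has holonomy $(\mP^{(v_i,v_{i+1})})^{-1}\mI\,\mP^{(v_i,v_{i+1})}=\mI$.
\item \textbf{The square is not.} Taking in addition $\mG_{v_1}=\mG_{v_4}=\mI$, the square has holonomy $\mG_{v_3}^{1/2}(\mG_{v_3}^{-1}\#\mG_{v_2})\mG_{v_2}^{-1/2}$. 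This equals $\mI$ only if the symmetric matrix $\mG_{v_3}^{-1}\#\mG_{v_2}$ equals $\mG_{v_3}^{-1/2}\mG_{v_2}^{1/2}$, i.e.\ only if $\mG_{v_2}$ and $\mG_{v_3}$ commute.
\end{itemize}
So for non-commuting metrics the statement as written is false, and no proof along the paper's lines can exist.

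Your telescoping observation gives a complete, hypothesis-free proof in the setting the paper actually uses. The adaptive orthogonal frame makes every $\mG_i$ diagonal, so $\mP^{(i,j)}=\mG_i^{1/2}\mG_j^{-1/2}$ and every closed walk has holonomy $\mI$. There the triangle assumption plays no role, and every summand of $\mathcal{L}_{\text{holo}}$ vanishes identically. Outside that setting, an extra topological hypothesis such as yours is genuinely required.
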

\textbf{Smoothing.} 
So far, the glued manifold has achieved $C^1$ continuity, but $C^2$ continuity is required to yield a smooth global metric and to eliminate ``fold'' that hinders knowledge transport along the manifold.
To bridge this gap, we visit the \emph{concept of Ricci curvature}, a kind of $C^2$ continuity on the manifold, which 
governs the rate of changes of the volume element along the geodesic. 
Nevertheless, calculating Ricci curvature is rather expensive \cite{petersenRiemannianGeometry2016, ollivier2007riccicurvaturemarkovchains}.
Instead, we propose an alternative of volume change ratio between two endpoints, which is shown to sufficiently determine whether the geodesic is ``convex" or ``concave".
\begin{theorem}[\textbf{Ricci Curvature Estimation, Appendix \ref{proof. curv}}]
\label{theorem. curv}
Given a graph $\mathcal{G} = (\mathcal{V}, \mathcal{E})$ and an edge $(i,j) \in \mathcal{E}$, let $\vz^{(i)}, \vz^{(j)} \in \mathcal{M}$ be the corresponding embedded points, and $\gamma: [0,1] \to \mathcal{M}$ be the unit-speed geodesic connecting them, i.e., $\gamma(0) = \vz^{(i)}$, $\gamma(1) = \vz^{(j)}$. The sign of the Ricci curvature along $\dot{\gamma}$ can be estimated by the ratio of metric determinants:
\begin{align}
    r(\vz^{(i)}, \vz^{(j)}) := \frac{\det \mG_i}{\det \mG_j} \approx 1 - \frac{1}{3} \Ric(\dot{\gamma}).
\end{align}
\end{theorem}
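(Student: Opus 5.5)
The plan is to work in Riemannian normal coordinates centered at $\vz^{(j)}$ (or equivalently at $\vz^{(i)}$) and use the classical second-order expansion of the metric and of the volume density along a geodesic. In normal coordinates $x$ based at a point $\vp$, the metric satisfies
\begin{align}
g_{ml}(x) = \delta_{ml} - \frac{1}{3} R_{mabl}\, x^a x^b + O(|x|^3),
\end{align}
so that, taking the trace through $\log\det$ and using $\log\det(\mI+\mA)=\operatorname{tr}\mA + O(\|\mA\|^2)$, we obtain
\begin{align}
\det \mG(x) = 1 - \frac{1}{3}\Ric_{ab}\, x^a x^b + O(|x|^3).
\end{align}
I will then set $x = \dot\gamma(0)\, t$ with $t=1$, which is legitimate because $\gamma$ is the geodesic joining the two endpoints and the exponential map sends straight rays to geodesics. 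This gives $\det \mG$ at $\vz^{(i)}$ relative to its value at the base point as $1-\frac{1}{3}\Ric(\dot\gamma,\dot\gamma)+O(\ell^3)$, where $\ell$ is the edge length.

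The next step connects this coordinate statement to the learned local metrics $\mG_i,\mG_j$ of Section~\ref{sec:LearnLocalGeo}. By Theorem~\ref{theorem. global_metric} there is a single global metric $\mG$ restricting to $\mG_i$ on $U_i$. By Theorem~\ref{theorem. isometry}, the edge tangent translation $\mP^{(i,j)}$ identifies the two tangent spaces as the transport along the edge. I intend to express both $\mG_i$ and $\mG_j$ in the common chart given by the normal coordinates at $\vz^{(j)}$, in which $\det \mG_j = 1$ after normalization. I then take the ratio $\det\mG_i/\det\mG_j$, which is chart-independent to leading order: both determinants scale by the same Jacobian squared when computed in the same chart, so the ratio is well defined. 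Substituting the expansion yields $r(\vz^{(i)},\vz^{(j)}) = 1-\frac13\Ric(\dot\gamma)+O(\ell^3)$. Here $\Ric(\dot\gamma)$ abbreviates $\Ric(\dot\gamma,\dot\gamma)$ for the unit-speed, edge-length-scaled velocity, and the ``$\approx$'' in the statement is exactly the discarded remainder. The sign claim follows immediately: $r<1$ indicates positive Ricci curvature (a convex, volume-contracting direction), and $r>1$ indicates negative curvature.

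The main obstacle is justifying that the learned diagonal metrics $\mG_i=\diag(\|\vw_m\|^2)$ really represent the same global metric in a single chart, rather than in two unrelated frames. Otherwise the determinant ratio would carry an arbitrary Jacobian factor. I would handle this by invoking the gluing: the metric compatibility $\mP^{(i,j)\top}\mG_j\mP^{(i,j)}=\mG_i$ from Theorem~\ref{theorem. isometry} fixes the change of frame along the edge. I would then argue that, after holonomy triviality (Theorem~\ref{theorem. triangle_trivial}), the frames are consistent, so the ratio is intrinsic up to the same second-order error. A secondary, more routine point is to control the $O(|x|^3)$ remainder by assuming bounded covariant derivatives of curvature and a short edge; this is in the same small-scale spirit as the $\varepsilon$-regime of Theorem~\ref{theorem. tangent scale}.
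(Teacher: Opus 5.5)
Your core computation is the paper's: the normal-coordinate expansion $\det\mG(x)=1-\frac13\Ric_{ab}x^ax^b+O(|x|^3)$, evaluated at the other endpoint. Centering at $\vz^{(j)}$ is in fact the choice that yields the stated sign. The paper's appendix centers at $\vz^{(i)}$, obtains $r\approx 1+\frac13\Ric(\dot\gamma)$, and then reverses the sign by a heuristic remark. That discrepancy is exactly where your argument has a genuine gap. Your claims that centering at $\vz^{(i)}$ is ``equivalent'', and that $\det\mG_i/\det\mG_j$ is chart-independent because both determinants ``scale by the same Jacobian squared'', are false. Under a chart change $y=\phi(x)$, the coordinate determinant at a point $q$ is divided by $(\det D\phi(q))^2$, with $D\phi$ evaluated at two \emph{different} points. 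So $r$ gets multiplied by $\bigl(\det D\phi(\vz^{(j)})/\det D\phi(\vz^{(i)})\bigr)^2$. Between the normal charts at $\vz^{(i)}$ and at $\vz^{(j)}$ this factor is $1-\frac23\Ric(\dot\gamma)+O(\ell^3)$, the same order as the signal. In a general chart it is arbitrary (choose $\phi$ with prescribed Jacobian determinant). Hence neither $r$ nor the sign of $r-1$ is intrinsic, and your computation establishes the formula only in a chart normal at $\vz^{(j)}$. Also, in that chart the displacement to $\vz^{(i)}$ is $-\dot\gamma(1)$, not $\dot\gamma(0)$; this is harmless only because $\Ric$ is quadratic and changes by $O(\ell^3)$ along the edge.

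Your proposed repair through the gluing theorems does not close the gap; it works against you. Taking determinants in $\mP^{(i,j)\top}\mG_j\mP^{(i,j)}=\mG_i$ gives $r=(\det\mP^{(i,j)})^2$. So metric compatibility only re-expresses $r$ as the squared Jacobian of a frame change. Now suppose you use trivial holonomy to make the frames consistent by transport, $F_j=\mP^{(i,j)}F_i$. Then $F_j^\top\mG_jF_j=F_i^\top\mG_iF_i$ on every edge, all metric matrices are congruent, and the ratio in those frames is identically $1$. Transported frames cannot see curvature; curvature enters $\det\mG$ only through a non-parallel coordinate frame. What the proof actually needs is an explicit modeling hypothesis: that the adaptive orthogonal frames $\{\vw^{(i)}_m\}$ and $\{\vw^{(j)}_m\}$ are, up to a common scalar, the coordinate frames $\{\partial_m\}$ of one fixed chart normal at $\vz^{(j)}$. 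Nothing in Theorems \ref{theorem. isometry}--\ref{theorem. triangle_trivial} supplies this, and the paper's own proof makes the analogous identification silently. State it as an assumption and drop the chart-independence claim. Finally, since $\gamma$ is unit-speed on $[0,1]$, the edge has length $\ell=1$. The remainder is therefore negligible only if the curvature and its derivatives are small at unit scale, not merely because the edge is ``short''.
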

Accordingly, 
the volume element $\sqrt{\det \mG_i}$ varies smoothly along the path of length $k$, implying that the Ricci curvature changes continuously along that path,
referred to as Log-Determinant $k$-order smoothness.
Thus, we can investigate the $k$-order smoothness with a scalar field of volume element, and formulate a Ricci curvature loss which encourages the glued manifold to be smooth.
\begin{definition}[\textbf{$k$-order Smoothness and Curvature Loss}]
\label{def. logdet_smooth}
Define $g_i = \frac{1}{2} \log \det \mG_i$ as a scalar field over $\gG$, representing the logarithmic volume density at node $v_i$. We say the manifold structure exhibits \textbf{log-determinant smoothness} if $\vg\in\sR^{|\gV|}$ minimizes the graph Dirichlet energy:    
$
\mathcal{E}_{\text{Dir}}[\vg] = \| \mL^k \vg \|^2,
$
where $\mL$ is the (normalized) Laplacian of $\gG$.
In light of computational efficiency in practice, we define the curvature loss function of $2$-order smoothness as follows,
\begin{align}
\label{eq. curvature loss}
        \mathcal{L}_{\text{Curv}}(\gG) = \frac{1}{|\mathcal{A} |} \sum\nolimits_{\mathcal{A}_{ijk}} | \log(r_{ij})- \log(r_{jk})|^2
\end{align}
\end{definition}
\paragraph{Geometric Scaling Law} 
Consequently, any graph datasets are merged into a unified, smooth Riemannian manifold,  allowing us to study knowledge transfer within the framework of differential geometry.
As the quantities of graphs increase, $(\mathcal{F}, \tG, \tP)$ approximates an ideal manifold, and thus we deduce a geometric scaling law that larger quantities of datasets improve model transferability with a smoother manifold, which is empirically validated in Sec. \ref{sec:experiment.results}.
\begin{theorem}[\textbf{Gluing into a Smooth Manifold, Appendix \ref{proof. manifold}}]
\label{theorem. manifold}
    For any  graph dataset $\sG$, if $\tG$ is log-determinant $\infty$-order smooth, and $\tP$ is trivial with induced metric-preserving diffeomorphism $\phi$, then $(\mathcal{F}, \tG, \tP)$ glues to a smooth Riemannian manifold $(\mathcal{F}, \tG)$, where $\mathcal{F}={(\bigcup_\phi)}_{i=1}^N U_i$.
\end{theorem}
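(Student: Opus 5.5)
The plan is the standard gluing construction for manifolds, adapted to the discrete data $(\mathcal{F},\tG,\tP)$. Each point has a chart $U_i$ with local metric $\mG_i$, and each edge has a translation $\mP^{(i,j)}$ inducing the metric-preserving diffeomorphism $\phi^{(i,j)}$ of Theorem \ref{theorem. isometry}. I would show four things in order: the quotient space $\mathcal{F}={(\bigcup_\phi)}_{i=1}^N U_i$ is a well-defined topological manifold, the charts form a smooth atlas, the metrics descend to a single global tensor, and that tensor is smooth.

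\textbf{Step 1 (well-defined quotient).} Define an equivalence relation on $\bigsqcup_i U_i$ by $x\sim \phi^{(i,j)}(x)$ for $(i,j)\in\mathcal{E}$, and take its transitive closure. This is a genuine equivalence relation with no self-identifications of a single chart exactly when the composite of the $\phi$'s around every closed walk is the identity. That is the statement that $\mH(\mathcal{C})=\mI$ for all $\mathcal{C}\in\mathcal{Z}_1(\gG)$, which is the triviality hypothesis on $\tP$. If only triangle triviality were assumed, Theorem \ref{theorem. triangle_trivial} would upgrade it. So the transition maps satisfy the cocycle condition $\phi^{(j,k)}\circ\phi^{(i,j)}=\phi^{(i,k)}$ on overlaps. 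For non-adjacent pairs, I would define transitions by composing along any path; these are path-independent by triviality. Hausdorffness and second countability follow because the index set is finite and the $\phi$'s are diffeomorphisms between open boundary pieces. Since the transitions are diffeomorphisms, the atlas $\{U_i\}$ is smooth.

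\textbf{Step 2 (global metric).} By Theorem \ref{theorem. isometry}, $\mG_j(\mP^{(i,j)}\vu,\mP^{(i,j)}\vv)=\mG_i(\vu,\vv)$, so $\phi^{(i,j)*}\mG_j=\mG_i$ on overlaps. Theorem \ref{theorem. global_metric} then yields a unique continuous $\mG$ with $\mG|_{U_i}=\mG_i$. With the cocycle condition from Step 1, this is consistent on multiple overlaps.

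\textbf{Step 3 (smoothness, the main obstacle).} Continuity of $\mG$ does not give $C^\infty$. I would use the $\infty$-order log-determinant smoothness to control the derivatives. Minimizing $\|\mL^k\vg\|^2$ for every $k$ forces $\vg$ into the kernel of $\mL$, so $g_i=\tfrac12\log\det\mG_i$ is constant on the connected components of $\gG$. By Theorem \ref{theorem. curv}, $r\equiv 1$, which gives vanishing Ricci estimates and no jumps in any order of the volume density across gluing boundaries. Combined with the diagonal form $\mG_i=\diag(\|\vw_1\|^2,\dots)$ and the fact that the transitions are isometries, this makes all higher transverse derivatives match across $\partial U_i$. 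The delicate point is that log-determinant smoothness only controls the determinant, not the full tensor. I would handle it by arguing that the isometric transitions already match the remaining components, so derivatives of all orders agree on both sides. If that argument fails, the fallback is to smooth with a partition of unity subordinate to $\{U_i\}$: take $\tilde{\mG}=\sum_i\rho_i\,\mG_i$, which is smooth and coincides with $\mG$ because all the $\mG_i$ agree on overlaps. That gives the smooth Riemannian manifold $(\mathcal{F},\tG)$.
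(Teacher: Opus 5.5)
Your three steps follow the paper's proof closely. Triviality of $\tP$ gives path-independent compositions and the cocycle condition. Theorem~\ref{theorem. global_metric} supplies the continuous metric. Then $\infty$-order log-determinant smoothness forces $\vg\in\ker\mL$, after which the isometric transitions are said to make $\tG$ smooth.

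There is a genuine gap in Step 3, and it is the same leap the paper's own step (3) makes. Constancy of $g_i=\tfrac12\log\det\mG_i$ constrains only one scalar function. The identity $\mP^{(i,j)\top}\mG_j\mP^{(i,j)}=\mG_i$ is zeroth-order: it relates the tangent spaces at the single pair of points $\vz^{(i)},\vz^{(j)}$. Nothing in the hypotheses controls first or higher transverse derivatives of the other components of the tensor across a gluing seam. So ``all higher transverse derivatives match'' is asserted rather than derived. Likewise, $r\equiv 1$ via Theorem~\ref{theorem. curv} only says, heuristically, that $\Ric(\dot\gamma)\approx 0$ along edges, and that does not fill the gap.

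Your fallback does not close the gap either. You assume $\mG_i$ agrees with $\mG$ on the support of $\rho_i$. Then $\tilde{\mG}=\sum_i\rho_i\mG_i=\mG$ identically, so $\tilde{\mG}$ has exactly the regularity of $\mG$. It is smooth precisely when each $\mG_i$ is already smooth on $U_i$, which is the point in question. A partition of unity can produce \emph{some} smooth metric on a smooth manifold, but it cannot upgrade the regularity of a prescribed one. Conversely, if the $\mG_i$ really were smooth fields agreeing on open overlaps, $\mG$ would be smooth by locality and the log-determinant hypothesis would do no work. That is a sign the fallback assumes what is to be proved. In the boundary-gluing setting of Lemma~\ref{lemma. glue manifolds} there are no open overlaps at all, and any genuine smoothing replaces $\tG$ by a different metric.

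Two smaller points in Step 1. First, finiteness of the index set does not give Hausdorffness: the line with two origins is two charts glued by the identity on $\sR\setminus\{0\}$. Second, $\mH(\mathcal{C})=\mI$ is a statement about products of the linear maps $\mP^{(i,j)}$, i.e.\ linearizations at the gluing points. A diffeomorphism with identity differential at a point need not be the identity. So the cocycle condition for the $\phi^{(i,j)}$ follows only if you read the hypothesis on $\phi$ as already including their consistency, as the paper implicitly does.
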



\begin{figure}[t]
\centering
\includegraphics[width=1\linewidth]{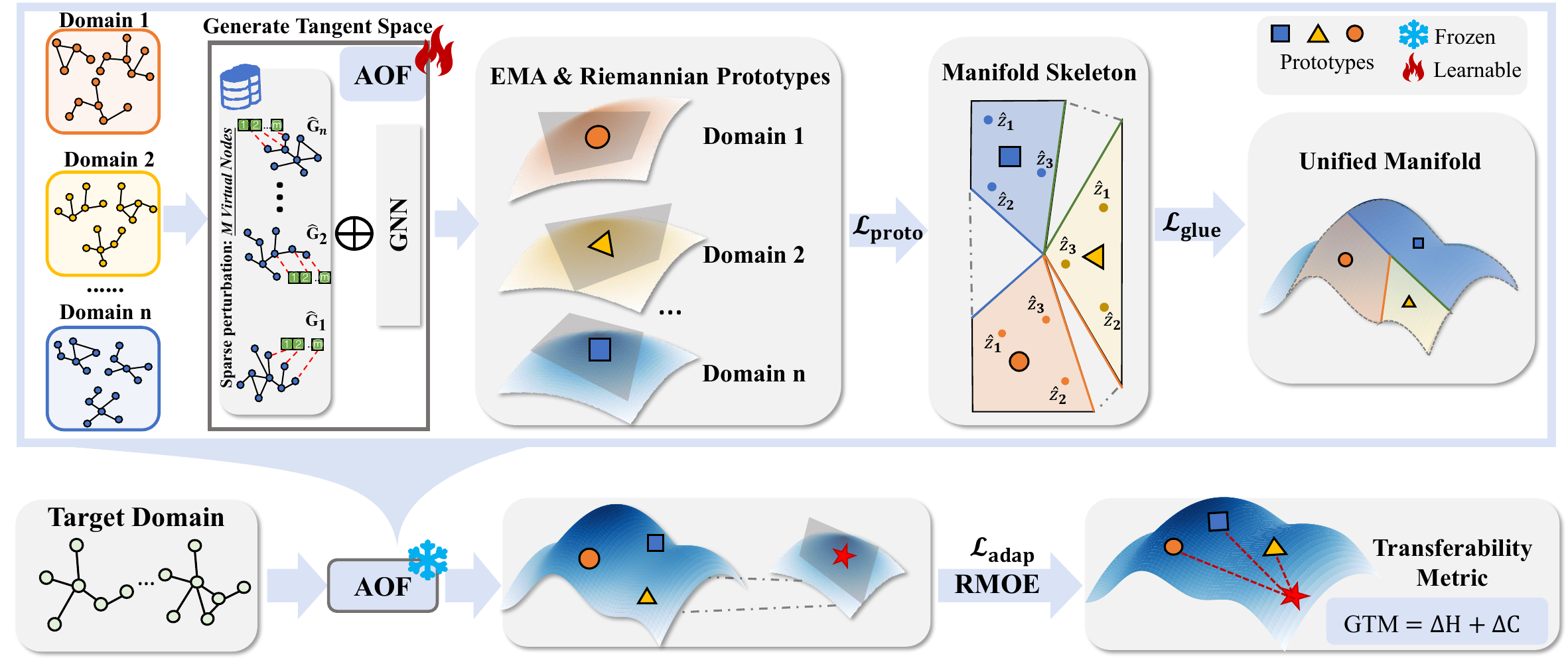}
\caption{An Illustration of \textsc{GraphGlue} Framework.}
\label{fig1}
\end{figure} 

\section{GraphGlue: Geometric Multi-domain Graph Pre-training}
Building on our theory of neural manifold gluing, we present a novel pretraining-adaptation framework, \textbf{\textsc{GraphGlue}}, as illustrated in Fig. \ref{fig1}. 
The pre-training first learns the local geometry and then glues these local pieces together as introduced in Sec. \ref{sec:theory}. 
Moreover, before gluing, an \emph{Exponential Moving Average (EMA) prototyping} is proposed to distinguish domain semantics through different locations on the manifold, while enabling batched pre-training to efficiently handle large-scale graphs.
Then, we leverage prompt adaptation and a Riemannian Mixture-of-Experts (MoE), while  gluing the target domain to the pre-trained manifold for geometric consistency. A \emph{Geometric Transfer Metric (GTM)} is naturally induced to measure the transfer difficulty.
The overall procedure is summarized in Algorithm \ref{alg. training}.



\subsection{Pre-training with EMA Prototyping}
For multi-domain source graphs $\sS=\{\mathcal{S}_1,...,\mathcal{S}_K\}$, we associate each graph with a Riemannian prototype, which is  given as a tuple of global location and Riemannian metrics, 
$(\vz^{\mathcal{S}_k}, \log \mG^{\mathcal{S}_k})= \left(\frac{1}{|\mathcal{S}_k|} \sum_{\gG \in \mathcal{S}_k} \vz^\gG, \frac{1}{|\mathcal{S}_k|} \sum_{\gG \in \mathcal{S}_k} \log \mG(\vz^\gG) \right )$.
The challenges of Riemannian prototyping are dual: computation efficiency for large-scale graphs, and semantics distinction across different domains.
To address the first challenge, we develop an EMA for Riemannian prototyping. For each batch, we perform the following updating rules, 
\begin{align}
\label{eq. ema}
    \vz^{\mathcal{S}_k} &\leftarrow \beta\vz^{\mathcal{S}_k} + (1 - \beta) \frac{1}{|\mathcal{B}_k|} \sum\nolimits_{\gG \in \mathcal{B}_k} \vz^\gG \\
    \log \mG^{\mathcal{S}_k} &\leftarrow \beta\log \mG^{\mathcal{S}_k} + (1 - \beta) \frac{1}{|\mathcal{B}_k|} \sum\nolimits_{\gG \in \mathcal{B}_k} \log \mG(\vz^\gG),
\end{align}
where $\beta \in (0,1)$ is a momentum coefficient, and $\log$ means matrix logarithm. 
This EMA update ensures that $(\vz^{\mathcal{S}_k}, \log \mG^{\mathcal{S}_k})$ gradually converge to the stable average value throughout pre-training \cite{Daniel2024exponential,izmailov2019averagingweightsleadswider}. 
Note that the metric matrix belongs to a symmetric positive-definite manifold, and we utilize the log update, different from the traditional ones.
To address the second challenge, we incorporate a sample-prototype contrastive loss that encourages graph prototypes to be well separated on the manifold, distinguishing domain semantics.
\begin{align}
    \mathcal{L}_{\text{proto}}(\gG)=-\frac{1}{K}\sum_{k=1}^K\log \frac{\exp(\text{sim}(\vz^\gG, \vz^{\mathcal{S}_k})/\tau)}{\sum_{j=1}^K \exp(\text{sim}(\vz^\gG, \vz^{\mathcal{S}_j})/\tau)}.
\end{align}
\subsection{Consistent Adaptation \& Quantifiable Transferability}
\textsc{GraphGlue} employs prompt adaptation and  Riemannian MoE to generate representations, while we emphasize geometric consistency between the pre-trained manifold and target graphs by ``gluing''.
To be specific, for a target sample $\gG^\mathcal{T}$, we \textbf{first} infer the global coordinates and local metric through prompting.
With the coordinates $\vz^{\mathcal{T}}$, local metric $\mG_\vz$  and basis vectors of the tangent space $\mW^{\mathcal{T}} = [\vw_1^{\mathcal{T}}, \dots, \vw_M^{\mathcal{T}}]$ given by the pre-trained model, we introduce a learnable prompt matrix $\mQ \in \mathbb{R}^{d \times d}$.
The global coordinates is adapted as $\vz^{\text{adapt}} = \mQ \vz^{\mathcal{T}}$.
Note that the metric adaptation is challenging owing to the orthogonal requirement of basis vectors. Thus, instead of prompting the pre-trained  local metric, we apply the prompt matrix $\mQ$ to $\mW^{\mathcal{T}}$, and the adapted local metric is derived as $
    \mG^{\text{adapt}} = \text{diag}\left( \|\mQ \vw_1^{\mathcal{T}}\|^2, \dots, \|\mQ \vw_M^{\mathcal{T}}\|^2 \right)
$, where $\vw^{\mathcal{T}}$ are the basis vectors undergoing the proposed adaptive orthogonal frame.
\textbf{Second}, to ensure consistency, we glue the target sample to the pre-trained Riemannian manifold $\mathcal{F}$, where Riemannian prototypes are treated as the anchors to align the target. In particular, we construct a transfer graph $\gG_0$ by connecting the target to its $k$-nearest prototypes, and apply $\mathcal{L}_{\text{holo}}(\gG_0)$ and $\mathcal{L}_{\text{curv}}(\gG_0)$ proposed in Sec. \ref{sec:theory}, penalizing non-trivial holonomy and abrupt volume changes, respectively.
\textbf{Third}, we present a Riemmanian MoE where each Riemannian prototype $(\vz^{\mathcal{S}_k}, \log \mG^{\mathcal{S}_k})$ serves as an expert and its weight is given by a gating function $\beta_k = \vg_k(\vz^{\text{adapt}}, \mG^{\text{adapt}})$.
This MoE generates  
$
    \log \mG^{\text{align}} = \sum_{k=1}^{K} \beta_k \log \mG^{\mathcal{S}_k}.
$
summarized from the experts.
Accordingly, we obtain the final representation 
$\vz_{\text{task}} = \left[ \vz^{\mathcal{T}}; \, \log \mG^{\text{adapt}}; \, \log \mG^{\text{align}} \right]$,
where $\vz_{\text{task}}\in \sR^{d+2M}$ since $\log \mG^{\text{adapt}}, \, \log \mG^{\text{align}}$ are both diagonal matrix that can be vectorized.
The overall adaptation loss is given as 
\begin{align}
    \mathcal{L}_{\text{adap}} = \mathcal{L}_{\text{task}}(\vz_{\text{task}}; \vy_{\text{task}}) + \lambda\mathcal{L}_{\text{glue}}, \quad \ \ 
      \mathcal{L}_{\text{glue}} = \mathcal{L}_{\text{holo}}(\gG_0) + \mathcal{L}_{\text{curv}}(\gG_0),
\end{align}
where $\lambda$ balances task-specific learning with consistency, and $\vy_{\text{task}}$ is the label of downstream task.
\paragraph{On Transferability}
Within the framework of differential geometry, we are able to systematically analyze knowledge transfer across different domains, and transfer effort of \textsc{GraphGlue} 
is naturally measured by the geometric compatibility.
We introduce \emph{Geometric Transfer Metric}  (GTM) which is defined as the minimal geometric deformation required to merge the target $\gG^{\gT}$ into the pre-trained manifold  $\mathcal{F}$ without disrupting its learned local geometry. GTM is computed along with the adaptation and decomposes into two \emph{interpretable} components as follows, 
\begin{align}
\text{GTM}(\gG^{\mathcal{T}}; \mathcal{S}) = \Delta H + \Delta C, \quad \ \ \Delta H = \mathcal{L}_{\text{holo}}(\gG_0), \quad \ \ \Delta C = \mathcal{L}_{\text{curv}}(\gG_0).
\end{align}
\begin{enumerate}[leftmargin=0.5cm]
    \item \textbf{Holonomy disagreement $\Delta H$.} It measures how the holonomy map deviates from identity along paths connecting the target to its nearest prototype, 
  interpreted as the ``twisting'' induced by $\gG^{\gT}$.
    \item \textbf{Curvature disagreement $\Delta C$.} It is computed as the discrepancy between the volume element $\sqrt{\operatorname{det} \mG_i}$, 
    indicating the dismatch with respect to Ricci curvature according to Theorem \ref{theorem. curv}.
    The natural interpretation is given as the ``bending'' or abrupt change in local volume.
\end{enumerate}
Accordingly, a low GTM  means that the target is seamlessly integrated $\mathcal{F}$ with trivial deformation, implying high transferability; in contrast, a high value shows that the target is geometrically alien, thus requiring significant effort to fit the geometry of $\mathcal{F}$.
Different from similarity measures between source and target domains \cite{wang2024gft}, GTM examines the geometric consistency from \textsc{GraphGlue} itself, and provides an interpretable assessment of transfer difficulty.
\paragraph{Further Insight}
The generalization error is related to the smoothness of the model objective \cite{bartlett2017spectral, scaman2018lipschitz}.
In fact, \textsc{GraphGlue} controls the smoothness by inducing a smooth global metric. 
Specifically, $\mathcal{L}_\text{holo}$ guarantees the topological continuity of gluing boundaries, while $\mathcal{L}_\text{curv}$ achieves $k$-order smooth by log-determinant smoothness in Definition \ref{def. logdet_smooth}, similar to \cite{czarnecki2017sobolev}.
The complexity analysis is provided in Appendix \ref{sec:complexity1}, \ref{sec:complexity2}.

\section{Experiments}
We conduct experiments on six representative domains to evaluate cross-domain transfer learning performance.
Also, we examine the transferability measure (GTM),  geometric scaling law, the effect of incorporating graphs of distinct semantics, and the geometric interpretation.
Ablation study, hyperparameter sensitivity and performance on heterophilic graphs are in Appendix \ref{sec:more ablation}, \ref{sec:sensitive}, \ref{sec:hetero}.
\subsection{Experimental Setups}
\paragraph{Datasets \& Baselines}
We carefully select 6 representative benchmark datasets, covering various domains: 
an academic citation network \texttt{Arxiv}, 
a product co-purchase graph \texttt{Computers}, 
a social network \texttt{Reddit}, 
a knowledge graph \texttt{FB15k\_237}, 
and benchmarks on bioinformatics \texttt{PROTEINS} and chemoinformatics \texttt{HIV}. 
We compare \textsc{GraphGlue} against baselines from 3 main categories: (1) Supervised GNNs: GCN \cite{kipf2017semisupervised}, GraphSAGE \cite{hamilton2017inductive}, and GIN \cite{xu2018how}. 
(2) Self-Supervised GNNs: 
DGI \cite{veličković2018deep}, GraphMAE \cite{hou2022graphmae}, and GCC \cite{qiu2020gcc}. 
(3) Graph Foundation Models: 
PRODIG \cite{huang2023prodigy}, GFT \cite{wang2024gft}, RAGraph \cite{jiang2024ragraph}, SAMGPT \cite{yu2025samgpt}, GCOPE \cite{zhao2024all}, and MDGFM \cite{wang2025multi}. 
Detailed descriptions are specified in Appendix \ref{app:exp_setup}.
\paragraph{Evaluation Protocol}
Our evaluation adopts a leave-one-out cross-domain setup, where models are pre-trained on five source datasets and fine-tuned on a single held-out target dataset. We use a few-shot fine-tuning setting, leveraging $k$ labeled samples per class ($k \in \{1, 5\}$) from the target task for adaptation. The remaining target data is randomly split into 10\% for validation and 90\% for testing.  We evaluate performance on three tasks: node/edge classification measured by ACC and graph classification measured by AUC. All reported results are the average of 10 independent runs. 
\subsection{Results and Discussion} \label{sec:experiment.results}
\paragraph{Main Results on Cross-domain Transfer Learning}
As shown in Table \ref{tab:main_results}, the empirical results demonstrate the superior effectiveness of \textsc{GraphGlue} in challenging few-shot scenarios. For instance, in the  1-shot setting, it outperforms the strongest baselines on \texttt{Computers} and \texttt{Reddit} by significant margins of 4.9\% and 2.3\%, respectively. This strong performance is often maintained as more data becomes available. In the 5-shot setting on the \texttt{Reddit} dataset, \textsc{GraphGlue} achieves 85.0\% ACC, exceeding the runner-up by 4.6\%. 
These results suggest that the geometric construction of \textsc{GraphGlue} enhances the model performance, and we will demonstrate additional benefits of the constructed smooth manifold in the following parts.

                 
\begin{table}[t]
  \centering
  \caption{Performance of cross-domain transfer on various downstream tasks, reported as mean $\pm$ std over 10 runs. The highest result is \textbf{bolded}, and the runner-up is \underline{underlined}.}
  \setlength{\tabcolsep}{4pt}
  \resizebox{\textwidth}{!}{
    \begin{tabular}{ccccccccccc}
    \toprule
    \multirow{4}{*}{Model} & \multicolumn{6}{c}{Node Classification}       & \multicolumn{2}{c}{Link Classification} & \multicolumn{2}{c}{Graph Classification} \\
    \cmidrule(lr){2-7}  \cmidrule(lr){8-9}    \cmidrule(lr){10-11}
    & \multicolumn{2}{c}{\texttt{Arxiv}} & \multicolumn{2}{c}{\texttt{Computers}} & \multicolumn{2}{c}{\texttt{Reddit}} & \multicolumn{2}{c}{\texttt{FB15k\_237}} & \multicolumn{2}{c}{\texttt{PROTEINS}} \\
    \cmidrule(lr){2-3} \cmidrule(lr){4-5} \cmidrule(lr){6-7} \cmidrule(lr){8-9} \cmidrule(lr){10-11}   
    & 1-shot & 5-shot & 1-shot & 5-shot & 1-shot & 5-shot & 1-shot & 5-shot & 1-shot & 5-shot \\
    \midrule
    GCN   & $12.6\scriptstyle{\pm1.7}$ & $27.6\scriptstyle{\pm2.1}$ & $33.8\scriptstyle{\pm3.8}$ & $65.7\scriptstyle{\pm4.2}$ & $11.1\scriptstyle{\pm2.1}$ & $28.3\scriptstyle{\pm1.0}$ & $32.1\scriptstyle{\pm2.3}$ & $52.4\scriptstyle{\pm1.8}$ & $50.1\scriptstyle{\pm13.0}$ & $55.0\scriptstyle{\pm9.9}$ \\
    GraphSAGE & $14.6\scriptstyle{\pm3.7}$ & $26.1\scriptstyle{\pm2.2}$ & $35.4\scriptstyle{\pm8.2}$ & $66.7\scriptstyle{\pm4.4}$ & $14.6\scriptstyle{\pm2.3}$ & $22.2\scriptstyle{\pm1.1}$ & $35.7\scriptstyle{\pm2.1}$ & $58.9\scriptstyle{\pm1.5}$ & $58.9\scriptstyle{\pm2.7}$ & $60.4\scriptstyle{\pm1.3}$ \\
    GIN   & $11.2\scriptstyle{\pm2.0}$ & $26.0\scriptstyle{\pm2.4}$ & $44.7\scriptstyle{\pm6.0}$ & $69.5\scriptstyle{\pm3.5}$ & $18.5\scriptstyle{\pm1.8}$ & $29.0\scriptstyle{\pm1.6}$ & $38.2\scriptstyle{\pm2.5}$ & $63.7\scriptstyle{\pm1.7}$ & $54.2\scriptstyle{\pm13.5}$ & $58.8\scriptstyle{\pm5.0}$ \\
    \midrule
    GCC   & $12.6\scriptstyle{\pm2.0}$ & $26.8\scriptstyle{\pm2.1}$ & $34.8\scriptstyle{\pm6.1}$ & $62.6\scriptstyle{\pm3.1}$ & $54.7\scriptstyle{\pm5.6}$ & $65.2\scriptstyle{\pm1.5}$ & $47.8\scriptstyle{\pm1.9}$ & $73.6\scriptstyle{\pm1.2}$ & $59.2\scriptstyle{\pm7.9}$ & $64.2\scriptstyle{\pm3.0}$ \\
    DGI   & $13.3\scriptstyle{\pm3.3}$ & $27.1\scriptstyle{\pm2.3}$ & $35.2\scriptstyle{\pm7.5}$ & $61.0\scriptstyle{\pm3.2}$ & $60.0\scriptstyle{\pm4.8}$ & $62.7\scriptstyle{\pm2.2}$ & $42.5\scriptstyle{\pm2.0}$ & $68.3\scriptstyle{\pm1.4}$ & $53.1\scriptstyle{\pm8.4}$ & $53.3\scriptstyle{\pm6.2}$ \\
    GraphMAE & $12.6\scriptstyle{\pm1.7}$ & $27.6\scriptstyle{\pm2.1}$ & $33.8\scriptstyle{\pm3.8}$ & $65.7\scriptstyle{\pm4.2}$ & $11.1\scriptstyle{\pm2.1}$ & $28.3\scriptstyle{\pm1.0}$ & $51.3\scriptstyle{\pm1.8}$ & $77.2\scriptstyle{\pm1.0}$ & $\mathbf{60.1\scriptstyle{\pm13.0}}$ & \underline{$65.0\scriptstyle{\pm9.9}$} \\
    \midrule
    PRODIGY & \underline{$28.4\scriptstyle{\pm2.2}$} & $33.6\scriptstyle{\pm2.8}$ & $45.3\scriptstyle{\pm4.1}$ & $52.7\scriptstyle{\pm3.6}$ & $35.6\scriptstyle{\pm3.2}$ & $42.3\scriptstyle{\pm2.9}$ & $53.5\scriptstyle{\pm1.0}$ & $72.1\scriptstyle{\pm6.9}$ & $48.9\scriptstyle{\pm5.4}$ & $55.2\scriptstyle{\pm4.7}$ \\
    GFT   & $26.5\scriptstyle{\pm2.4}$ & $36.7\scriptstyle{\pm1.9}$ & \underline{$54.6\scriptstyle{\pm4.0}$} & $69.1\scriptstyle{\pm3.5}$ & $58.8\scriptstyle{\pm2.5}$ & $66.2\scriptstyle{\pm1.4}$ & $58.0\scriptstyle{\pm1.3}$ & $79.1\scriptstyle{\pm1.6}$ & $55.4\scriptstyle{\pm5.8}$ & $62.1\scriptstyle{\pm3.5}$ \\
    RAGraph & $18.7\scriptstyle{\pm2.5}$ & $32.3\scriptstyle{\pm1.7}$ & $46.2\scriptstyle{\pm4.3}$ & $62.3\scriptstyle{\pm3.7}$ & $52.5\scriptstyle{\pm3.4}$ & $63.0\scriptstyle{\pm1.3}$ & $52.1\scriptstyle{\pm3.0}$ & $64.5\scriptstyle{\pm2.5}$ & $51.4\scriptstyle{\pm5.1}$ & $58.6\scriptstyle{\pm2.8}$ \\
    SAMGPT & $24.1\scriptstyle{\pm3.8}$ & $34.4\scriptstyle{\pm2.2}$ & $47.6\scriptstyle{\pm7.4}$ & $60.8\scriptstyle{\pm3.6}$ & $62.8\scriptstyle{\pm4.2}$ & $75.1\scriptstyle{\pm1.6}$ & $57.4\scriptstyle{\pm2.4}$ & $77.6\scriptstyle{\pm2.7}$ & $52.4\scriptstyle{\pm3.1}$ & $59.1\scriptstyle{\pm2.6}$ \\
    GCOPE & $26.5\scriptstyle{\pm5.5}$ & $\mathbf{39.1\scriptstyle{\pm1.9}}$ & $54.5\scriptstyle{\pm9.1}$ & \underline{$72.2\scriptstyle{\pm2.8}$} & $62.7\scriptstyle{\pm4.5}$ & \underline{$80.4\scriptstyle{\pm0.7}$} & \underline{$58.2\scriptstyle{\pm2.6}$} & \underline{$79.3\scriptstyle{\pm2.2}$} & $55.1\scriptstyle{\pm3.5}$ & $64.8\scriptstyle{\pm2.4}$ \\
    MDGFM & $26.0\scriptstyle{\pm2.4}$ & $32.2\scriptstyle{\pm1.7}$ & $46.6\scriptstyle{\pm8.4}$ & $64.0\scriptstyle{\pm5.3}$ & \underline{$64.8\scriptstyle{\pm3.3}$} & $76.5\scriptstyle{\pm1.7}$ & $56.1\scriptstyle{\pm1.6}$ & $77.6\scriptstyle{\pm2.0}$ & $53.4\scriptstyle{\pm5.3}$ & $57.7\scriptstyle{\pm3.4}$ \\
    \midrule
    \textsc{GraphGlue} & $\mathbf{28.8\scriptstyle{\pm5.2}}$ & \underline{$37.0\scriptstyle{\pm2.3}$} & $\mathbf{59.5\scriptstyle{\pm7.0}}$ & $\mathbf{73.2\scriptstyle{\pm0.7}}$ & $\mathbf{67.1\scriptstyle{\pm3.3}}$ & $\mathbf{85.0\scriptstyle{\pm1.1}}$ & $\mathbf{59.7\scriptstyle{\pm5.2}}$ & $\mathbf{81.5\scriptstyle{\pm2.3}}$ & \underline{$59.8\scriptstyle{\pm4.8}$} & $\mathbf{65.3\scriptstyle{\pm2.4}}$ \\
    \bottomrule
    \end{tabular}
    }
   \label{tab:main_results}
\end{table}

Ablation study on the effectiveness of proposed  $\mathcal{L}_\text{curv}$ and $\mathcal{L}_\text{holo}$ are provided in Appendix \ref{app:additional_results}, showing that both gluing via holonomy and smoothing via Ricci curvature are important to downstream tasks.

\paragraph{On Transferability Measure}
This part shows how the proposed measure of GMT aligns with the transfer effort of the pre-trained model.
To this end, 
we pre-train the model in \texttt{Arxiv}, \texttt{Reddit}, \texttt{FB15k\_237}, \texttt{PROTEINS}, and \texttt{HIV} datasets, then conduct transfer settings on \texttt{Computers} with $2000$ epochs.
In this case,  holonomy loss vanishes rapidly during training, and thus we investigate the curvature loss in Figure \ref{fig:quantify_results},
where $x$-axis is the training epoch.
We plot the test task loss of cross-entropy for the classification task on the $y$-axis on the left.
In the top of Figure \ref{fig:quantify_results}, we find that, as curvature loss decreases and converges, the test task loss exhibits the same pattern, and it suggests that GMT measures the effort of training the pre-trained model in transfer setting.
Moreover, at the bottom of Figure 
\ref{fig:quantify_results}, it shows another feature of curvature loss that the convergence of its oscillation amplitude implies the convergence of the test task loss, which meets the theory in \cite{keskar2017on, czarnecki2017sobolev}.
\begin{figure}[t]
    \centering
    \begin{subfigure}[b]{0.495\textwidth}
        \centering
        \includegraphics[width=\linewidth]{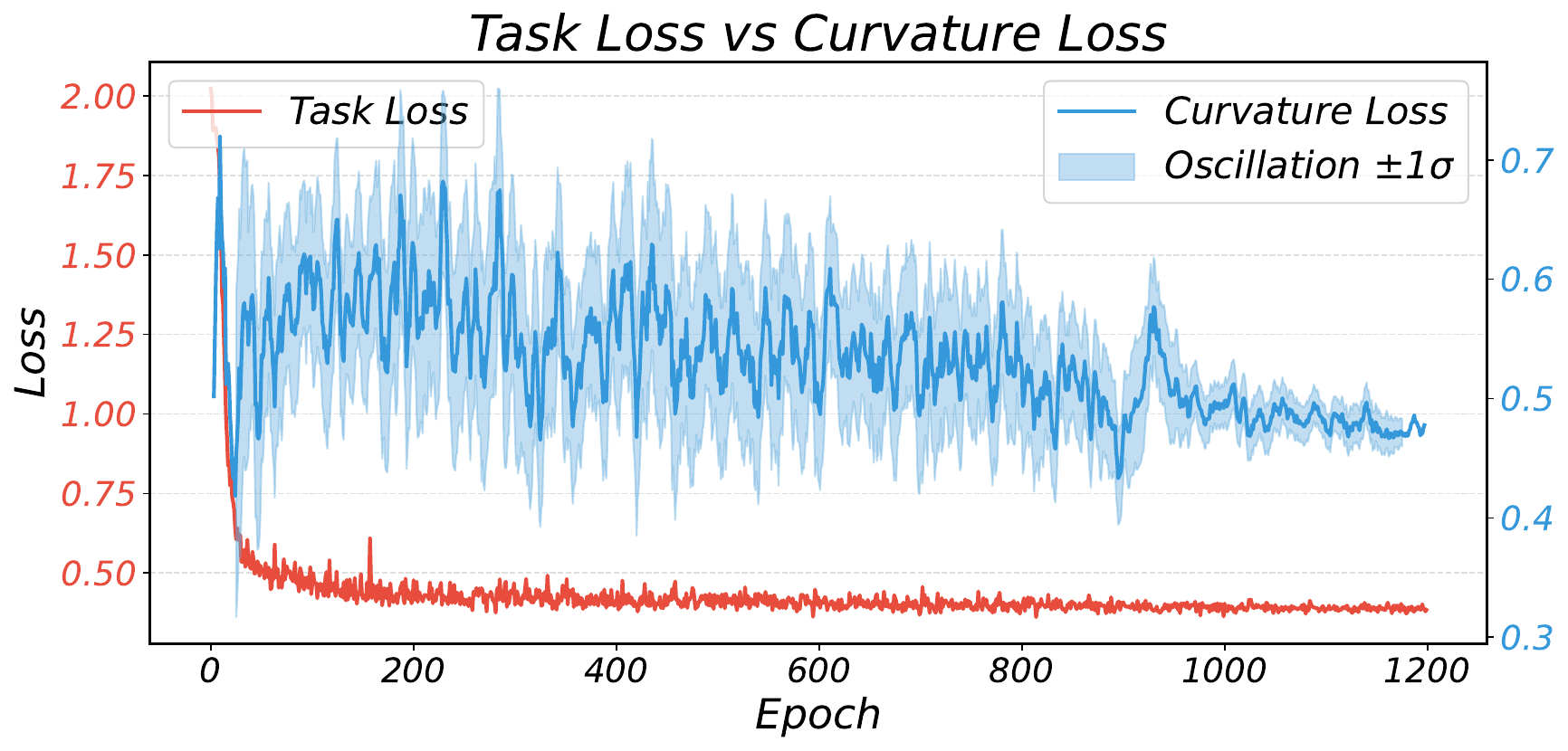}
        \label{fig:computers_quantify}
    \end{subfigure}
    \begin{subfigure}[b]{0.495\textwidth}
        \centering
        \includegraphics[width=\linewidth]{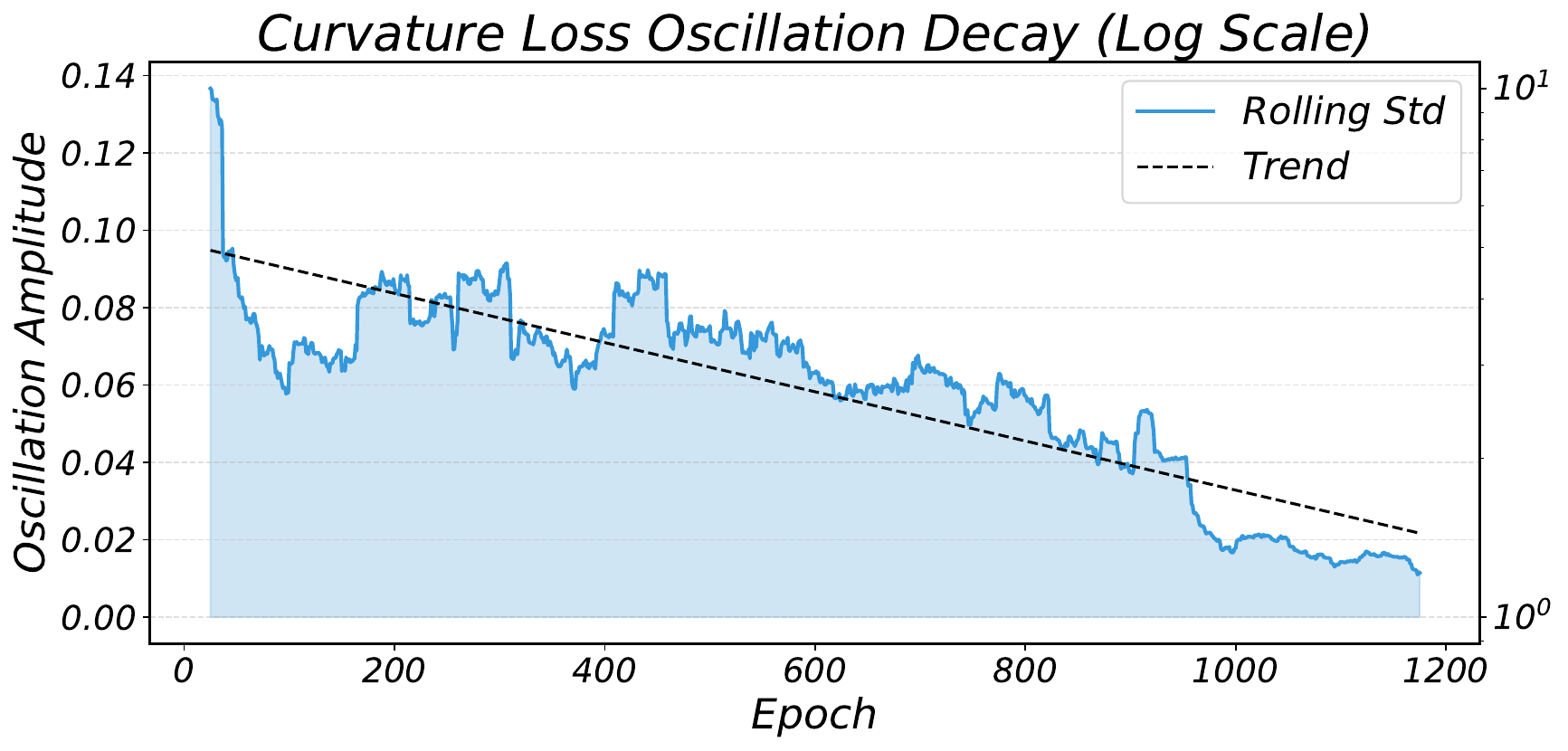}
        \label{fig:computers_oscillation}
    \end{subfigure}
    \vspace{-0.8cm}
    \caption{GTM vs Test Task Loss.}
    \label{fig:quantify_results}
    \vspace{-0.3cm}
\end{figure}

\paragraph{Case Study}
\begin{wrapfigure}{r}{0.4\textwidth}
    \centering
    \includegraphics[width=\linewidth]{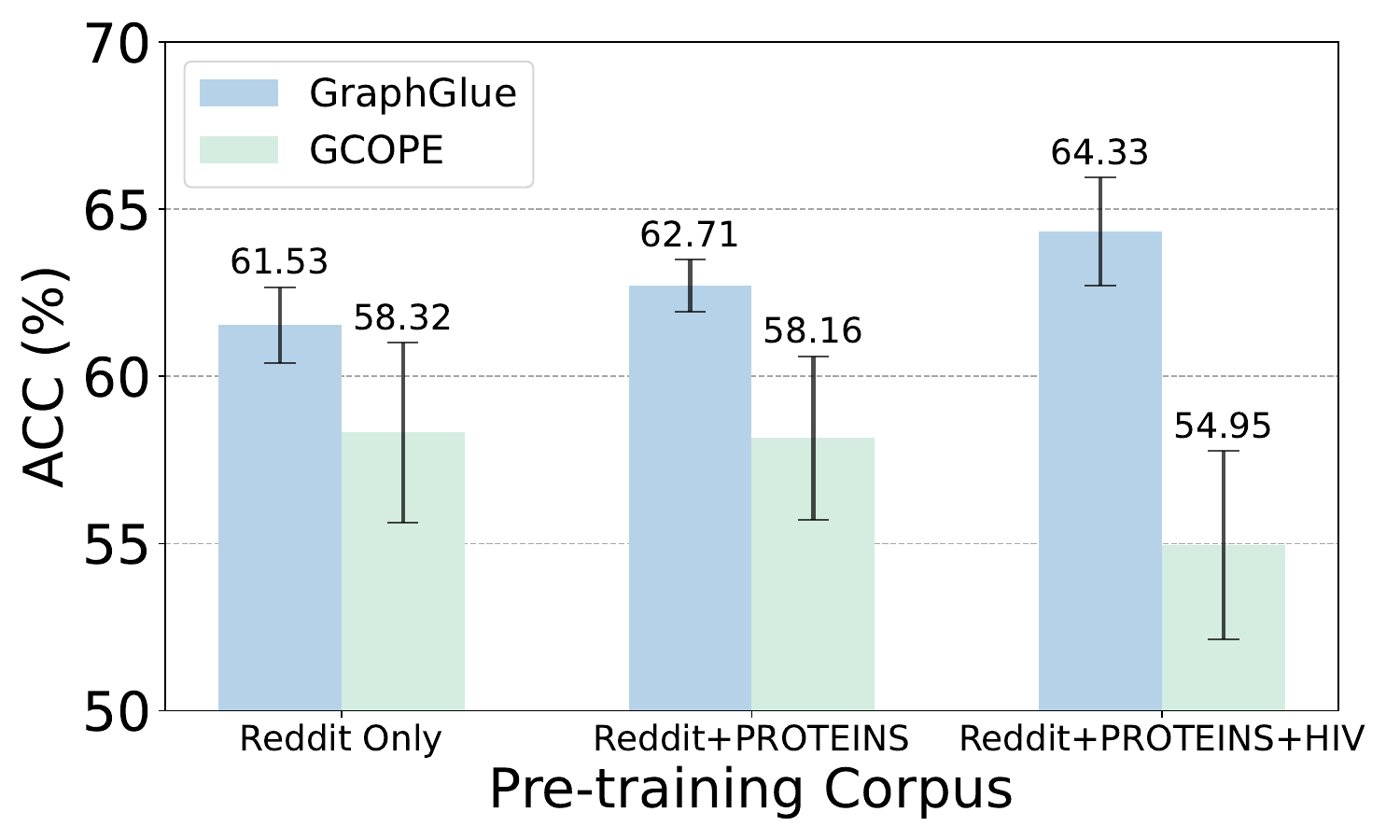}
    \caption{Effect of including distinct domains during pre-training.}
    \label{fig:case_study}
\end{wrapfigure}
We conduct an interesting case study to examine the effect of including semantically distinct data during pre-training.
To this end, we incrementally expand a \texttt{Reddit}-only pre-training with the distinct \texttt{PROTEINS} and \texttt{HIV} datasets, and consistently evaluate on \texttt{Reddit} under the 1-shot setting. As shown in Figure \ref{fig:case_study}, \textsc{GraphGlue} achieves a steady improvement with the inclusion of each dataset. In contrast, GCOPE suffers from negative transfer and results in possible performance decline. This result provides evidence that \textsc{GraphGlue} can effectively incorporate knowledge from even vastly different domains to enhance its capabilities.

\paragraph{On Geometric Scaling Law}

\begin{figure}[t]
    \centering
    \begin{subfigure}[b]{0.49\textwidth}
        \centering
        \begin{minipage}[b]{0.49\textwidth}
            \centering
            \includegraphics[width=\linewidth]{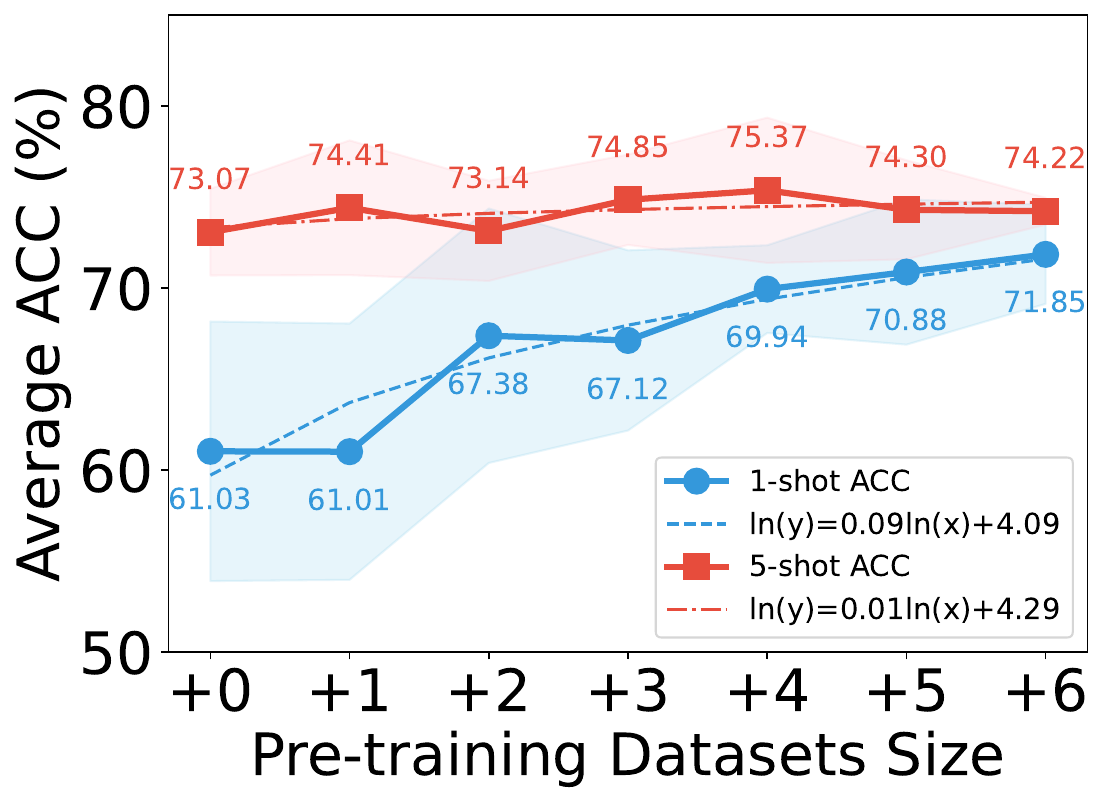} 
        \end{minipage}
        \begin{minipage}[b]{0.49\textwidth}
            \centering
            \includegraphics[width=\linewidth]{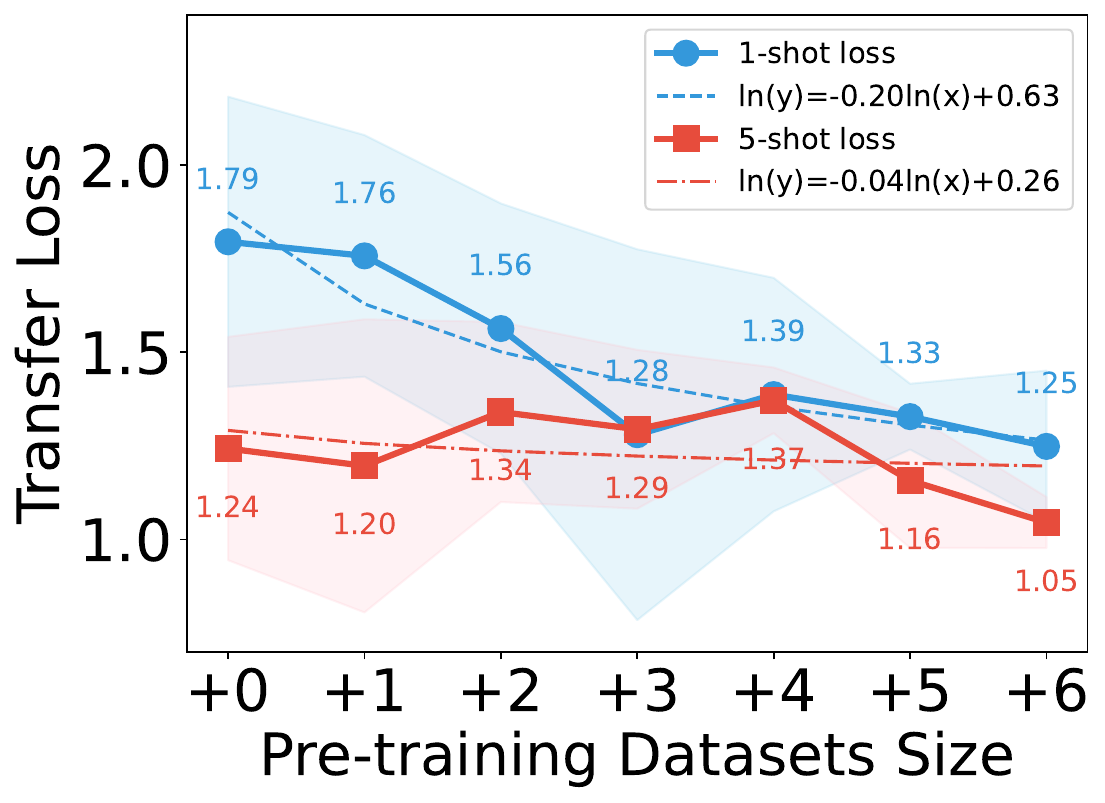}
        \end{minipage}
        \caption{\texttt{Computers}}
    \end{subfigure}
    \hfill
    \begin{subfigure}[b]{0.49\textwidth}
        \centering
        \begin{minipage}[b]{0.49\textwidth}
            \centering
            \includegraphics[width=\linewidth]{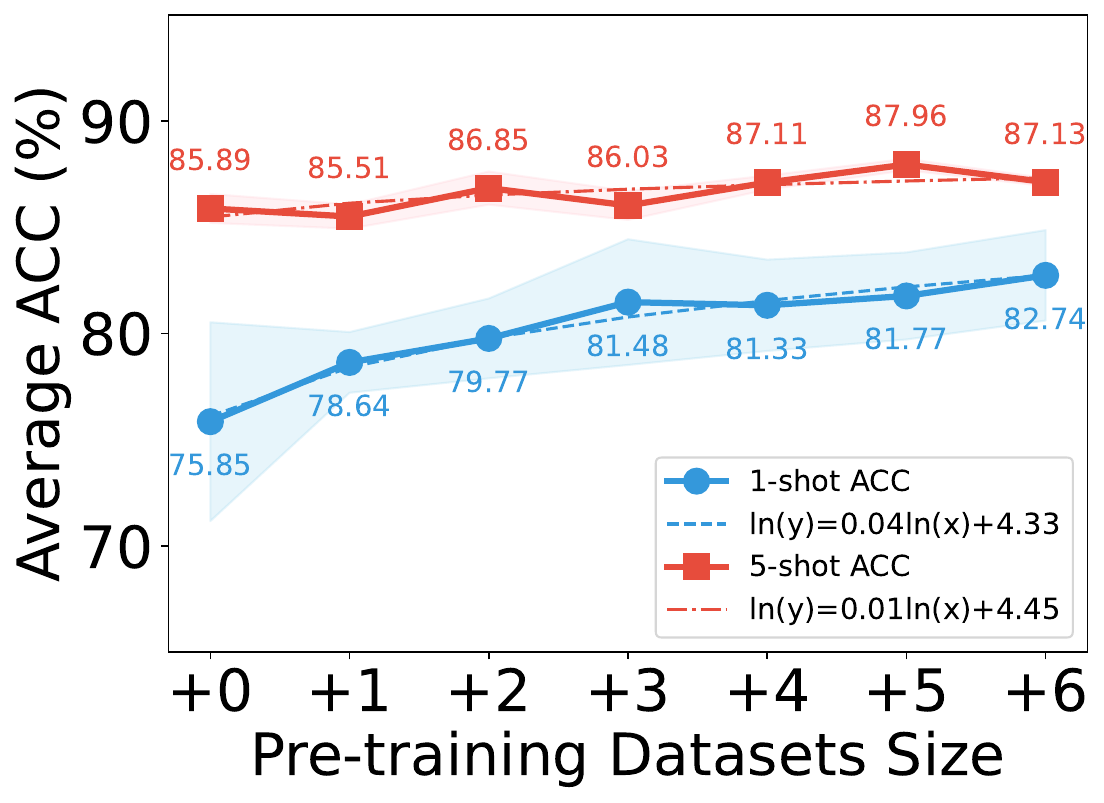}
        \end{minipage}
        \hfill
        \begin{minipage}[b]{0.49\textwidth}
            \centering
            \includegraphics[width=\linewidth]{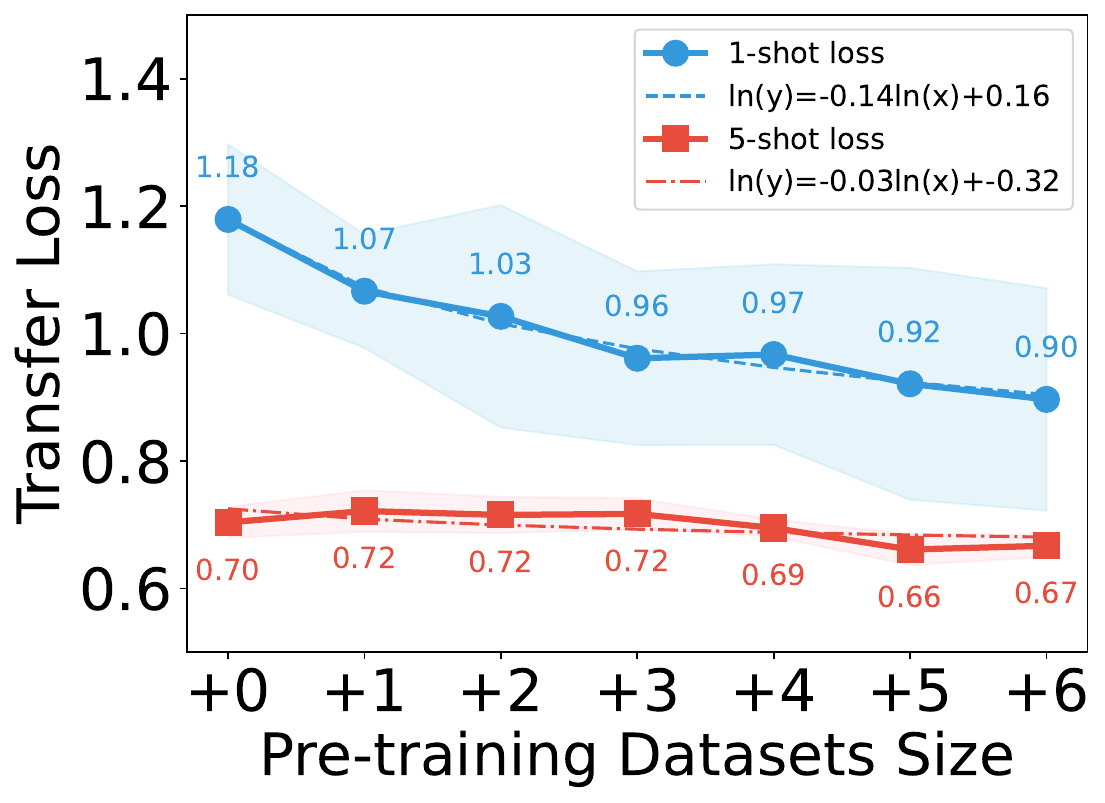}
        \end{minipage}
        \caption{\texttt{Reddit}}
    \end{subfigure}
    \caption{Geometric scaling law on (a) \texttt{Computers} and (b) \texttt{Reddit} datasets.}
    \label{fig:scaling_results}
\end{figure}
We validate the geometric scaling law by enlarging the quantities of pre-training datasets. Specifically, we show the few-shot performance on \texttt{Computers} and \texttt{Reddit} in Figure \ref{fig:scaling_results}, where the original datasets are same as that in Figure \ref{fig:quantify_results}, denoted as $+0$, and we incrementally incorporate \texttt{Pubmed}, \texttt{Photo}, \texttt{FacebookPagePage}, \texttt{WordNet18RR}, \texttt{MUATG} and \texttt{Lipophilicity} in order, referred to as $+1, +2, +3, +4, +5$ and $+6$, respectively.
In the 1-shot  setting, average accuracy rises steadily while transfer loss drops consistently, both well-fitted by logarithmic functions (blue curves), and thus it exhibits clear scaling laws.
 5-shot performance remains more stable (red curves), with only marginal gains in accuracy and a slight reduction in loss.
The insight is that, under extreme data scarcity (1-shot), the performance is highly sensitive to the pre-trained model’s capacity, the expressive power of the learned manifold, while more labeled samples restrain such scaling effect.
The observed logarithmic scaling supports our claim on the scaling law.

 \paragraph{Visualization \& Geometric Interpretation}
 \begin{wrapfigure}{r}{0.55\textwidth}
    \centering
    \includegraphics[width=\linewidth]{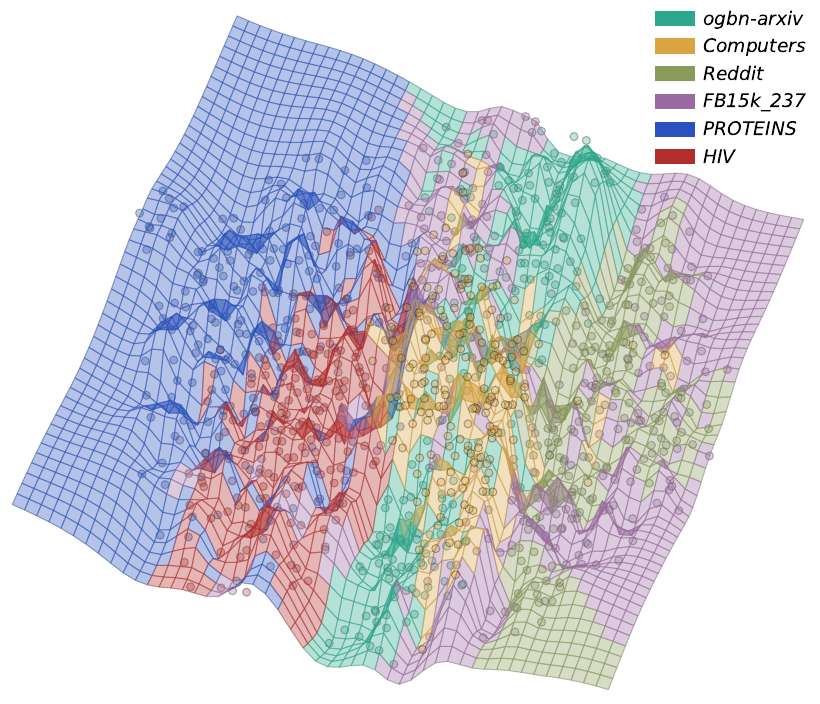}
    \caption{Visualization of the pre-trained manifold from 6 datasets.}
  \label{fig:emb_visualization}
\end{wrapfigure}
 To illustrate our intuition,  we visualize a $3$D per-trained manifold on the $6$ datasets in Figure \ref{fig:emb_visualization}, where the configuration is detailed in Appendix \ref{app:additional_results}.
 We observe that the datasets—\texttt{Reddit} (social network), \texttt{Arxiv} (citation network), \texttt{Computers} (e-commerce network), and \texttt{FB15k\_237} (knowledge graph)—exhibit substantial semantic overlap while retaining the difference. Their corresponding regions on the manifold lie in close proximity, sometimes intermingling owing to shared semantics, yet remain distinguishable. The two chemistry-related datasets (\texttt{PROTEINS} and \texttt{HIV}) are well-separated from the others on the learned manifold. 
 That is, the proposed neural manifold gluing captures the complicated domain semantics. 
 Also, the smoothness is generally ensured, facilitating knowledge transport along the manifold.
The visualization underscores our framework's ability to unify diverse domains into a coherent geometric structure, which forms the foundation for effective cross-domain transfer.

\section{Conclusion}
This work present the first  multi-domain graph pre-training framework through the lens of Riemannian geometry, 
enabling the merging of arbitrary graph datasets into a unified, smooth Riemannian manifold and facilitating a principled understanding of knowledge transfer across different graphs. 
The theoretical contribution lies in the establishment of neural manifold gluing, which ``glues'' the local pieces together into a coherent whole.
Building on this theory, we introduce the \textsc{GraphGlue} framework, supporting the batched pre-training and providing a means to measure its transferability.
Furthermore, we empirically validate the geometric scaling law of \textsc{GraphGlue}. 


\section*{Acknowledgement}
This work is supported in part by NSFC under grants 62202164.
Philip S. Yu is supported in part by NSF under grants III-2106758, and POSE-2346158.

\newpage
\bibliography{iclr2026_conference}
\bibliographystyle{iclr2026_conference}

\newpage
\appendix
\section*{Appendix: Table of Content}
\titlecontents{psection}[3em]{\bfseries}{\contentslabel{2em}}{\hspace*{-2em}}{\titlerule*[0.5pc]{.}\contentspage}
\titlecontents{psubsection}[5em]{}{\contentslabel{2em}}{\hspace*{-2em}}{\titlerule*[0.5pc]{.}\contentspage}

\startcontents[appendices]
\printcontents[appendices]{p}{1}{\setcounter{tocdepth}{2}}

\newpage

\section{Notations}

\begin{table}[htbp]
  \centering
  \caption{Notation and Description}
  \resizebox{\textwidth}{!}{
  \begin{tabular}{cl}
    \toprule
    Notation & Description \\
    \midrule
    $\mathcal{M}$ & A smooth manifold \\
    $\tG$ & A Riemannian metric tensor \\
    $\vp$ & A point in  $\mathcal{M}$\\
    $\mathcal{T}_{\vp}\mathcal{M}$ & The tangent space of point p on $\mathcal{M}$ \\
    $|\mG(\vp)|$ & The volume element at  $\vp$\\
    $(U, x^1,...,x^M)$ & A coordinate chart of tangent space $\mathcal T_{\vp}\mathcal{M}$ \\
    $\{\frac{\partial}{\partial x^1},...,\frac{\partial}{\partial x^M}\}, \{\partial_i\}$ & The standard frame of $\mathcal T\mathcal{M}$  \\
    $\Ric(X, Y)$ & Ricci curvature for vector fields $X,Y$ \\ 
    $\mathcal{G} = (\mathcal{V}, \mathcal{E})$ & A graph with node set $\mathcal{V}$ and edge set $\mathcal{E}$ \\
    $\mX \in \mathbb R^{|\mathcal{V}|\times F}$ & A feature matrix with node set $\mathcal{V}$ \\ 
    $\gG^t$ & The graph of target domain $\gD^t$ \\ 
    $\sG=\{\gG^1, \gG^2, \cdots, \gG^K\}$ & A collection of $K$ graphs from $L$ domains $\sD $ \\ 
    $\sD=\{\gD^1, \gD^2, \cdots, \gD^L\}$ & $L$ domains \\ 
    $f_{\mathbf{\Theta}}(\operatorname{GNN}(\cdot))$ & A pretrained model on the graph dataset $\sG$ with an encoder $\operatorname{GNN}(\cdot)$ \\
    $\{\mathbf{\Theta}_f^\star, \mathbf{\Theta}_{\operatorname{GNN}}^\star\}$ & The pre-training parameters \\
    $f_{\text{GNN}}: \sG \rightarrow \sR^d$ & A differentiable encoder from manifold $\sG$ into the Euclidean space \\
    $\vw_m $ & The set of tangent vectors of graph $\gG$ \\
    $D_\vv f $ & The directional derivative $\gG$ \\
    $\mH(\mathcal{C}) $ & The holonomy map of cycle $\mathcal{C}$ \\
    $\mathcal{E}_{\text{Dir}}[g]$ & The graph Dirichlet energy of function $g$ \\
    $\mathcal{S}_i(\mathcal{V}_i, \mathcal{E}_i)$ & The $h$-hop neighborhood centered at $v_i$ with node set $\mathcal{V}_i$ and \\ & edge set $\mathcal{E}_i$ within this subgraph \\
    $\hat{\gG}_m=(\hat{\mathcal{V}}_m,\hat{\mathcal{E}}_m) $ & The augmented graph \\
    $\sP[M,d]$ & The Adaptive Orthogonal Frame \\
    $U_i$ & A neighborhood around $\vz^{(i)}$ \\
    $\mW^{(i)}$ & The basis of $T_{\vz^{(i)}}U_i$ generated from AFB \\
    $\mG_i(\vu, \vv)$ & The local Riemannian metric defined on $U_i$ \\
    $\mathcal{S}_{++}^{M \times M}$ & The set of positive-definite matrix \\
    $\sS=\{\mathcal{S}_1,...,\mathcal{S}_K\}$ & The source graph datasets \\
    $(\vz^{\mathcal{S}_k}, \log \mG^{\mathcal{S}_k})$ & the Riemannian prototypes for each source graph dataset \\
    $\mathcal{L}_{\text{proto}}(\gG)$ & The prototype-level contrastive loss \\
    $\mP^{(i,j)}$ & The tangent edge translation \\
    $\mathcal{A}$ & The set of all triangles $\mathcal{A}_{ijk}=((v_i, v_j), (v_j, v_k))$ \\
    $\mathcal{L}_{\text{hol}}(\gG)$ & The holonomy loss \\
    $r(\vz^{(i)}, \vz^{(j)})$ & The overall sign of the Ricci curvature along the geodesic $\gamma(t)$ \\ & between $\vz_i$ and $\vz_j$ \\
    $\mathcal{L}_{\text{Curv}}(\gG)$ & The curvature loss regularizing the change of curvature \\ & by controlling the volume change ratio \\
    $\gG^{\mathcal{T}}=(\mathcal{V}^{\mathcal{T}}, \mathcal{E}^{\mathcal{T}})$ & An unseen graph \\
    $\mW^\text{adapt}$,   $\mG^\text{adapt}$ & The adaptive tangent vectors and  adaptive metric \\
    $\mQ$ & The prompt matrix \\
    $\log \mG^{\text{align}}$ & The aligned log-metric to give a $K$-dimensional weighted vector \\
    $\mathcal{L}_{\text{adap}}$ & The adaptation loss \\
    $\lambda$ & The balance coefficient of task loss and gluing loss \\
    $\Delta H$ & Holonomy disagreement \\
    $\Delta C$ & Curvature disagreement \\
    \bottomrule
    \end{tabular}
  }
  \label{tab:notation}
\end{table}






\section{Proofs} 
\label{app.proof}
\subsection{Proof of Theorem \ref{theorem. tangent scale}}
\label{proof. relation of perturbation}
\textbf{Theorem \ref{theorem. tangent scale} }(Upper bound of Tangent Vector Length)
\emph{Given a connected $\gG$ with $N$ nodes, $\mA$, $\mL$ the adjacency matrix and Laplacian of $\gG$, and $\mP$ the feature matrix of perturbation nodes. Apply $(k,M)$-sparse perturbation to $\gG$, suppose $\frac{kM}{N}=\varepsilon$, where $\varepsilon > 0$ is small, and added edge weights satisfy $\sum_l h(v_i,p_l)=1$. Then, 
\begin{align}
     \| \vw_m^\vp\|\leq (1+\varepsilon) \| \mP \| \nonumber
\end{align}
 holds, where $\vw_m^\vp$ is the component of $\vw_m$ determined by perturbation.}
\begin{proof}
    We denote the weighted matrix $\mH \in \sR^{N \times M}$ that consists of $h(v_i,p_l)$, the row summation $\vr_\mH=\mH \mathbf{1}_M$. Then, the perturbed adjacency matrix $\hat{\mA}$ is
    \begin{align}
       \hat{\mA}= \left[
        \begin{array}{cc}
            \mA & \mH \\
            \mH^\top & \mI_M
        \end{array}
        \right]\in\sR^{(N+M)\times(N+M)}. \nonumber
    \end{align}
    The perturbed Laplacian is
    \begin{align}
       \hat{\mL}= \left[
        \begin{array}{cc}
            \mL+\diag(\vr_\mH) & -\mH \\
            -\mH^\top & \mI_M
        \end{array}
        \right] \in\sR^{(N+M)\times(N+M)}.
    \end{align}
  Let the $d$-dimensional graph signal $\mF \in\sR^{(N+M)\times d}$ in heat diffusion on a perturbed graph $\hat{\gG}$ be 
  \begin{align}
       \mF(t)&=\exp(-t\hat{\mL})\mF(0), t > 0, \\
       \mF(0)&=\left[
       \begin{array}{cc}
            \mX  \\
             \mP
       \end{array}
       \right] \in \sR^{(N+M) \times d}.
  \end{align}
  By the linearity of the heat equation, we can divide $\mF(t)$ into two parts:
  \begin{align}
  \label{eq. 2parts}
      \mF(t)=\exp(-t\hat{\mL})\left[
       \begin{array}{cc}
            \mX  \\
             \mathbf{0}
       \end{array}
       \right] + \exp(-t\hat{\mL})\left[
       \begin{array}{cc}
            \mathbf{0}  \\
             \mP
       \end{array}
       \right].
  \end{align}
  We denote
  \begin{align}
      \mF_\text{base}(t) &= \exp(-t\hat{\mL})\left[
       \begin{array}{cc}
            \mX  \\
             \mathbf{0}
       \end{array}
       \right],\\
      \mF_\text{pert}(t) &=\exp(-t\hat{\mL})\left[
       \begin{array}{cc}
            \mathbf{0}  \\
             \mP
       \end{array}
       \right].
  \end{align}
  We are only concerned about $\mF_\text{pert}(t)$ since it reflects how perturbation $\mP$ affects other nodes. Observe from the construction of $\hat{\mL}$, the affected nodes are non-zero elements in $\vr_\mH$. Let $\sS=\text{supp }\vr_\mH \subset \{1,...,N\}$, that $S:=|\sS| \leq kM$. We can extract the corresponding part of $\hat{\mL}$:
  \begin{align}
      \mL_\text{local} = \left[
        \begin{array}{cc}
            \mL_\sS & -\mH_\sS \\
            -\mH_\sS^\top & \mI_M
        \end{array}
        \right]\in \sR^{(S+M) \times (S+M))},
  \end{align}
  where
  \begin{align}
      \mL_\sS &= (\mL + \diag(\vr_\mH))_{[\sS,\sS]} \nonumber, \\
      \mH_\sS &= \mH_{[\sS,:]} \nonumber.
  \end{align}
  Then we have
  \begin{align}
      \mF_\text{pert}(t)=\left [
      \begin{array}{cc}
           \mF_{\text{pert},N}(t)  \\
            \mF_{\text{pert},M}(t)
      \end{array}
      \right]=\left[
      \begin{array}{cc}
            \mS \left(\exp(-t\mL_\text{local}) \left[ 
           \begin{array}{cc}
                \mathbf{0}_S  \\
                \mP 
           \end{array}
           \right] \right)_{[1:S]}\\
           \left(\exp(-t\mL_\text{local}) \left[ 
           \begin{array}{cc}
                \mathbf{0}_S  \\
                \mP 
           \end{array}
           \right] \right)_{[S+1:S+M]} 
      \end{array}
      \right],
  \end{align}
  where $\mS \in \sR^{N\times S}$ is an index projection matrix such that $(\mS)[i,j]=1$ if $i=\sS[j]$. To simplify the notation, let
  \begin{align}
      \mK(t)=\exp(-t\mL_\text{local}) = 
      \left[
      \begin{array}{cc}
          \mK_{SS}(t) & \mK_{SM}(t) \\
          \mK_{MS}(t) & \mK_{MM}(t)
      \end{array}
      \right]
      \in \sR^{(S+M) \times (S+M))}.
  \end{align}
  We can find
  \begin{align}
      \mF_{\text{pert},N}(t)&=\mS \mK_{SM}(t)\mP, \\
      \mF_{\text{pert},M}(t)&=\mK_{MM}(t)\mP.
  \end{align}
  As return to Eq. (\ref{eq. 2parts}), we obtain 
  \begin{align}
      \mF_N(t)&=\mF_\text{base,N}(t) + \mS \mK_{SM}(t)\mP, \\
      \mF_M(t)&=\mF_\text{base,M}(t) + \mK_{MM}(t)\mP,
  \end{align}
  where $\mF_\text{base,N}(t)=\mF_\text{base}(t)_{[:N]}, \mF_\text{base,M}(t)=\mF_\text{base}(t)_{[N+1:N+M]}$.
  
  We simply consider the global mean pooling operation that we obtain
  \begin{align}
      \vz(t)&=\frac{1}{N}\mathbf{1}_N^\top\mF_N(t) \in \sR^d, \\
      \vw_m(t)&=\vf_m(t) - \vz(t) \in \sR^d,
  \end{align}
  where $\vf_m(t)=\mF_M(t)_{[m]}$, the $m$-th row of $\mF_M$.

  Similarly, we can still divide $\vw_m(t)$ into two parts as 
  \begin{align}
      \vw_m(t) = \vf^\vp_m(t) - \vz^\vp(t) + (\text{term affects by $\mX$}),
  \end{align}
  where $\vf^\vp_m(t)$ is the $m$-th row of $\mK_{MM}(t)\mP$, and 
  \begin{align}
      \vz^\vp(t)=\frac{1}{N}\mathbf{1}_N^\top\mS \mK_{SM}(t)\mP=\frac{1}{N}(\mathbf{1}_S^\top\mK_{SM}(t))\mP.
  \end{align}
  
  Let $\va(t)=\frac{1}{N}\mK^\top_{SM}(t)\mathbf{1}_S\in \sR^{M}$, then we have $\vz^\vp(t)=\va^\top(t)\mP$. We denote $\vw_m^\vp(t)=\vf^\vp_m(t) - \vz^\vp(t)$, then
  \begin{align}
      \vw_m^\vp(t)=\left[\mK_{MM}(t)\mP\right]_{[m]} - \frac{1}{N}\va^\top(t)\mP = 
      \left[ \mK_{MM}(t)_{[m,:]} - \va^\top(t) \right]\mP.
  \end{align}
  Let $\vb(t)=\mK_{MM}(t)_{[m,:]} - \va(t)$, we have $\vw_m^\vp(t)=\vb^\top(t)\mP$.

  Since $\frac{kM}{N}=\varepsilon$, $S\leq kM=\varepsilon N$, we obtain
  \begin{align}
      \| \va(t) \|=\| \frac{1}{N}(\mathbf{1}_S^\top\mK_{SM}(t))\| \leq \frac{S}{N} \max_{i,j} |\mK_{SM}(t)_{[i,j]}| \leq \frac{S}{N} \leq \varepsilon,
  \end{align}
  which means
  \begin{align}
  \label{eq. norm_zp(t)}
      \| \vw_m^\vp(t)\|\leq (\|\mK_{MM}(t)_{[m,:]} \| + \|  \va(t)\|)\| \mP \|  \leq (1+\varepsilon) \| \mP \|,
  \end{align}
  since each element is finite, and the diagonal elements of the heat kernel matrix are near $1$ while the other elements are less than $1$.
  Then, we complete the proof.
\end{proof}

\subsection{Proof of Theorem \ref{theorem. isometry}}
\label{proof. isometry}
\textbf{Theorem \ref{theorem. isometry} }(Edge Tangent Translation as Isometry)
\emph{The tangent edge translation in Definition \ref{def. tangent edge translation} is the optimal solution of 
    \begin{align}
        \min\nolimits_{\mP \in \text{GL}(M)} \left\| \mP^\top \mG_j \mP - \mG_i \right\|_F^2,
    \end{align}
where $\operatorname{GL}$ denotes the general linear group, such that
$\mG_j(\mP^{(i,j)}\vu, \mP^{(i,j)}\vv)=\mG_i(\vu, \vv)$,
which induces an isometry $\phi^{(i,j)}$ between $\partial U_i$ and $\partial U_j$.}
\begin{proof}
We prove that the tangent edge translation $\mP^{(i,j)} = \mG_j^{-1/2} \left( \mG_j^{1/2} \mG_i \mG_j^{1/2} \right)^{1/2} \mG_j^{-1/2}$ uniquely minimizes $\| \mP^\top \mG_j \mP - \mG_i \|_F^2$ over $\mP \in \text{GL}(M)$ and induces an isometry.

Let $\mQ = \mG_j^{1/2} \mP$. Then $\mP^\top \mG_j \mP = \mQ^\top \mQ$, and the objective becomes:
\begin{align}
    \min_{\mQ \in \text{GL}(M)} \| \mQ^\top \mQ - \mG_i \|_F^2.
\end{align}
The minimum is achieved when $\mQ^\top \mQ = \mG_i$, since the Frobenius norm is strictly convex over SPD matrices. Thus, $\mQ = \mG_i^{1/2} \mR$ for orthogonal $\mR$, and minimal norm occurs at $\mR = \mI$, giving $\mQ^* = \mG_i^{1/2}$.

From $\mQ = \mG_j^{1/2} \mP$, we get candidate $\mP_0 = \mG_j^{-1/2} \mG_i^{1/2}$. However, this is not symmetric in $\mG_i, \mG_j$ unless they commute. To ensure geometric consistency and symmetry, we instead use the \emph{metric geometric mean}:
\begin{align}
    \mP^{(i,j)} = \mG_j^{-1/2} \left( \mG_j^{1/2} \mG_i \mG_j^{1/2} \right)^{1/2} \mG_j^{-1/2}.
\end{align}

Then, we compute
\begin{align}
\mP^{(i,j)\top} \mG_j \mP^{(i,j)} 
&= \mG_j^{-1/2} \left( \mG_j^{1/2} \mG_i \mG_j^{1/2} \right)^{1/2} \mG_j^{-1/2} \cdot \mG_j \cdot \mG_j^{-1/2} \left( \mG_j^{1/2} \mG_i \mG_j^{1/2} \right)^{1/2} \mG_j^{-1/2} \\
&= \mG_j^{-1/2} \left( \mG_j^{1/2} \mG_i \mG_j^{1/2} \right) \mG_j^{-1/2} = \mG_i.
\end{align}
Thus, $\mG_j(\mP^{(i,j)}\vu, \mP^{(i,j)}\vv) = \vu^\top \mP^{(i,j)\top} \mG_j \mP^{(i,j)} \vv = \vu^\top \mG_i \vv = \mG_i(\vu, \vv)$, so $\mP^{(i,j)}$ is an isometry.

All isometric maps satisfy $\mP^\top \mG_j \mP = \mG_i$, and form the set $\{ \mP^{(i,j)} \mR \mid \mR^\top \mG_i \mR = \mG_i \}$. The Frobenius norm $\| \mP \|_F^2 = \text{Tr}(\mP^\top \mP)$ is minimized when $\mG_j \mP$ is symmetric. Then, we have
\begin{align}
    \mG_j \mP^{(i,j)} = \mG_j^{1/2} \left( \mG_j^{1/2} \mG_i \mG_j^{1/2} \right)^{1/2} \mG_j^{-1/2}.
\end{align}
Hence, $\mP^{(i,j)}$ is the minimum-norm isometry, and thus the global minimizer of the original Frobenius problem (since the constraint is active and satisfied exactly).

Since $\mP^{(i,j)}: T_{\vz^{(i)}}U_i \to T_{\vz^{(j)}}U_j$ is a linear isometry, and assuming smooth compatibility of charts near $\vz^{(i)}, \vz^{(j)}$, we can lift $\mP^{(i,j)}$ via the exponential map (or local parametrization) to a local diffeomorphism $\phi^{(i,j)}: \partial U_i \to \partial U_j$ such that $\phi_{*,\vz^{(i)}}^{(i,j)} = \mP^{(i,j)}$, which is the differential of a diffeomorphism and preserves metric. Hence $\phi^{(i,j)}$ is a isometry.
\end{proof}

\subsection{Proof of Theorem \ref{theorem. global_metric}}
\label{proof. global_metric}
\textbf{Theorem \ref{theorem. global_metric} }Existence of Global Metric)
\emph{Let $(\{\mG_i\}_{i=1}^N, \{\mP^{(i,j)}\}_{(i,j)\in \mathcal{E}})$ be local metrics and tangent edge translations. There exists a unique global continuous metric $\mG$ on ${(\bigcup_\phi)}_{i=1}^N U_i$ such that $\mG|_{U_i} = \mG_i$ for all $i$.}
\begin{proof}
We aim to construct a global continuous Riemannian metric $\mG$ on the space $\mathcal{F} = \bigcup_{i=1}^N U_i$, where each $U_i$ is an open subset of $\mathbb{R}^d$, and the overlaps $U_i \cap U_j$ are non-empty for $(i,j) \in \mathcal{E}$.
By assumption, we have a local Riemannian metric $\mG_i$ on each $U_i$,
and tangent edge translations $\mP^{(i,j)}: T_{\vz^{(i)}}U_i \to T_{\vz^{(j)}}U_j$ satisfying
\begin{align}
    \mP^{(i,j)\top} \mG_j \mP^{(i,j)} = \mG_i,
\end{align}
which ensures that $\mP^{(i,j)}$ is an isometry between $(T_{\vz^{(i)}}U_i, \mG_i)$ and $(T_{\vz^{(j)}}U_j, \mG_j)$.

Let us define a topological space $\mathcal{F} = \bigcup_{i=1}^N U_i$, with topology induced by the Euclidean topology on each $U_i$. For each pair $(i,j) \in \mathcal{E}$, let $\phi^{(i,j)}: \partial U_i \to \partial U_j$ be a diffeomorphism whose differential at the shared boundary point $\vz^{(i)}$ is precisely $\mP^{(i,j)}$. Since $\mP^{(i,j)}$ is an isometry, it preserves inner products, so $\phi^{(i,j)}$ is a \emph{local isometry} near $\vz^{(i)}$.

Now, we consider that,
let $\gM_1 = U_i$, $\gM_2 = U_j$, $\gN_1 = \partial U_i$, $\gN_2 = \partial U_j$, and $\phi: \gN_1 \to \gN_2$ be the diffeomorphism induced by $\mP^{(i,j)}$. Now we introduce the following lemma to complete the proof.
\begin{lemma}[Gluing Manifolds via Boundary Isometries \cite{hirsch1976differential}]
\label{lemma. glue manifolds}
Let $\gM_1$ and $\gM_2$ be smooth $M$-dimensional manifolds with boundary, and let $\gN_1 \subset \partial \gM_1$, $\gN_2 \subset \partial \gM_2$ be closed, smoothly embedded $(M-1)$-dimensional submanifold of their respective boundaries. Suppose $\phi: \gN_1 \to \gN_2$ is a diffeomorphism such that its differential $\phi_{*,\vx}: T_{\vx}\gN_1 \to T_{\phi(\vx)}\gN_2$ extends to an isometry 
\begin{align}
    \mP_{\vx}: T_{\vx}\gM_1 \to T_{\phi(\vx)}\gM_2
\end{align}
between the Riemannian metrics $\mG_1$ on $\gM_1$ and $\mG_2$ on $\gM_2$, i.e.,
\begin{align}
    \mP_{\vx}^\top \mG_2(\phi(\vx)) \mP_{\vx} = \mG_1(\vx), \quad \forall \vx \in \gN_1.
\end{align}
Then, the topological space $\gM_1 \cup_\phi \gM_2$ obtained by identifying $\gN_1$ with $\gN_2$ via $\phi$ admits a unique smooth structure such that:
\begin{enumerate}
    \item The inclusions $\gM_1 \hookrightarrow \gM_1 \cup_\phi \gM_2$ and $\gM_2 \hookrightarrow \gM_1 \cup_\phi \gM_2$ are smooth embeddings;
    \item The Riemannian metrics $\mG_1$ and $\mG_2$ extend to a continuous Riemannian metric $\mG$ on $\gM_1 \cup_\phi \gM_2$.
\end{enumerate}
Moreover, this smooth structure is unique up to a diffeomorphism that fixes $\gN_1 \simeq \gN_2$ point-wise.
\end{lemma}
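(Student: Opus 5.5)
This lemma is a standard result from differential topology (Hirsch, Ch.~8, gluing along boundaries), so I will not re-derive collar theory from scratch. Instead I will organise the argument so that each assertion reduces to a classical ingredient. The plan has four steps: (i) build collars and a smooth atlas near the glued boundary; (ii) check that the inclusions are smooth embeddings; (iii) glue the metrics continuously using the isometry hypothesis; (iv) prove uniqueness up to a diffeomorphism fixing $\gN_1\simeq\gN_2$.

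\emph{Step 1: collars and atlas.} By the collar neighbourhood theorem, since $\gN_1$ and $\gN_2$ are closed embedded hypersurfaces of the boundaries, there are open collars $c_1:\gN_1\times[0,1)\to\gM_1$ and $c_2:\gN_2\times[0,1)\to\gM_2$, each the identity on $\gN_a\times\{0\}$. On $\gN_1\times(-1,1)$ I define the chart map by $(x,t)\mapsto c_1(x,t)$ for $t\ge 0$ and $(x,t)\mapsto c_2(\phi(x),-t)$ for $t\le 0$. This is a homeomorphism onto a neighbourhood of the seam in $\gM_1\cup_\phi\gM_2$.

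The smooth atlas is then the interior charts of $\gM_1$ and $\gM_2$ together with charts of $\gN_1\times(-1,1)$ pulled back through this map. Compatibility with the interior charts holds away from $t=0$, because the collars are diffeomorphisms there. Smoothness of the inclusions (part 1) follows because, in these charts, each inclusion is just the half-space inclusion $t\ge0$ or $t\le0$.

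\emph{Step 2: the metric (part 2).} On the seam I will use the hypothesis $\mP_{\vx}^\top\mG_2(\phi(\vx))\mP_{\vx}=\mG_1(\vx)$. It says exactly that the two one-sided metrics agree on the glued tangent space $T_{\vx}(\gM_1\cup_\phi\gM_2)$. Here I identify $T_{\vx}\gM_1$ with $T_{\phi(\vx)}\gM_2$ via $\mP_{\vx}$; I choose the collars so that $dc_2\circ\mP_{\vx}$ matches $dc_1$ at $t=0$, which is possible because $\mP_{\vx}$ extends $\phi_*$.

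Define $\mG=\mG_1$ on $\gM_1$ and $\mG=\mG_2$ on $\gM_2$. This is well defined on the seam and positive definite everywhere. It is continuous because each piece is continuous up to the boundary and the two one-sided limits coincide there; this is the pasting lemma applied in the collar chart.

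\emph{Main obstacle.} The delicate point is ensuring that the collar chart transports the transverse direction correctly. In the collar chart the differential of the gluing at the seam must equal $\mP_{\vx}$, not merely $\phi_*$ on tangent directions. Otherwise the two metrics agree only along $T\gN$ and could jump in the normal direction.

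My first approach is to choose $c_2$ after $c_1$. I will take $\partial_t c_1(x,0)$ to be an inward vector field transverse to $\gN_1$. I then require $-\partial_t c_2(\phi(x),0)=\mP_{\vx}\partial_t c_1(x,0)$, so that $c_2$ is generated by an outward-reflected field. This is legitimate because an isometry extending $\phi_*$ must send inward normals to outward ones under the gluing orientation; if needed I compose $\mP_{\vx}$ with the normal reflection.

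\emph{Step 3: uniqueness.} Uniqueness up to a diffeomorphism fixing the seam pointwise follows from the uniqueness of collars up to isotopy, which is the standard argument in Hirsch, Thm.~8.1.9. Two different collar choices yield atlases related by a homeomorphism that is the identity on $\gN_1$ and smooth on each side, and this homeomorphism can be isotoped to a diffeomorphism fixing $\gN_1$. Only continuity of $\mG$ is claimed, which is consistent with this non-canonical smoothing.
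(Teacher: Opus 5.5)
Your argument is correct in substance, and it covers something the paper does not. The paper gives no proof, only the citation to Hirsch. The gluing theorem it points to is purely differential-topological: a structure making both inclusions embeddings, unique up to a diffeomorphism fixed on the seam. It says nothing about metrics.

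That gap matters, because part 2 is false for an arbitrary collar-induced structure. Take two copies of $[0,\infty)$ with the Euclidean metric, and use the chart $x$ on $\gM_1$ and $-x/2$ on $\gM_2$; the piecewise metric then jumps by a factor $4$ across the seam. Your key observation fixes this. Collars $c_1,c_2$ induce the seam identification $v+s\,\partial_t c_1\mapsto \phi_*v-s\,\partial_t c_2$ from $T_{\vx}\gM_1$ to $T_{\phi(\vx)}\gM_2$. The piecewise metric is continuous exactly when this map is a $(\mG_1,\mG_2)$-isometry. So prescribing $-\partial_t c_2(\phi(\vx),0)=\mP_{\vx}\,\partial_t c_1(\vx,0)$, with $\mP_{\vx}$ normalized as below, is necessary and sufficient for part 2. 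The citation buys brevity. Your route gives an actual justification of the metric clause, while still delegating existence of collars and the uniqueness statement to Hirsch.

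A few points need tightening.
\begin{itemize}
\item Your claim that an isometry extending $\phi_*$ must send inward normals to outward ones is false. It sends the $\mG_1$-unit normal of $\gN_1$ to $\pm$ the $\mG_2$-unit normal of $\gN_2$, and both signs satisfy the hypothesis. Your reflection fix is right, but say what it yields: the canonical extension, which is $\phi_*$ tangentially and sends the inward unit normal to the outward unit normal. This map is smooth in $\vx$. You need that smoothness, because the hypothesis provides $\mP_{\vx}$ only pointwise and you must integrate $-\mP_{\vx}\partial_t c_1$ into a smooth collar $c_2$.
\item The phrase ``$dc_2\circ\mP_{\vx}$ matches $dc_1$'' in Step 2 does not type-check. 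The condition in your ``main obstacle'' paragraph is the right one.
\item In Step 3, the identity map between two glued structures is not ``isotoped'' to a diffeomorphism. The correct argument is as follows. Every structure satisfying (1) is collar-induced, since a bicollar of $\gN$ in it restricts to collars of each side. Uniqueness of collars rel boundary then gives diffeomorphisms $h_a$ of $\gM_a$, fixing $\gN_a$ pointwise, with $h_a\circ c_a=c_a'$ near the seam. Then $h_1\cup h_2$ is the required map.
\item Reading ``unique'' as ``unique up to such a diffeomorphism'' is forced, not optional. Take $\gM_1=\gM_2=[0,\infty)$ with the chart $x$ on $\gM_1$. Using $-x-x^2$ versus $-x$ on $\gM_2$ gives two distinct structures, both satisfying (1) and (2). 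The transition map between them is $C^1$ but not $C^2$ at the seam.
\end{itemize}
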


By the Lemma {\ref{lemma. glue manifolds}}, there exists a smooth structure on the glued space $\gM_1 \cup_\phi \gM_2$ that arises from identifying $\gN_1$ with $\gN_2$ via $\phi$. Moreover, this smooth structure is unique up to a diffeomorphism fixing $\gN_1 \simeq \gN_2$.

Applying this construction iteratively over all edges $(i,j) \in \mathcal{E}$, we can glue all charts $U_i$ together along their boundaries using the maps $\phi^{(i,j)}$, resulting in a globally defined topological space $\mathcal{F}$ equipped with a smooth structure.

On each $U_i$, we already have a Riemannian metric $\mG_i$. We now define a global metric $\mG$ on $\mathcal{M}$ by setting $\mG|_{U_i} = \mG_i$. To ensure that $\mG$ is well-defined on overlaps $U_i \cap U_j$, we must verify that the values agree under coordinate changes.

Let $\vu \in T_{\vz}(\mathcal{F})$ for $\vz \in U_i \cap U_j$. In the chart $U_i$, $\vu$ is represented as $\vu_i \in T_{\vz^{(i)}}U_i$, and in $U_j$, as $\vu_j = \mP^{(i,j)} \vu_i \in T_{\vz^{(j)}}U_j$. Then:
\begin{align}
    \mG_i(\vu_i, \vu_i) = \vu_i^\top \mG_i \vu_i, \quad \mG_j(\vu_j, \vu_j) = \vu_j^\top \mG_j \vu_j = (\mP^{(i,j)} \vu_i)^\top \mG_j (\mP^{(i,j)} \vu_i).
\end{align}
But by the isometry condition:
\begin{align}
    \mP^{(i,j)\top} \mG_j \mP^{(i,j)} = \mG_i \Rightarrow \vu_i^\top \mG_i \vu_i = \vu_i^\top \mP^{(i,j)\top} \mG_j \mP^{(i,j)} \vu_i = \vu_j^\top \mG_j \vu_j.
\end{align}
Thus, $\mG_i(\vu_i, \vu_i) = \mG_j(\vu_j, \vu_j)$, so the metric value is independent of the chart. Hence, $\mG$ is well-defined on $\mathcal{M}$.

Since each $\mG_i$ is continuous on $U_i$, and the transition maps $\mP^{(i,j)}$ are smooth, the metric $\mG$ is continuous across overlaps. 

Uniqueness follows from the fact that any other metric $\tilde{\mG}$ agreeing with $\mG_i$ on each $U_i$ must coincide with $\mG$ on overlaps due to the isometry constraint. Thus, $\mG$ is the unique continuous metric extending $\mG_i$ consistently.

Therefore, under the given assumptions, there exists a unique continuous Riemannian metric $\mG$ on $\bigcup_{i=1}^N U_i$ such that $\mG|_{U_i} = \mG_i$ for all $i$, completing the proof.
\end{proof}

\subsection{Proof of Theorem \ref{theorem. triangle_trivial} and Clarification}
\label{proof. triangle}
\textbf{Theorem \ref{theorem. triangle_trivial} }(Triangle Triviality)
\emph{
If every edge belongs to at least one triangle, and $\mH(\mathcal{T}) = \mI$ for all triangular cycles $\mathcal{T}$ in $\mathcal{G}$, then $\mH(\mathcal{C}) = \mI$ for all cycles $\mathcal{C} \in \mathcal{Z}_1(\mathcal{G})$. 
}
\begin{proof}
Under the assumption that every edge lies in at least one triangle, the cycle space $\mathcal{Z}_1(\mathcal{G})$ is generated by triangular cycles (see, e.g., the simplicial/cellular homology discussion in \cite[Section 2.1]{hatcher2002algebraic}, where 1-cycles are generated by boundaries of 2-simplices — here, triangles). Since the holonomy map $\mH: \mathcal{Z}_1(\mathcal{G}) \to \mathrm{GL}(M)$ is multiplicative and trivial on generators (i.e., $\mH(\mathcal{T}) = \mI$ for all triangles $\mathcal{T}$), it follows that $\mH(\mathcal{C}) = \mI$ for all $\mathcal{C} \in \mathcal{Z}_1(\mathcal{G})$.
\end{proof}

Note that every edge belonging to at least one triangle is not the assumption of Theorem 4.8.
This theorem states that, if every edge belongs to at least one triangle, triangles are already sufficient to construct the coherent manifold described in this work. 
It means that there is no need for exploring any higher-order motifs, but the triangle coverage is not a necessary condition of manifold gluing. 

We clarify that GraphGlue does not need to add synthetic motifs. 
Since GraphGlue aims to approximate a smooth manifold, the closed triple paths (strict triangles) benefit the approximation process. 
As we consider that sample closed triangle paths may be impossible in large-scale graphs or tree-like graphs, the triangle holonomy regularization gives a computationally efficient way to approximate a ``perfect gluing."
In practice, we only sample two adjacent edges to approximate strict triangles (at the end of Appendix D.1), and the computation of $\mathcal{L}_{curv}$ only relies on two adjacent edges.

\subsection{Proof of Theorem \ref{theorem. curv}}
\label{proof. curv}
\textbf{Theorem \ref{theorem. curv} }(Ricci Curvature Estimation)
\emph{
Given a graph $\mathcal{G} = (\mathcal{V}, \mathcal{E})$ and an edge $(i,j) \in \mathcal{E}$, let $\vz^{(i)}, \vz^{(j)} \in \mathcal{M}$ be the corresponding embedded points, and let $\gamma: [0,1] \to \mathcal{M}$ be the unit-speed geodesic connecting them, i.e., $\gamma(0) = \vz^{(i)}$, $\gamma(1) = \vz^{(j)}$. The sign of the Ricci curvature along $\dot{\gamma}$ can be estimated by the ratio of metric determinants:
\begin{align}
    r(\vz^{(i)}, \vz^{(j)}) := \frac{\det \mG_i}{\det \mG_j} \approx 1 - \frac{1}{3} \Ric(\dot{\gamma}).
\end{align}
}
\begin{proof}
We work in Gaussian normal coordinates centered at $\vz^{(i)} = \gamma(0)$, aligned with the geodesic $\gamma(t)$. In these coordinates, the element of metric tensor $g_{ij}(t) = g_{ij}(\gamma(t))$ admits the following Taylor expansion near $t=0$ (see, \cite{petersenRiemannianGeometry2016}):
\begin{align}
g_{ij}(t) = \delta_{ij} - \frac{1}{3} R_{ikjl}(\vz^{(i)}) \, \dot{\gamma}^k \dot{\gamma}^l \, t^2 + \mathcal{O}(t^3),
\end{align}
where $R_{ikjl}$ denotes the components of the Riemann curvature tensor at $\vz^{(i)}$, and $\dot{\gamma} = \dot{\gamma}(0)$ is the initial tangent vector.

Let $g(t) = \det(g_{ij}(t))$. Since $g(0) = \det(\delta_{ij}) = 1$, we compute the expansion of $g(t)$ using the Jacobi formula for the derivative of a determinant:
\begin{align}
\frac{d}{dt} \log g(t) = g^{ij}(t) \frac{d}{dt} g_{ij}(t).
\end{align}
At $t=0$, $g^{ij}(0) = \delta^{ij}$ and $\frac{d}{dt} g_{ij}(0) = 0$ (since first-order terms vanish in normal coordinates). Differentiating again:
\begin{align}
\frac{d^2}{dt^2} \log g(t)\Big|_{t=0} = \delta^{ij} \frac{d^2}{dt^2} g_{ij}(t)\Big|_{t=0} = \delta^{ij} \left( -\frac{2}{3} R_{ikjl} \dot{\gamma}^k \dot{\gamma}^l \right) = -\frac{2}{3} R_{kl} \dot{\gamma}^k \dot{\gamma}^l = -\frac{2}{3} \Ric(\dot{\gamma}).
\end{align}
Thus, expanding $\log g(t)$ to second order:
\begin{align}
\log g(t) = -\frac{1}{3} \Ric(\dot{\gamma}) \, t^2 + \mathcal{O}(t^3),
\end{align}
and exponentiating:
\begin{align}
g(t) = \exp\left( -\frac{1}{3} \Ric(\dot{\gamma}) \, t^2 + \mathcal{O}(t^3) \right) = 1 - \frac{1}{3} \Ric(\dot{\gamma}) \, t^2 + \mathcal{O}(t^3).
\end{align}

Now, evaluate at $t=1$ (i.e., at $\vz^{(j)} = \gamma(1)$), assuming the higher-order terms remain negligible (which holds if either the curvature is bounded and the edge length is small, or if we consider the leading-order behavior):
\begin{align}
\det \mG_j = g(1) \approx 1 - \frac{1}{3} \Ric(\dot{\gamma}), \quad \det \mG_i = g(0) = 1.
\end{align}
Therefore, the ratio satisfies:
\begin{align}
r(\vz^{(i)}, \vz^{(j)}) = \frac{\det \mG_i}{\det \mG_j} \approx \frac{1}{1 - \frac{1}{3} \Ric(\dot{\gamma})} \approx 1 + \frac{1}{3} \Ric(\dot{\gamma}) + \mathcal{O}(\Ric^2),
\end{align}
where the last step uses $(1 - x)^{-1} \approx 1 + x$ for small $x$. However, since we are only interested in the \emph{sign} of $\Ric(\dot{\gamma})$, and under the assumption that $\left| \frac{1}{3} \Ric(\dot{\gamma}) \right| \ll 1$, we may directly approximate:
\begin{align}
\frac{\det \mG_i}{\det \mG_j} \approx 1 - \frac{1}{3} \Ric(\dot{\gamma}),
\end{align}
by matching leading-order terms in the reciprocal expansion (equivalently, approximating $\det \mG_j \approx 1 - \frac{1}{3} \Ric$ implies $\det \mG_i / \det \mG_j \approx 1 + \frac{1}{3} \Ric$, but since $\det \mG_i = 1$, the direct expansion of $\det \mG_j$ gives the sign relation).

Thus, we conclude:
\begin{itemize}
    \item If $\Ric(\dot{\gamma}) > 0$, then $\det \mG_j < \det \mG_i$ $\Rightarrow$ $r(\vz^{(i)}, \vz^{(j)}) < 1$.
    \item If $\Ric(\dot{\gamma}) < 0$, then $\det \mG_j > \det \mG_i$ $\Rightarrow$ $r(\vz^{(i)}, \vz^{(j)}) > 1$.
    \item If $\Ric(\dot{\gamma}) = 0$, then $\det \mG_j \approx \det \mG_i$ $\Rightarrow$ $r(\vz^{(i)}, \vz^{(j)}) \approx 1$.
\end{itemize}
This establishes the correspondence between the sign of Ricci curvature and the metric volume ratio, as claimed.
\end{proof}
\subsection{Proof of Theorem \ref{theorem. manifold}}
\label{proof. manifold}
\textbf{Theorem \ref{theorem. manifold} }(Glue to a Global Riemannian Manifold)
\emph{
For the set of all graph data $\sG$, if $\mG$ is log-determinant $\infty$-order smooth, and $\tP$ is trivial with induced metric-preserving diffeomorphism $\phi$, then $(\mathcal{F}, \tG, \tP)$ glues to a smooth Riemannian manifold $(\mathcal{F}, \tG)$, where $\mathcal{F}:={(\bigcup_\phi)}_{i=1}^N U_i$.
}
\begin{proof}
We construct the global manifold structure in three steps, leveraging the established components:

\textbf{(1) Trivial holonomy $\Rightarrow$ path-independent parallel transport.}  
By Theorem \ref{theorem. triangle_trivial} and Definition \ref{def. holonomy map}, the triviality of $\tP$ on all cycles implies that the tangent edge translations $\mP^{(i,j)}$ define a \emph{flat connection} on the graph. Consequently, the induced diffeomorphisms $\phi^{(i,j)}$ (from Theorem \ref{theorem. isometry}) are compatible across higher-order overlaps: for any two paths from $U_i$ to $U_j$, the composed gluing maps agree. This ensures the cocycle condition for manifold gluing.

\textbf{(2) Global metric existence.}  
By Theorem \ref{theorem. global_metric} (Existence of Global Metric), the pairwise isometric identifications $\phi^{(i,j)}$ — now globally consistent due to trivial holonomy — allow us to glue the charts $\{U_i\}$ into a topological space $\mathcal{F} = \bigcup_\phi U_i$ equipped with a unique continuous Riemannian metric $\mG$ such that $\mG|_{U_i} = \mG_i$.

\textbf{(3) Smoothness from log-det $\infty$-order smoothness.}  
By Definition \ref{def. logdet_smooth}, the scalar field $g_i = \frac{1}{2} \log \det \mG_i$ minimizes $\| \mathcal{L}^k g \|^2$ for all $k \ge 1$, which implies $g$ is in the kernel of all powers of $\mathcal{L}$ — i.e., $g$ is \emph{infinitely smooth} over the graph. Since $\mathcal{L}^k g = 0$ for all $k$ only if $g$ is constant on connected components (under mild graph connectivity), and since $\det \mG_i = \exp(2g_i)$, it follows that the metric determinants vary smoothly (in fact, constantly, if the graph is connected). Combined with the smoothness of the transition maps $\phi^{(i,j)}$ (which are isometries, hence $C^\infty$), this ensures that the metric tensor $\tG$ is smooth in overlapping charts. Thus, $(\mathcal{F}, \tG)$ is a smooth Riemannian manifold.

Therefore, the triple $(\mathcal{F}, \tG, \tP)$, under the given conditions, glues consistently to form the smooth Riemannian manifold $(\mathcal{F}, \tG)$.
\end{proof}

\section{Background: Differential Geometry on Riemannian Manifolds}
\label{app:differential_geometry}

This appendix provides the necessary background on continuous Riemannian geometry, which forms the theoretical foundation for our claim that \textsc{MERGE} learns a smooth, intrinsic manifold in the latent space. While our implementation operates on discrete graphs and neural networks, we argue that the learned structure approximates a true, continuous Riemannian manifold due to the smoothness of the GNN encoder. We emphasize concepts relevant to Sections 5 and 6 of the main text.

\subsection{Riemannian Manifold: The Continuous Setting}

A \textbf{Riemannian manifold} $(\mathcal{M}, \mathbf{g})$ is a smooth (typically $C^\infty$) topological manifold $\mathcal{M}$ of dimension $M$, endowed with a \textbf{Riemannian metric tensor} $\mathbf{g}$. At each point $p \in \mathcal{M}$, the metric $\mathbf{g}_p$ is a symmetric, positive-definite bilinear form defined on the tangent space $T_p\mathcal{M}$:
\begin{align}
\mathbf{g}_p : T_p\mathcal{M} \times T_p\mathcal{M} \to \mathbb{R}.
\end{align}
The metric $\mathbf{g}_p$ allows us to compute lengths of tangent vectors, angles between them, and volumes of regions on $\mathcal{M}$. In local coordinates $(x^1, ..., x^M)$ around $p$, the metric is represented by a matrix $\mathbf{G}(p) = [g_{ij}(p)]$, where $g_{ij}(p) = \mathbf{g}_p(\partial_i, \partial_j)$, and $\{\partial_i = \frac{\partial}{\partial x^i}\}$ is the coordinate basis of $T_p\mathcal{M}$.

The \textbf{volume element} at $p$ is given by $dV_p = \sqrt{\det \mathbf{G}(p)} \, dx^1 \wedge \cdots \wedge dx^M$. The scalar field $f(p) = \frac{1}{2} \log \det \mathbf{G}(p)$ is called the \textbf{logarithmic volume density}. A manifold is said to be \textbf{$C^k$-smooth} if the components $g_{ij}$ are $C^k$-differentiable functions of the coordinates.

\subsection{Levi-Civita Connection and Parallel Transport}

A \textbf{connection} $\nabla$ on $\mathcal{M}$ defines how to differentiate vector fields along curves, enabling the notion of parallel transport. The unique connection compatible with the metric $\mathbf{g}$ and torsion-free is called the \textbf{Levi-Civita connection}. It is characterized by two properties:
\begin{enumerate}
    \item \textbf{Metric Compatibility}: For any vector fields $X, Y, Z$ on $\mathcal{M}$,
    \begin{align}
    X \langle Y, Z \rangle = \langle \nabla_X Y, Z \rangle + \langle Y, \nabla_X Z \rangle.
    \end{align}
    This means parallel transport preserves inner products (and thus lengths and angles).
    \item \textbf{Torsion-Free}: $\nabla_X Y - \nabla_Y X = [X, Y]$, where $[\cdot,\cdot]$ is the Lie bracket.
\end{enumerate}

Given a smooth curve $\gamma(t): [a,b] \to \mathcal{M}$, a vector field $V(t)$ along $\gamma$ is \textbf{parallel transported} if $\nabla_{\dot{\gamma}(t)} V(t) = 0$. The \textbf{parallel transport map} $\mathrm{PT}_{\gamma}: T_{\gamma(a)}\mathcal{M} \to T_{\gamma(b)}\mathcal{M}$ is the linear isometry that maps a vector at the start of the curve to its parallel-transported version at the end.

\subsection{Curvature and Holonomy}

The failure of parallel transport to be path-independent is measured by the \textbf{curvature tensor} $\mathbf{R}$, a $(1,3)$-tensor defined as:
\begin{align}
\mathbf{R}(X,Y)Z = \nabla_X \nabla_Y Z - \nabla_Y \nabla_X Z - \nabla_{[X,Y]} Z.
\end{align}
If $\mathbf{R} \equiv 0$ everywhere, the manifold is \textbf{flat}, and parallel transport depends only on the endpoints, not the path.

For a closed loop (cycle) $\mathcal{C}$ starting and ending at $p$, the composition of parallel transports along $\mathcal{C}$ yields a linear transformation $\mathbf{H}(\mathcal{C}): T_p\mathcal{M} \to T_p\mathcal{M}$, called the \textbf{holonomy} of $\mathcal{C}$. If $\mathbf{H}(\mathcal{C}) = \mathrm{id}$ for all loops $\mathcal{C}$, then the curvature vanishes ($\mathbf{R} \equiv 0$), and the manifold is flat. Conversely, if $\mathbf{R} \not\equiv 0$, then $\mathbf{H}(\mathcal{C}) \neq \mathrm{id}$ for some non-contractible loop.

\subsection{Ricci Curvature and Volume Change}

The \textbf{Ricci curvature} $\mathrm{Ric}$ is a $(0,2)$-tensor obtained by contracting the curvature tensor: $\mathrm{Ric}(X,Y) = \sum_{i=1}^M \mathbf{R}(e_i, X, Y, e_i)$, where $\{e_i\}$ is an orthonormal basis.

On a geodesic $\gamma(t)$ with unit speed $\dot{\gamma}(t)$, the Ricci curvature governs the rate of change of the volume element along the geodesic. In Gaussian normal coordinates centered on $\gamma(0)$, the determinant of the metric satisfies the following expansion for small $t$:
\begin{align}
\det \mathbf{G}(\gamma(t)) = 1 - \frac{1}{3} \mathrm{Ric}(\dot{\gamma}(0)) t^2 + \mathcal{O}(t^3).
\end{align}
Thus, the ratio of volume elements between two nearby points $p=\gamma(0)$ and $q=\gamma(t)$ is approximately:
\begin{align}
\frac{\sqrt{\det \mathbf{G}(q)}}{\sqrt{\det \mathbf{G}(p)}} \approx 1 - \frac{1}{6} \mathrm{Ric}(\dot{\gamma}(0)) t^2.
\end{align}
This implies:
\begin{enumerate}
\item $\mathrm{Ric} > 0$: Volume shrinks along the geodesic (elliptic/positive curvature region).
\item $\mathrm{Ric} < 0$: Volume expands along the geodesic (hyperbolic/negative curvature region).
\item $\mathrm{Ric} = 0$: Volume is locally preserved (flat region).
This relationship underpins our use of the metric volume ratio $\det \mathbf{G}_i / \det \mathbf{G}_j$ as a proxy for estimating Ricci curvature along graph edges.
\end{enumerate}
\subsection{Smoothness and Harmonic Functions}

A scalar function $f: \mathcal{M} \to \mathbb{R}$ is \textbf{harmonic} if it satisfies $\Delta f = 0$, where $\Delta$ is the Laplace-Beltrami operator. On a compact manifold without boundary, harmonic functions are constant. More importantly, solutions to $\Delta f = 0$ are infinitely differentiable ($C^\infty$) by elliptic regularity theory.

In the context of our framework, minimizing the Dirichlet energy $\sum_{(i,j)\in \mathcal{E}} (f_i - f_j)^2$ over the graph $\mathcal{G}_{\text{data}}$ is a discrete approximation to minimizing $\int_\mathcal{M} \|\nabla f\|^2 dV$. Minimizing this energy drives $f = \frac{1}{2}\log \det \mathbf{G}$ toward a harmonic function on the underlying continuous manifold. By elliptic regularity, this ensures that the log-volume density $f$ is smooth, implying the metric $\mathbf{G}$ has continuous first derivatives ($C^1$). This justifies our assumption that the learned manifold is geometrically well-behaved, free from pathological singularities.

\subsection{Cartan's Method of Moving Frame}

The renowned Cartan's Method \cite{Eliot24Cartan} offers a principled way to explore the geometry of Riemannian manifolds, establishing a profound connection between differential calculus and geometry. 
Specifically, \'Elie Cartan introduces the concept of \textbf{frame} to characterize the local geometry, which is then extended to a global manifold through `` moving frame''. 
Although \'Elie Cartan laid the mathematical principle, its deep learning theory and methodology remain largely unexplored. Our work seeks to bridge this gap.

\subsection{Connection to Our Framework}

Our work does not assume a pre-existing manifold. Instead, we posit that the embedding space $\mathbb{R}^d$ induced by a smooth GNN $f_{\text{GNN}}$ contains a low-dimensional submanifold $\mathcal{M}$, whose intrinsic geometry encodes the generalizable rules of graph data. The Adaptive Frame Bank (AFB) samples the local tangent spaces $T_p\mathcal{M}$. The optimal isometric alignment (Theorem 5.6) approximates the Levi-Civita connection's action between sampled points. The cycle-consistency loss enforces trivial holonomy, mimicking flatness. The log-determinant smoothness regularization drives the volume element toward harmonicity. Together, these components constitute a learning procedure that constructs a \textbf{continuous, smooth, nearly-flat Riemannian manifold} $\mathcal{M}$ within the latent space of a neural network, using only discrete graph samples and their embeddings. The graph structure $\mathcal{G}_{\text{data}}$ serves as a sampling mesh, not the domain of geometry.

\begin{algorithm}[htbp]
\caption{Training Procedure for \textsc{GraphGlue}}
\label{alg. training}
\begin{algorithmic}[1]
\Require Epoch index $e$, optimizer, datasets $\mathcal{D}_{\text{mix}}, \mathcal{D}_{\text{single}}, \mathcal{D}_{\text{multi}}$ with data name mappings.
\Ensure Updated model parameters $\Theta$.

\Statex \textbf{// Stage 1: Mix Training for Local Construction}
\State Initial $\mathcal{L}_\text{local}=0$
\For{each batch $\mathcal{B}$ in $\mathcal{D}_{\text{mix}}$}
    \State $\mZ, \mW \gets$ \textsc{GraphGlue}($\mathcal{B}$)
    \State $\mathcal{L}_{\text{local}} \gets \text{ContrastiveLoss}(\mZ)$
    \If{$e \geq \text{warmup\_epochs}$}
        \State $\mathcal{L}_{\text{proto}} \gets \text{PrototypeLoss}(\mZ, \text{data\_name})$
        \State $\mathcal{L}_{\text{local}} \gets \mathcal{L}_{\text{local}} + \mathcal{L}_{\text{proto}}$
    \EndIf
    \State $\nabla_\theta \mathcal{L}_{\text{local}} \gets \text{Backward}(\mathcal{L}_{\text{local}})$
    \State $\text{OptimizerStep}()$
    \State Update Prototypes with $\mZ, \mW$ in Eq. (\ref{eq. ema})
\EndFor

\Statex \textbf{// Stage 2: Mix Training for Global Manifold Skeleton}
\For{each batch $\mathcal{B}$ in $\mathcal{D}_{\text{mix}}$}
    \State $\mZ,\mW \gets \textsc{GraphGlue}(\mathcal{B})$
    \State $\mathcal{E}_{\text{knn}} \gets \text{Cross-Dataset\_KNN\_Graph}(\mZ, \text{data\_name})$
    \State $\mathcal{T} \gets \text{SampleTrianglePaths}(\mathcal{E}_{\text{knn}}, \text{Number\_Sampled}, T_{\text{sample}})$
    \State $\mathcal{L}_{\text{geo}} \gets 0$
    \For{$t = 1$ to $T_{\text{sample}}$}
        \State $\mathcal{L}_{\text{geo}} \mathrel{+}= \text{GeometricLoss}(\mW, \mathcal{T}[t])$ in Eq. (\ref{eq. curvature loss}) and Eq. (\ref{eq. holonomy loss})
    \EndFor
    \State $\mathcal{L}_{\text{geo}} \gets \mathcal{L}_{\text{geo}} / T_{\text{sample}}$
    \State $\nabla_\theta \mathcal{L}_{\text{geo}} \gets \text{Backward}(\mathcal{L}_{\text{geo}})$
    \State $\text{OptimizerStep}()$
\EndFor

\Statex \textbf{// Stage 3: Refine Local Manifold Structure For Each Dataset}
\For{each dataset $\mathcal{D}_s$ in $\mathcal{D}_{\text{single}}$}
    \State Load graph data $G_s$ with edge set $\mathcal{E}_s$
    \State $\mathcal{T}_s \gets \text{SampleTrianglePaths}(\mathcal{E}_s, \text{Number\_Sampled}, T_{\text{local}})$
    \For{$t = 1$ to $T_{\text{local}}$}
        \State Construct mini-graph batch $\mathcal{B}_t$ from $\mathcal{T}_s[t]$
        \State $\mathbf{z}, \mathbf{z}_{\text{tan}} \gets \textsc{GraphGlue}(\mathcal{B}_t)$
        \State $\mathcal{L}_{\text{refine}} \gets \text{GeometricLoss}(\mW, \mathcal{T}_s[t])$ in Eq. (\ref{eq. curvature loss}) and Eq. (\ref{eq. holonomy loss})
        \State $\nabla_\theta \mathcal{L}_{\text{refine}} \gets \text{Backward}(\mathcal{L}_{\text{refine}})$
        \State $\text{OptimizerStep}()$
    \EndFor
\EndFor

\For{each dataset $\mathcal{D}_m$ in $\mathcal{D}_{\text{multi}}$}
    \For{each batch $(\mathcal{B})$ in $\mathcal{D}_m$}
        \State $\mZ,\mW \gets \textsc{GraphGlue}(\mathcal{B})$
        \State $\mathcal{E}_{\text{knn}} \gets \text{Intra-Dataset\_KNN\_Graph}(\mZ)$
        \State $\mathcal{T} \gets \text{SampleTrianglePaths}(\mathcal{E}_{\text{knn}},\text{Number\_Sampled}, T_{\text{sample}})$
        \For{$t = 1$ to $T_{\text{sample}}$}
        \State $\mathcal{L}_{\text{geo}} \mathrel{+}= \text{GeometricLoss}(\mW, \mathcal{T}[t])$ in Eq. (\ref{eq. curvature loss}) and Eq. (\ref{eq. holonomy loss})
        \EndFor
        \State $\mathcal{L}_{\text{geo}} \gets \mathcal{L}_{\text{geo}} / T_{\text{sample}}$
        \State $\nabla_\theta \mathcal{L}_{\text{geo}} \gets \text{Backward}(\mathcal{L}_{\text{geo}})$
        \State $\text{OptimizerStep}()$
        \EndFor
\EndFor

\State \Return Optimized Model parameters $\Theta^*$
\end{algorithmic}
\end{algorithm}

\section{Algorithms} 
\subsection{Multi-Domain Pre-training}
The training procedure is given in Algorithm \ref{alg. training}, which consists of the data loader for pre-processing.

To use a unified interface, we process all the graph datasets at the graph level, which means each data sample in the dataset is a graph. Taking Reddit for instance, we extract $2$-hop neighborhood ego-subgraph as a data sample for each node, and store the global edge index. 

As we have many datasets from multiple domains, we need to build a mixture graph dataset loader that can iteratively load a batch of data from different datasets. For each batch, we uniformly sample from all source graph datasets.

During training, in each epoch, we first build locality recognition using graph contrastive learning \cite{veličković2018deep, qiu2020gcc} that distinguishes the different semantics from different graph datasets. Meanwhile, we will update the Riemannian prototypes using EMA \cite{izmailov2019averagingweightsleadswider, Daniel2024exponential} for each dataset as in Eq. (\ref{eq. ema}). After warm-up epochs, we still use sample-prototypes contrastive learning that guarantees the prototypes are truly around the center of each dataset distribution. Second, we build a cross-dataset KNN graph that builds a rough skeleton of the manifold, and learn from the regularization in Eq. (\ref{eq. holonomy loss}) and Eq. (\ref{eq. curvature loss}). Finally, we refine the region for each dataset. We load every dataset and compute the geometric regularization, like the second step.

Here, since sampling triangles from a graph costs many computational resources, especially for large-scale graphs, we replace it with sampling pairs of adjacent edges for effective implementation.

\subsection{Complexity Analysis}
\label{sec:complexity1}
We list the cost of the key modules of GraphGlue in Table \ref{tab:complexity}, where $B$ is the batch size, the number of graph samples in a batch; $|V|, |E|$ are the average nodes/edges per graph in a batch; $d$ is hidden dimension, setting to 512 commonly. $M$ is number of nodes $\mathbb{P}$ in $(k,M)$-sparse perturbation, also the dimension of the manifold, commonly set to 32. $k_s$ is number of selected top-$k_s$ nodes in the sparse perturbation. $T_s$ is number of sampled triangle paths, NOT all triangles. For more effectiveness, we sample pairs of adjacent edges to approximate closed triangle paths.

\begin{table}[htbp]
\centering
\caption{Computational and memory complexity of each module in GraphGlue.}
\label{tab:complexity}
\resizebox{\textwidth}{!}{%
\begin{tabular}{lcc}
\toprule
Module & Computational complexity & Memory complexity \\
\midrule
$(k, M)$-Sparse Perturbation & $\mathcal{O}(k_s M B)$ & $\mathcal{O}(B M (k_s + d))$ \\
Adaptive Orthogonal Frame & $\mathcal{O}(B(|V| + |E| + M^2)d)$ & $\mathcal{O}(B M d)$ \\
Matrix form of metric tensor & $\mathcal{O}(B M d)$ & $\mathcal{O}(B M)$ \\
$\mathcal{L}_{\text{holo}}$ and $\mathcal{L}_{\text{curv}}$ & $\mathcal{O}(T_s M)$ & $\mathcal{O}(T_s)$ \\
Riemannian prototypes and $\mathcal{L}_{\text{proto}}$ & $\mathcal{O}(K B d + K(d + M))$ & $\mathcal{O}(K(d + M))$ \\
Riemannian MoE & $\mathcal{O}(K B d)$ & $\mathcal{O}(K B)$ \\
\bottomrule
\end{tabular}%
}
\end{table}

Thus, the total computational cost in pretraining phase is $\mathcal{O}(B (\lvert V \rvert + \lvert E \rvert + M^2 + K) d + T_s M)$, 
and the adaption (per graph) costs $\mathcal{O}((\lvert V \rvert + \lvert E \rvert) d + K(d+M) +  T_sM)$. 
That is, in *GraphGlue* scales linearly with respect to the graph size. In our experiment, we pretrain the model on large-scale datasets, e.g., ogbn-arxiv and Reddit.

\subsection{Complexity Comparison with Other GFMs}
\label{sec:complexity2}
We compare the proposed GraphGlue to other GFM in pretraining and adaptation phases regarding the total computational cost. The results are summarized in Table \ref{tab:complexity_comparison}.

\begin{table}[htbp]
\centering
\caption{Comparison of computational complexity across graph few-shot learning methods.}
\label{tab:complexity_comparison}
\resizebox{\textwidth}{!}{%
\begin{tabular}{lcc}
\toprule
Model & Pretraining & Adaptation (per graph sample)  \\
\midrule
PRODIGY & $\mathcal{O}(B |V|^2 d)$ & $\mathcal{O}((|V| + |E|) d + |V|^2)$   \\
GFT & $\mathcal{O}(B (|V| + |E|) d + B T h)$ & $\mathcal{O}((|V| + |E|) d + T h)$  \\
RAGraph & $\mathcal{O}(B (|V| + |E|) d + B |E_r| d)$ & $\mathcal{O}((|V| + |E|) d + |E_r| d)$  \\
SAMGPT & $\mathcal{O}(B (|V| + |E|) d + B k_s d)$ & $\mathcal{O}((|V| + |E|) d + k_p d)$  \\
GCOPE & $\mathcal{O}(B (|V| + |E|) d + B K_c d)$ & $\mathcal{O}((|V| + |E|) d + K_c d)$  \\
MDGFM & $\mathcal{O}(B (|V| + |E|) d + B |V|^2)$ & $\mathcal{O}((|V| + |E|) d + |V|^2)$  \\
\textbf{GraphGlue} & $\mathcal{O}(B (|V| + |E| + M^2 + K) d + T_s M)$ & $\mathcal{O}((|V| + |E|) d + K(d+M) + T_s M)$  \\
\bottomrule
\end{tabular}%
}
\end{table}

\textbf{Notes}
\begin{itemize}
    \item PRODIGY: In-context learning requires full attention over prompt and query nodes;
    \item GFT: $T$: number of trees, $h$: tree height; tree construction adds overhead;
    \item RAGraph: $|E_r|$: retrieved edges from external library;
    \item SAMGPT: $k_s$: number of structure tokens, $k_p$: prompt tokens;
    \item GCOPE: $K_c$: number of virtual coordinators;
    \item MDGFM: Graph Structure Learning (GSL) involves dense adjacency refinement;
    \item GraphGlue: $M=32$, $T_s \ll |E|.$
\end{itemize}
Furthermore, we compare the memory cost to GCOPE and MDGFM on six datasets. 
$[1,2,3, ..., 6]$ denotes that we incrementally include ogbn-arxiv, computers, FB15k-237, Reddit, PROTEINS, HIV in the pretraining dataset. 
Under the setting of $512$ batch size, $[10,10]$ neighbor sampler size, $d=512$. Results on GPU memory cost (GB) are collected in Table \ref{tab:memory_comparison}.

\begin{table}[htbp]
\centering
\caption{Memory Cost. Lower values indicate better efficiency.}
\label{tab:memory_comparison}
\begin{tabular}{lcccccc}
\toprule
Model & 1 & 2 & 3 & 4 & 5 & 6 \\
\midrule
GCOPE     & 18.39 & 19.11 & 21.12 & OOM & OOM & OOM \\
MDGFM     & 19.71 & 21.67 & 29.35 & OOM & OOM & OOM \\
GraphGlue & 12.53 & 15.07 & 15.73 & 16.87 & 28.67 & 29.21 \\
\bottomrule
\end{tabular}
\end{table}

\section{Related Work}
\label{app:related_work}

\subsection{Graph Foundation Models}

Graph Foundation Models (GFMs) aim to provide pre-trainable, general-purpose deep learning architectures for graph-structured data \cite{wang2025graph, liu2025graph}. Recently, researchers have extended the capabilities of Large Language Models (LLMs) to text-attributed graphs, enabling cross-domain transfer learning through textual descriptions \cite{zhu2025llm, xia2024opengraph, tang2024graphgpt, ren2024survey, chen2024llaga}. Additionally, GFMs have been developed for various specialized domains, such as knowledge graphs \cite{huang2025how, luo2025gfmraggraphfoundationmodel}, recommender systems \cite{wu2025graphfoundationmodelsrecommendation}, and molecular graphs \cite{xia2023molebert, sypetkowski2024on}. Given the prevalence of text-free graphs, recent efforts have focused on constructing general-purpose models via multi-domain pre-training  \cite{yuan2025how, chen2025autogfm, wang2025multi}.

\subsection{Multi-domain Graph Pre-training}
In graph pre-training, Graph Neural Networks (GNNs) are trained by self-supervised learning—either generative \cite{hou2022graphmae} or contrastive \cite{veličković2018deep, qiu2020gcc}.
In light of the semantic heterogeneity across different domains, several methods have been proposed to learn shared or invariant knowledge \cite{yuan2025how, chen2025autogfm, wang2025multi}. Despite the encouraging results, the theoretical foundations of how knowledge is integrated and transferred remain underexplored.

In graph pre-training, Graph Neural Networks (GNNs) are trained using self-supervised learning—either generative \cite{hou2022graphmae} or contrastive \cite{veličković2018deep, qiu2020gcc}—to capture intrinsic semantics from unlabeled data. While traditional pre-training typically operates within a single domain, multi-domain graph pre-training has recently attracted growing interest. However, integrating knowledge across diverse domains remains challenging due to significant semantic heterogeneity. Several methods have been proposed to learn shared or invariant knowledge using advanced techniques \cite{yuan2025how, chen2025autogfm, wang2025multi}. For instance, \cite{chen2025autogfm} addresses architecture inconsistency by using disentangled learning to adaptively customize network architectures based on invariant graph patterns. Meanwhile, \cite{yuan2025how} aligns multi-domain features with domain-invariant aligners and uses a graph spectral-based error bound to theoretically guide knowledge transfer. Despite the encouraging results, the theoretical foundations of how knowledge is integrated and transferred in this context remain underexplored.

\subsection{Graph Fine-tuning and Prompt Learning}
The alignment of pre-trained graph models with downstream tasks necessitates an adaptation phase, and existing adaptation methods can be roughly categorized into two paradigms: graph fine-tuning and prompt learning. Concretely, graph fine-tuning adapts the model behavior using limited target-domain data \cite{sun2024fine, wang2024gft, wang2024towards}. For example, \cite{sun2024fine} fine-tunes the entire model on downstream data. A more common strategy is to keep the majority of the pre-trained parameters frozen and only train a simple classification head, a technique employed by models like \cite{zhao2024all, liu2023one}. \cite{bevilacqua2025holographic} proposes a unique adaptation strategy: it freezes the large, pre-trained expansion map and only trains a smaller reduction map and a task-specific head for the new task. And recent advances introducing parameter-efficient fine-tuning methods such as low-rank adaptation \cite{yang2025graphlora}. On the contrary, graph prompting keeps the pre-trained parameters frozen and enhances performance by inserting learnable prompt vectors \cite{yu2025samgpt, liu2023graphprompt, yu2024hgprompt, yu2024multigprompt, yu2025nonhomophilic}. For instance, \cite{liu2023graphprompt} unifies tasks under a subgraph similarity template and employs a learnable vector to guide the READOUT function. Other approaches generate more adaptive prompts, such as PRONOG \cite{yu2025nonhomophilic}, which uses a conditional network to create node-specific prompts for non-homophilic graphs, and PRODIGY \cite{huang2023prodigy}, which formulates a novel prompt graph for in-context learning. To tackle more complex scenarios, several works have developed dual-prompting mechanisms. \cite{Jiao2025HGMP,yu2024hgprompt} introduce a feature prompt and a heterogeneity prompt to bridge the gap between homogeneous and heterogeneous graphs. \cite{yu2024textfree} leverages a composed prompt for task-specific knowledge and an open prompt for global knowledge from multiple pre-training tasks. Similarly, \cite{yu2025samgpt} designs holistic and specific structural prompts for cross-domain adaptation. Yet, how to quantify the transfer effort to the target domain remains an open issue.

\subsection{Riemannian Graph Representation Learning}
In recent years, Riemannian manifolds have emerged as a promising alternative to traditional Euclidean spaces in graph representation learning \cite{www26sun,nips25sun,icml24sun,nips24sun,tpami2020sun,ijcai23sun,icdm23sun, aaai24sun}. Most existing Riemannian models are tailored to specific tasks \cite{grover2025curvgad}, and often leverage the particular manifolds, such as the hyperbolic space \cite{chami2019hyperbolic, yang2025hyperbolicgraphneuralnetworks}, the spherical space \cite{liu2022spherical}, the symmetric positive definite manifold \cite{Ju_2024}, and their products \cite{gu2018learning, bachmann2020constant,aaai22selfMG} and quotients \cite{xiong2022pseudo}. Very recently, \cite{sun2025riemanngfm} introduces a structural vocabulary and designs a new GNN backbone on the product manifold for general-purpose graph foundation model. In contrast to backbone architecture design, our focus lies in developing a framework for multi-domain pre-training and characterizing a general manifold that underlies diverse graphs.

\section{Empirical Details}
\label{app:exp_setup}
\subsection{Dataset Description}

This section provides a detailed description of the 12 benchmark datasets used in our experiments. For a summary of their statistics, please refer to Table~\ref{tab:dataset_statistics}.

\begin{table}[htbp]
    \centering
    \caption{Statistics of 12 datasets used in our experiment.}
    \label{tab:dataset_statistics}
    \resizebox{\textwidth}{!}{
    \begin{tabular}{ccccccc}
        \toprule
        \textbf{Domain} &\textbf{Dataset} & \textbf{Task} & \textbf{\# Graphs} & \textbf{Avg. \#Nodes} & \textbf{Avg. \#Edges} &\textbf{\# Classes} \\
        \midrule
        \multirow{2}{*}{Citation} & \texttt{PubMed} & Node & 1  & 19,717     & 88,648 &  3     \\
        & \texttt{Arxiv}& Node  & 1       & 169,343    & 1,166,243 &  40     \\
        \multirow{2}{*}{Co-purchase} & \texttt{Computers} & Node  & 1       & 13,752   & 491,722  &  10     \\
        & \texttt{Photo} & Node & 1  & 7,650     & 238,162  &  8     \\
        \multirow{2}{*}{Social Network} & \texttt{Reddit} & Node  & 1       & 232,965 & 114,615,892 & 41    \\
        & \texttt{FacebookPagePage} &  Node & 1  & 22,470     & 342,004   &  4     \\
        \multirow{2}{*}{Knowledge Graph} & \texttt{FB15K\_237} & Edge  & 1       & 14,541   & 310,116 &  237   \\
        & \texttt{WordNet18RR} & Edge & 1  & 40,943      & 93,003  &  11     \\
        \multirow{2}{*}{Bioinformatics} & \texttt{PROTEINS} & Graph & 1,113   & 39.1     & 145.6  &  2     \\
        & \texttt{MUTAG} & Graph & 188  & 17.9     & 39.6  &  2     \\
        \multirow{2}{*}{Molecule} & \texttt{HIV} & Graph & 41,127  & 25.5     & 27.5  &  2     \\
        & \texttt{Lipophilicity} & Graph & 4,200  & 27.0    & 59.0  &  2     \\
        
        \bottomrule
    \end{tabular}}
\end{table}

Our experiments utilize a diverse set of 12 benchmark datasets. For citation networks, we include \texttt{PubMed}, where nodes represent scientific publications, and the task is to classify their category, as well as \texttt{Arxiv}, a large-scale network for academic paper classification. In the co-purchase domain, both \texttt{Computers} and \texttt{Photo} are sourced from Amazon; in these graphs, nodes are products, edges signify frequent co-purchases, and the task is to predict product categories. For social networks, \texttt{Reddit} is constructed from posts, with the goal of predicting a post's community, while \texttt{FacebookPagePage} consists of official pages with edges as mutual likes, and the task is to classify the page's category. Our knowledge graph datasets include \texttt{WordNet18RR}, where the task is to classify the semantic relation between synsets, and \texttt{FB15K\_237}, used to predict the relation type between entities. Finally, for graph-level classification, we use several benchmarks: \texttt{PROTEINS} and \texttt{MUTAG} are bioinformatics datasets for binary classification, with the latter predicting compound mutagenicity; similarly, \texttt{HIV} and \texttt{Lipophilicity} are molecular datasets for binary classification tasks that predict molecular properties.

\subsection{Baselines}

We evaluate our model against a comprehensive set of baselines from three main categories: Supervised GNNs, Self-Supervised GNNs, and Graph Foundation Models.
\paragraph{Supervised GNNs}
This category includes foundational GNNs that are trained from scratch in a supervised manner for a specific downstream task.
\begin{itemize}
    \item GCN \cite{kipf2017semisupervised} is a widely used GNN model that generates node representations by aggregating information from local node neighborhoods. It employs a mean-pooling approach for neighborhood aggregation to integrate information from adjacent nodes.
    \item GraphSAGE \cite{hamilton2017inductive} is an inductive representation learning framework designed for large graphs. It utilizes a mean-pooling propagation rule and often employs a neighborhood sampling approach to scale efficiently to large-scale graphs.
    \item GIN \cite{xu2018how} is a state-of-the-art GNN that is commonly used as a powerful supervised baseline, particularly for graph classification tasks.
\end{itemize}

\paragraph{Self-Supervised GNNs}
These methods first pre-train a GNN encoder on unlabeled graph data using self-supervised objectives and are then fine-tuned for downstream tasks. They represent the predominant pre-training paradigm in graph machine learning. 
\begin{itemize}
    \item DGI \cite{veličković2018deep} learns node representations by maximizing the mutual information between local patch representations and a global graph summary vector. Its contrastive objective is notably not based on random walks.
    \item GraphMAE \cite{hou2022graphmae} operates by masking a portion of node features and training a GNN-based architecture to reconstruct them. It utilizes a scaled cosine error for reconstruction to improve training robustness.
    \item GCC \cite{qiu2020gcc} is a self-supervised pre-training framework designed to capture transferable structural representations across multiple networks. Its pre-training task is subgraph instance discrimination, using contrastive learning to distinguish between augmented views of a node’s local subgraph and those from other nodes.
\end{itemize}

\paragraph{Graph Foundation Models}
This group comprises recent, large-scale models pre-trained on diverse datasets and fine-tuned for strong generalization. They are the most direct competitors to our work and represent the current state-of-the-art.
\begin{itemize}
    \item PRODIGY \cite{huang2023prodigy} enables in-context learning over graphs by formulating tasks with a novel prompt graph representation. This structure connects prompt examples with queries, allowing the model to perform new tasks without updating its parameters.
    \item GFT \cite{wang2024gft} rethinks transferable patterns as computation trees derived from the GNN message-passing process. It uses a tree reconstruction task for pre-training and unifies downstream tasks as tree classification.
    \item RAGraph \cite{jiang2024ragraph} is a retrieval-augmented framework that improves GNN generalization by retrieving knowledge from an external library of toy graphs. The retrieved information is injected into the target graph using a message-passing prompt mechanism to enhance performance.
    \item SAMGPT \cite{yu2025samgpt} is a text-free graph foundation model for multi-domain pre-training and cross-domain adaptation. It uses learnable structure tokens to harmonize structural differences across domains during pre-training and dual prompts to adapt knowledge to new target domains.
    \item GCOPE \cite{zhao2024all} mitigates negative transfer during cross-domain pre-training by introducing coordinators, which are virtual nodes that act as bridges between disparate graph datasets. This approach helps create a unified representation from multiple graphs.
    \item MDGFM \cite{wang2025multi} focuses on achieving robust knowledge transfer through topology alignment. It employs a Graph Structure Learning (GSL) module to refine graph structures, reduce noise, and learn domain-invariant knowledge.
\end{itemize}

\subsection{Implementation notes}
\label{app:implementation}
Our primary framework is a leave-one-out cross-domain evaluation. We pre-train models on five source datasets and evaluate them on a single held-out target dataset. This protocol applies to Self-Supervised GNNs and Graph Foundation Models. In contrast, supervised GNNs are not pre-trained and are instead trained directly from scratch on the target task. All downstream evaluations use a few-shot fine-tuning setting. For the pre-trained models, we use only k labeled samples per class from the target task for fine-tuning. In our experiments, k is set to 1 and 5. After setting aside these training samples, the remaining data is randomly split into a validation set (10\%) and a test set (90\%). We evaluate performance across three downstream tasks: node classification, Link Classification, and graph classification. For node and Link Classification, we use Accuracy (ACC) as the evaluation metric. For graph classification, we use Area Under the Curve (AUC). To ensure robust results, the final reported score for each experiment is the average over 10 runs with different random data splits.

For pretraining, we extract the $2$-hop ego-graph with $10$ neighbors each hop for single graph datasets and adopt a $2$-layer GCN \cite{kipf2017semisupervised} as backbone model. The dimension of the manifold, or the number of virtual nodes in $(k, M)$-sparse perturbation, is set to $M=32$ with $k=15$. For the KNN construction for mixed data training in Algorithm \ref{alg. training} and multi-graph datasets training, we also set $k=15$. The dropout rate is $0.1$ and the learning rate is $1e^{-4}$. The model input dimension is $128$. For different datasets, we unify the input dimension by random projection or SVD. For the knowledge graph datasets, we use Node2Vec \cite{grover2016node2vec} to get the node embeddings. The hidden dimension is $512$. The temperature in contrastive learning is $1.0$.
The optimizer is Adam \cite{diederik2015adam}, with a cosine annealing schedule \cite{loshchilov2017sgdr}.

Table \ref{app:tab_hyperparameter_Arxiv} to Table \ref{app:tab_hyperparameter_HIV} show the hyperparameters in few-shot transferring: the learning rate $lr$, dropout rate $drop$, the KNN number $k$ between prototypes and target graph data points, and the balance coefficient $\lambda$. We adopt the classifier head with only a linear layer for the node or graph classification task. For the link classification task, we simply adopt a bilinear layer as a classifier. The gated function for Riemannian MoE is an MLP with 2 layers.
\begin{table}[htbp]
    \centering
    \captionsetup{justification=centering}
    
    \begin{minipage}[t]{0.48\textwidth}
        \centering
        \caption{Hyper-parameters for 1-shot and 5-shot cross-domain transfer on \texttt{Arxiv}.}
        \label{app:tab_hyperparameter_Arxiv}
        \begin{tabular}{c|cccc}
            \toprule
            & $lr$ & $drop$ & $k$ & $\lambda$ \\
            \midrule
            1-shot & $1e^{-3}$ & $0.1$ & $3$ & $1.0$\\
            \midrule
            5-shot & $1e^{-3}$ & $0.15$ & $3$ & $1.0$ \\
            \bottomrule
        \end{tabular}
    \end{minipage}
    \hfill
    \begin{minipage}[t]{0.48\textwidth}
        \centering
        \caption{Hyper-parameters for 1-shot and 5-shot cross-domain transfer on \texttt{Computers}.}
        \label{app:tab_hyperparameter_Computers}
        \begin{tabular}{c|cccc}
            \toprule
            & $lr$ & $drop$ & $k$ & $\lambda$ \\
            \midrule
            1-shot & $1e^{-3}$ & $0.2$ & $3$ & $1.0$\\
            \midrule
            5-shot & $1e^{-3}$ & $0.2$ & $3$ & $1.0$ \\
            \bottomrule
        \end{tabular}
    \end{minipage}

    \vspace{1em}

    \begin{minipage}[t]{0.48\textwidth}
        \centering
        \caption{Hyper-parameters for 1-shot and 5-shot cross-domain transfer on \texttt{Reddit}.}
        \label{app:tab_hyperparameter_Reddit}
        \begin{tabular}{c|cccc}
            \toprule
            & $lr$ & $drop$ & $k$ & $\lambda$ \\
            \midrule
            1-shot & $1e^{-3}$ & $0.1$ & $3$ & $0.1$\\
            \midrule
            5-shot & $1e^{-3}$ & $0.15$ & $3$ & $0.1$ \\
            \bottomrule
        \end{tabular}
    \end{minipage}
    \hfill
    \begin{minipage}[t]{0.48\textwidth}
        \centering
        \caption{Hyper-parameters for 1-shot and 5-shot cross-domain transfer on \texttt{FB15k\_237}.}
        \label{app:tab_hyperparameter_FB15k_237}
        \begin{tabular}{c|cccc}
            \toprule
            & $lr$ & $drop$ & $k$ & $\lambda$ \\
            \midrule
            1-shot & $1e^{-4}$ & $0.5$ & $3$ & $0.5$\\
            \midrule
            5-shot & $1e^{-4}$ & $0.5$ & $3$ & $0.5$ \\
            \bottomrule
        \end{tabular}
    \end{minipage}

    \vspace{1em}

    \begin{minipage}[t]{0.48\textwidth}
        \centering
        \caption{Hyper-parameters for 1-shot and 5-shot cross-domain transfer on \texttt{PROTEINS}.}
        \label{app:tab_hyperparameter_PROTEINS}
        \begin{tabular}{c|cccc}
            \toprule
            & $lr$ & $drop$ & $k$ & $\lambda$ \\
            \midrule
            1-shot & $1e^{-3}$ & $0.1$ & $1$ & $2.0$\\
            \midrule
            5-shot & $1e^{-3}$ & $0.15$ & $1$ & $2.0$ \\
            \bottomrule
        \end{tabular}
    \end{minipage}
    \hfill
    \begin{minipage}[t]{0.48\textwidth}
        \centering
        \caption{Hyper-parameters for 1-shot and 5-shot cross-domain transfer on \texttt{HIV}.}
        \label{app:tab_hyperparameter_HIV}
        \begin{tabular}{c|cccc}
            \toprule
            & $lr$ & $drop$ & $k$ & $\lambda$ \\
            \midrule
            1-shot & $1e^{-3}$ & $0.1$ & $2$ & $2.0$\\
            \midrule
            5-shot & $1e^{-3}$ & $0.15$ & $2$ & $2.0$ \\
            \bottomrule
        \end{tabular}
    \end{minipage}

\end{table}

\section{Additional Results}
\label{app:additional_results}

\subsection{Supplementary Results}

We provide additional empirical results to further validate our framework. We present comprehensive results for both cross-domain transfer (Table \ref{app:1-shot_cross-domain_results} and \ref{app:5-shot_cross-domain_results}) and intra-domain transfer (Table \ref{app:1-shot_intra-domain_results} and \ref{app:5-shot_intra-domain_results}) in few-shot settings. Furthermore, an ablation study in Table \ref{app:ablation} demonstrates the effectiveness of the key components of our model. We also include an additional visualization of the pre-trained manifold in Figure \ref{app:visualization}, where we first project the $512$-dimensional embeddings into $3$-D using t-SNE \cite{laurens2008visualizing} and then apply RBF interpolation \cite{wright2006scatter} to generate a smooth surface that approximates the learned global Riemannian manifold. 

\subsection{Comprehensive Ablation Study}
\label{sec:more ablation}
We conduct a further ablation study to verify the effectiveness of EMA, prototype loss and Riemannian MoE. Specifically, we introduce 3 variants of GraphGlue, described as follows:
\begin{itemize}
    \item  ``w/o EMA'' means that we replace EMA with the common average of a batch of embeddings;
    \item  ``w/o $\mathcal{L}_{proto}$'' means  pretraining without prototype loss;
    \item   ``w/o Riemannian MoE'' means that during adaption, Riemannian MoE module is replaced by a typical prompting scheme.
\end{itemize}
 In Table \ref{tab:new ablation}, both results on 1-shot setting and 5-shot setting demonstrate the effectiveness of the proposed components. 

\begin{table}[htbp]
\centering
\caption{Ablation study of GraphGlue's key components.}
\label{tab:new ablation}
\resizebox{\textwidth}{!}{%
\begin{tabular}{llcccccc}
\toprule
& \textbf{Variants} & \textbf{Arxiv} & \textbf{Computers} & \textbf{Reddit} & \textbf{FB15k\_237} & \textbf{PROTEINS} & \textbf{HIV} \\
\midrule
\multirow{3}{*}{\textbf{1-shot}} 
& w/o EMA            & 15.46$\pm$1.41 & 30.84$\pm$9.50   & 7.43$\pm$2.37   & 35.90$\pm$17.40    & 58.48$\pm$2.56    & 52.62$\pm$2.41 \\
& w/o L\_proto       & 9.57$\pm$4.56  & 31.24$\pm$10.36  & 37.90$\pm$9.34  & 43.59$\pm$8.13     & 58.49$\pm$2.60    & 54.10$\pm$2.76 \\
& GRAPHGLUE          & \textbf{29.73$\pm$2.56} & \textbf{61.03$\pm$7.13} & \textbf{68.42$\pm$4.68} & \textbf{60.89$\pm$2.11} & \textbf{69.12$\pm$4.19} & \textbf{58.53$\pm$8.20} \\
\midrule
\multirow{3}{*}{\textbf{5-shot}} 
& w/o EMA            & 16.08$\pm$2.13 & 34.90$\pm$7.77   & 11.53$\pm$2.26  & 40.21$\pm$12.07    & 59.85$\pm$4.53    & 53.87$\pm$2.43 \\
& w/o L\_proto       & 15.26$\pm$9.91 & 36.63$\pm$11.84  & 46.13$\pm$14.14 & 64.56$\pm$16.42    & 61.64$\pm$6.28    & 55.50$\pm$2.70 \\
& GRAPHGLUE          & \textbf{39.98$\pm$1.67} & \textbf{74.15$\pm$2.38} & \textbf{84.89$\pm$0.68} & \textbf{79.52$\pm$1.75} & \textbf{73.94$\pm$2.38} & \textbf{62.18$\pm$2.50} \\
\bottomrule
\end{tabular}%
}
\end{table}

\subsection{Hyperparameter Sensitivity Analysis}
\label{sec:sensitive}
For AOF, we investigate the hyperparameter sensitivity on the neighborhood size $k$ and the number of nodes $M$ in $(k, M)$-sparse perturbation.  Results are shown in Table \ref{tab:km1shot} and \ref{tab:km5shot}.

\begin{table}[h]
\centering
\caption{1-shot results under different settings.}
\label{tab:km1shot}

\begin{subtable}[c]{\textwidth}
\centering
\caption{Analysis on $k$ ($M=32$).}
\resizebox{\textwidth}{!}{%
\begin{tabular}{lcccccc}
\toprule
$k$ & Arxiv & Computers & Reddit & FB15k\_237 & PROTEINS & HIV \\
\midrule
2  & 18.64$\pm$3.10 & 49.80$\pm$12.41 & 66.04$\pm$1.91 & 31.63$\pm$6.19 & 57.80$\pm$3.05 & 53.07$\pm$3.02 \\
5  & 17.16$\pm$3.08 & 43.76$\pm$10.78 & 48.57$\pm$6.76 & 21.10$\pm$2.47 & 59.49$\pm$2.62 & 52.04$\pm$3.01 \\
10 & 15.29$\pm$2.69 & 46.21$\pm$11.10 & 61.03$\pm$2.89 & 45.51$\pm$15.83 & 58.10$\pm$3.41 & 54.42$\pm$3.16 \\
15 & \textbf{29.73$\pm$2.56} & \textbf{61.03$\pm$7.13} & \textbf{68.42$\pm$4.68} & \textbf{60.89$\pm$2.11} & \textbf{69.12$\pm$4.19} & \textbf{58.53$\pm$8.20} \\
30 & 20.23$\pm$2.83 & 55.39$\pm$10.15 & 49.57$\pm$4.80 & 38.85$\pm$19.62 & 54.36$\pm$5.80 & 54.49$\pm$3.33 \\
60 & 18.20$\pm$2.63 & 51.88$\pm$10.22 & 75.76$\pm$3.00 & 31.83$\pm$16.41 & 58.24$\pm$3.34 & 51.64$\pm$3.44 \\
\bottomrule
\end{tabular}%
}
\end{subtable}

\vspace{1em} %

\begin{subtable}[c]{\textwidth}
\centering
\caption{Analysis on $M$ ($k=15$).}
\label{tab:km1shot_M}
\resizebox{\textwidth}{!}{%
\begin{tabular}{lcccccc}
\toprule
$M$ & Arxiv & Computers & Reddit & FB15k\_237 & PROTEINS & HIV \\
\midrule
4  & 19.89$\pm$3.79 & 42.66$\pm$11.17 & 60.57$\pm$4.48 & 45.16$\pm$5.85 & 56.99$\pm$4.64 & 52.58$\pm$3.68 \\
8  & 22.64$\pm$2.51 & 46.24$\pm$9.12  & 61.73$\pm$5.02 & 54.80$\pm$9.57  & 58.77$\pm$1.92 & 52.32$\pm$2.94 \\
16 & 27.84$\pm$1.47 & 56.92$\pm$14.86 & 62.73$\pm$1.97 & 57.09$\pm$7.44  & 55.34$\pm$5.68 & 54.18$\pm$3.84 \\
32 & \textbf{29.73$\pm$2.56} & \textbf{61.03$\pm$7.13} & \textbf{68.42$\pm$4.68} & \textbf{60.89$\pm$2.11} & \textbf{69.12$\pm$4.19} & \textbf{58.53$\pm$8.20} \\
\bottomrule
\end{tabular}%
}
\end{subtable}
\end{table}

\begin{table}[h]
\centering
\caption{5-shot results under different settings.}
\label{tab:km5shot}

\begin{subtable}[c]{\textwidth}
\centering
\caption{Analysis on $k$ ($M=32$).}
\label{tab:km5shot_k}
\resizebox{\textwidth}{!}{%
\begin{tabular}{lcccccc}
\toprule
$k$ & Arxiv & Computers & Reddit & FB15k\_237 & PROTEINS & HIV \\
\midrule
2  & 29.49$\pm$3.14 & 70.82$\pm$3.14 & 80.46$\pm$1.21 & 36.24$\pm$5.11 & 59.42$\pm$2.03 & 56.10$\pm$2.76 \\
5  & 23.95$\pm$7.88 & 67.74$\pm$4.02 & 67.41$\pm$8.74 & 39.26$\pm$14.50 & 60.45$\pm$3.05 & 54.76$\pm$2.33 \\
10 & 25.86$\pm$7.17 & 70.59$\pm$18.03 & 80.49$\pm$0.59 & 48.01$\pm$10.53 & 60.86$\pm$3.08 & 55.69$\pm$4.23 \\
15 & \textbf{39.98$\pm$1.67} & \textbf{74.15$\pm$2.38} & \textbf{84.89$\pm$0.68} & \textbf{79.52$\pm$1.75} & \textbf{73.94$\pm$2.38} & \textbf{62.18$\pm$2.50} \\
30 & 33.00$\pm$1.63 & 57.65$\pm$29.35 & 64.57$\pm$6.84 & 42.88$\pm$16.13 & 62.09$\pm$2.45 & 54.63$\pm$3.01 \\
60 & 32.59$\pm$1.57 & 68.08$\pm$1.66 & 85.24$\pm$0.42 & 45.42$\pm$8.09  & 60.40$\pm$1.35 & 54.01$\pm$3.69 \\
\bottomrule
\end{tabular}%
}
\end{subtable}

\vspace{1em} 

\begin{subtable}[c]{\textwidth}
\centering
\caption{Analysis on $M$ ($k=15$).}
\label{tab:km5shot_M}
\resizebox{\textwidth}{!}{%
\begin{tabular}{lcccccc}
\toprule
$M$ & Arxiv & Computers & Reddit & FB15k\_237 & PROTEINS & HIV \\
\midrule
4  & 32.62$\pm$3.40 & 64.66$\pm$13.87 & 70.86$\pm$2.92 & 60.63$\pm$4.80 & 57.38$\pm$4.76 & 56.27$\pm$1.62 \\
8  & 34.92$\pm$1.50 & 72.25$\pm$1.99  & 78.82$\pm$1.79 & 63.18$\pm$14.37 & 59.70$\pm$1.92 & 54.64$\pm$2.35 \\
16 & 37.78$\pm$5.79 & 74.22$\pm$13.95 & 74.92$\pm$2.41 & 70.11$\pm$11.89 & 60.62$\pm$3.66 & 57.55$\pm$2.65 \\
32 & \textbf{39.98$\pm$1.67} & \textbf{74.15$\pm$2.38} & \textbf{84.89$\pm$0.68} & \textbf{79.52$\pm$1.75} & \textbf{73.94$\pm$2.38} & \textbf{62.18$\pm$2.50} \\
\bottomrule
\end{tabular}%
}
\end{subtable}

\end{table}

\subsection{Results on Heterophilic Graphs}
\label{sec:hetero}
We demonstrate the performance of GraphGlue on several benchmarking heterophilic graphs (Amazon-ratings, Roman-empire, Texas and Wisconsin). The results are in Table \ref{tab:6_datasets} and \ref{tab:8_datasets}.

\begin{table}[h]
\centering
\caption{Performance under different shot settings with pretrained on ogbn-arxiv, Reddit, Computers, FB15k\_237, PROTEINS, and HIV.}
\label{tab:6_datasets}
\begin{tabular}{llcccc}
\toprule
& \textbf{Method} & \textbf{Amazon-Ratings} & \textbf{Roman-empire} & \textbf{Texas} & \textbf{Wisconsin} \\
\midrule
\multirow{3}{*}{\textbf{1-shot}} 
& GCOPE     & 28.65$\pm$5.82 & 11.44$\pm$1.91 & 33.19$\pm$6.62 & 31.22$\pm$6.85 \\
& MDGFM     & 29.53$\pm$3.45 & 14.51$\pm$2.08 & 34.63$\pm$10.70 & 35.11$\pm$10.53 \\
& GraphGlue & \textbf{31.16$\pm$3.56} & \textbf{16.23$\pm$3.00} & \textbf{35.16$\pm$20.43} & \textbf{37.95$\pm$10.91} \\
\midrule
\multirow{3}{*}{\textbf{5-shot}} 
& GCOPE     & 30.06$\pm$5.11 & 16.00$\pm$1.29 & 36.31$\pm$10.14 & 38.21$\pm$2.96 \\
& MDGFM     & 30.42$\pm$3.80 & 17.15$\pm$1.66 & 48.33$\pm$6.36 & 47.46$\pm$4.86 \\
& GraphGlue & \textbf{32.17$\pm$2.91} & \textbf{18.50$\pm$1.07} & \textbf{50.88$\pm$11.93} & \textbf{49.71$\pm$8.00} \\
\bottomrule
\end{tabular}
\end{table}

\begin{table}[h]
\centering
\caption{Performance under different shot settings with pretrained on 8 datasets (including Amazon-Ratings and Roman-Empire).}
\label{tab:8_datasets}
\begin{tabular}{llcccc}
\toprule
& \textbf{Method} & \textbf{Amazon-Ratings} & \textbf{Roman-empire} & \textbf{Texas} & \textbf{Wisconsin} \\
\midrule
\multirow{3}{*}{\textbf{1-shot}} 
& GCOPE     & 29.03$\pm$4.17 & 13.14$\pm$2.36 & 33.82$\pm$7.39 & 30.08$\pm$5.13 \\
& MDGFM     & 27.01$\pm$2.98 & 14.11$\pm$2.14 & 36.02$\pm$9.54 & 33.28$\pm$8.76 \\
& GraphGlue & \textbf{34.12$\pm$2.57} & \textbf{18.19$\pm$2.51} & \textbf{38.65$\pm$13.88} & \textbf{40.17$\pm$10.09} \\
\midrule
\multirow{3}{*}{\textbf{5-shot}} 
& GCOPE     & 32.83$\pm$3.26 & 16.98$\pm$1.40 & 41.33$\pm$9.85 & 43.74$\pm$3.19 \\
& MDGFM     & 32.54$\pm$3.75 & 16.77$\pm$1.92 & 48.10$\pm$7.26 & 46.62$\pm$4.16 \\
& GraphGlue & \textbf{36.26$\pm$3.09} & \textbf{20.67$\pm$1.34} & \textbf{52.60$\pm$6.07} & \textbf{51.49$\pm$7.97} \\
\bottomrule
\end{tabular}
\end{table}

\subsection{Visualization of Manifold Gluing}
\label{sec:gluing}

Manifold gluing aims to glue local pieces into one smooth surface, whose process is described as follows.
\begin{itemize}
    \item First, we construct local geometry on each patch using $(k,M)$-sparse perturbation--like drawing a coordinate grid;
    \item Then, when two graphs share similar structures, we "glue" their grids together along overlapping regions, ensuring no stretching or twisting (via the isometry of Def. 4.4 and holonomy of Eq. 5);
    \item Finally, we smooth the entire surface so that curvature changes gradually--forming a unified manifold where knowledge flows naturally across domains.
\end{itemize}
In addition, we visualize a toy example of the aforementioned process in Figure \ref{fig:glue}.

\begin{figure}
    \centering
  \includegraphics[width=0.6\linewidth]{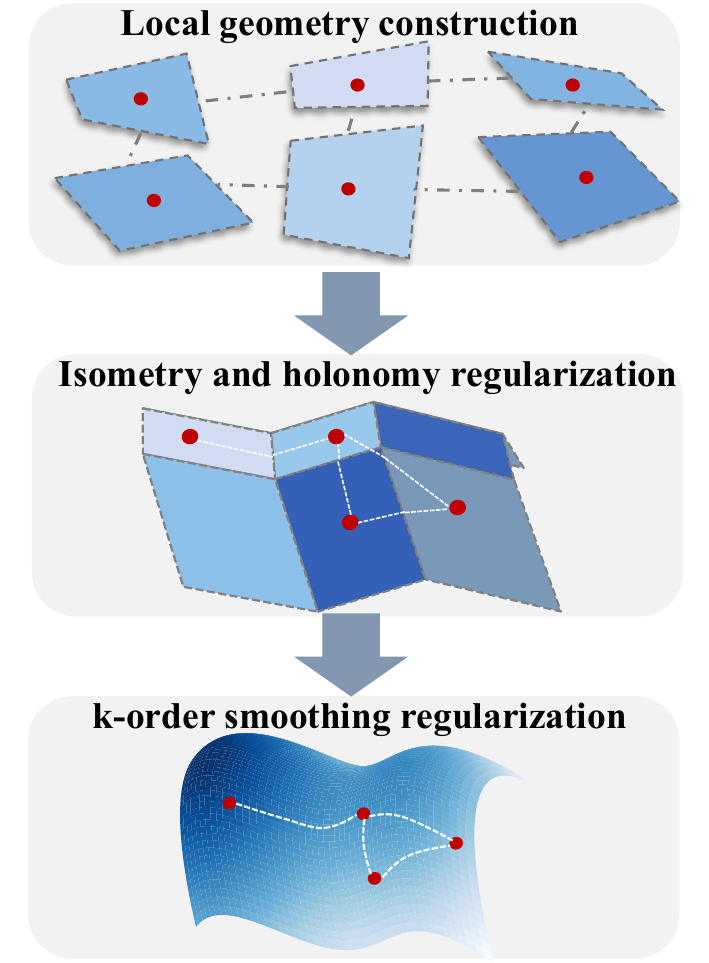}
         \vspace{-0.2in}
 \caption{Visualization of the pre-trained manifold from 6 datasets.}
    \label{fig:glue}
\end{figure}

\begin{table}[htbp]
  \centering
  \caption{Performance of cross-domain transfer on various downstream tasks in the 1-shot setting, reported as mean $\pm$ std over 10 runs. The highest result is \textbf{bolded}, and the runner-up is \underline{underlined}.}
  \resizebox{\textwidth}{!}{
    \begin{tabular}{ccccccc}
    \toprule
    \multirow{2}[3]{*}{Model} & \multicolumn{3}{c}{Node Classification} & Link Classification & \multicolumn{2}{c}{Graph Classification} \\
\cmidrule(lr){2-4} \cmidrule(lr){5-5} \cmidrule(lr){6-7}         & \texttt{Arxiv} & \texttt{Computers} & \texttt{Reddit} & \texttt{FB15k\_237} & \texttt{PROTEINS} & \texttt{HIV} \\
    \midrule
    GCN   & $12.61\scriptstyle{\pm1.75}$ & $33.89\scriptstyle{\pm3.86}$ & $11.15\scriptstyle{\pm2.14}$ & $32.11\scriptstyle{\pm2.37}$ & $50.11\scriptstyle{\pm13.07}$ & $52.56\scriptstyle{\pm5.39}$ \\
    GraphSAGE & $14.68\scriptstyle{\pm3.76}$ & $35.47\scriptstyle{\pm8.29}$ & $14.69\scriptstyle{\pm2.31}$ & $35.74\scriptstyle{\pm2.19}$ & $58.99\scriptstyle{\pm2.79}$ & $56.78\scriptstyle{\pm3.75}$ \\
    GIN   & $11.20\scriptstyle{\pm2.03}$ & $44.77\scriptstyle{\pm6.02}$ & $18.53\scriptstyle{\pm1.89}$ & $38.25\scriptstyle{\pm2.55}$ & $54.22\scriptstyle{\pm13.50}$ & $52.63\scriptstyle{\pm7.47}$ \\
    \midrule
    GCC   & $12.65\scriptstyle{\pm2.08}$ & $34.82\scriptstyle{\pm6.13}$ & $54.78\scriptstyle{\pm5.64}$ & $47.84\scriptstyle{\pm1.95}$ & $59.20\scriptstyle{\pm7.97}$ & $52.63\scriptstyle{\pm3.63}$ \\
    DGI   & $13.32\scriptstyle{\pm3.35}$ & $35.26\scriptstyle{\pm7.58}$ & $60.08\scriptstyle{\pm4.80}$ & $42.50\scriptstyle{\pm2.03}$ & $53.18\scriptstyle{\pm8.44}$ & $52.80\scriptstyle{\pm7.53}$ \\
    GraphMAE & $12.61\scriptstyle{\pm1.75}$ & $33.89\scriptstyle{\pm3.86}$ & $11.15\scriptstyle{\pm2.14}$ & $51.34\scriptstyle{\pm1.87}$ & $\mathbf{60.11}\scriptstyle{\pm13.07}$ & $52.78\scriptstyle{\pm6.72}$ \\
    \midrule
    PRODIGY & \underline{$28.45\scriptstyle{\pm2.20}$} & $45.32\scriptstyle{\pm4.10}$ & $35.67\scriptstyle{\pm3.20}$ & $53.50\scriptstyle{\pm1.02}$ & $48.90\scriptstyle{\pm5.40}$ & $41.78\scriptstyle{\pm4.50}$ \\
    GFT   & $26.59\scriptstyle{\pm2.45}$ & \underline{$54.65\scriptstyle{\pm4.08}$} & $58.87\scriptstyle{\pm2.53}$ & $58.07\scriptstyle{\pm1.39}$ & $55.41\scriptstyle{\pm5.87}$ & \underline{$58.94\scriptstyle{\pm6.32}$} \\
    RAGraph & $18.71\scriptstyle{\pm2.58}$ & $46.21\scriptstyle{\pm4.37}$ & $52.56\scriptstyle{\pm3.48}$ & $52.18\scriptstyle{\pm3.04}$ & $51.42\scriptstyle{\pm5.18}$ & $54.26\scriptstyle{\pm3.51}$ \\
    SAMGPT & $24.15\scriptstyle{\pm3.81}$ & $47.61\scriptstyle{\pm7.42}$ & $62.85\scriptstyle{\pm4.22}$ & $57.44\scriptstyle{\pm2.46}$ & $52.42\scriptstyle{\pm3.15}$ & $55.48\scriptstyle{\pm3.26}$ \\
    GCOPE & $26.52\scriptstyle{\pm5.56}$ & $54.55\scriptstyle{\pm9.14}$ & $62.76\scriptstyle{\pm4.52}$ & \underline{$58.25\scriptstyle{\pm2.67}$} & $55.19\scriptstyle{\pm3.59}$ & $58.93\scriptstyle{\pm2.60}$ \\
    MDGFM & $26.05\scriptstyle{\pm2.40}$ & $46.68\scriptstyle{\pm8.43}$ & \underline{$64.88\scriptstyle{\pm3.31}$} & $56.11\scriptstyle{\pm1.68}$ & $53.41\scriptstyle{\pm5.34}$ & $51.46\scriptstyle{\pm2.85}$ \\
    \midrule
    \textsc{GraphGlue} & $\mathbf{28.88}\scriptstyle{\pm5.22}$ & $\mathbf{59.50}\scriptstyle{\pm7.05}$ & $\mathbf{67.12}\scriptstyle{\pm3.39}$ & $\mathbf{59.75}\scriptstyle{\pm5.27}$ & \underline{$59.87\scriptstyle{\pm4.85}$} & $\mathbf{60.22}\scriptstyle{\pm3.09}$ \\
    \bottomrule
    \end{tabular}%
  }
  \label{app:1-shot_cross-domain_results}%
\end{table}%

\begin{table}[htbp]
  \centering
  \caption{Performance of cross-domain transfer on various downstream tasks in the 5-shot setting, reported as mean $\pm$ std over 10 runs. The highest result is \textbf{bolded}, and the runner-up is \underline{underlined}.}
  \resizebox{\textwidth}{!}{
    \begin{tabular}{ccccccc}
    \toprule
    \multirow{2}[3]{*}{Model} & \multicolumn{3}{c}{Node Classification} & Link Classification & \multicolumn{2}{c}{Graph Classification} \\
\cmidrule(lr){2-4} \cmidrule(lr){5-5} \cmidrule(lr){6-7}         & \texttt{Arxiv} & \texttt{Computers} & \texttt{Reddit} & \texttt{FB15k\_237} & \texttt{PROTEINS} & \texttt{HIV} \\
    \midrule
    GCN   & $27.68\scriptstyle{\pm2.13}$ & $65.78\scriptstyle{\pm4.20}$ & $28.36\scriptstyle{\pm1.01}$ & $52.43\scriptstyle{\pm1.87}$ & $55.04\scriptstyle{\pm9.98}$ & $47.81\scriptstyle{\pm3.91}$ \\
    GraphSAGE & $26.18\scriptstyle{\pm2.21}$ & $66.75\scriptstyle{\pm4.45}$ & $22.27\scriptstyle{\pm1.17}$ & $58.91\scriptstyle{\pm1.52}$ & $60.45\scriptstyle{\pm1.39}$ & $50.59\scriptstyle{\pm0.75}$ \\
    GIN   & $26.06\scriptstyle{\pm2.42}$ & $69.51\scriptstyle{\pm3.50}$ & $29.03\scriptstyle{\pm1.66}$ & $63.76\scriptstyle{\pm1.73}$ & $58.87\scriptstyle{\pm5.05}$ & $49.12\scriptstyle{\pm4.95}$ \\
    \midrule
    GCC   & $26.84\scriptstyle{\pm2.14}$ & $62.63\scriptstyle{\pm3.16}$ & $65.21\scriptstyle{\pm1.56}$ & $73.69\scriptstyle{\pm1.24}$ & $64.20\scriptstyle{\pm3.09}$ & $57.41\scriptstyle{\pm1.73}$ \\
    DGI   & $27.18\scriptstyle{\pm2.33}$ & $61.02\scriptstyle{\pm3.20}$ & $62.72\scriptstyle{\pm2.21}$ & $68.32\scriptstyle{\pm1.46}$ & $53.34\scriptstyle{\pm6.27}$ & $52.23\scriptstyle{\pm8.49}$ \\
    GraphMAE & $27.68\scriptstyle{\pm2.13}$ & $65.78\scriptstyle{\pm4.20}$ & $28.36\scriptstyle{\pm1.01}$ & $77.25\scriptstyle{\pm1.07}$ & \underline{$65.04\scriptstyle{\pm9.98}$} & $57.81\scriptstyle{\pm3.91}$ \\
    \midrule
    PRODIGY & $33.67\scriptstyle{\pm2.80}$ & $52.78\scriptstyle{\pm3.60}$ & $42.34\scriptstyle{\pm2.90}$ & $72.17\scriptstyle{\pm6.94}$ & $55.23\scriptstyle{\pm4.70}$ & $48.65\scriptstyle{\pm3.80}$ \\
    GFT   & $36.78\scriptstyle{\pm1.92}$ & $69.13\scriptstyle{\pm3.56}$ & $66.28\scriptstyle{\pm1.42}$ & $79.13\scriptstyle{\pm1.68}$ & $62.18\scriptstyle{\pm3.59}$ & $57.68\scriptstyle{\pm5.43}$ \\
    RAGraph & $32.35\scriptstyle{\pm1.78}$ & $62.38\scriptstyle{\pm3.75}$ & $63.08\scriptstyle{\pm1.32}$ & $64.52\scriptstyle{\pm2.57}$ & $58.62\scriptstyle{\pm2.86}$ & $56.32\scriptstyle{\pm3.46}$ \\
    SAMGPT & $34.42\scriptstyle{\pm2.25}$ & $60.87\scriptstyle{\pm3.64}$ & $75.12\scriptstyle{\pm1.63}$ & $77.63\scriptstyle{\pm2.71}$ & $59.14\scriptstyle{\pm2.60}$ & $57.63\scriptstyle{\pm2.87}$ \\
    GCOPE & $\mathbf{39.18}\scriptstyle{\pm1.96}$ & \underline{$72.27\scriptstyle{\pm2.84}$} & \underline{$80.45\scriptstyle{\pm0.70}$} & \underline{$79.38\scriptstyle{\pm2.29}$} & $64.85\scriptstyle{\pm2.41}$ & \underline{$58.47\scriptstyle{\pm1.82}$} \\
    MDGFM & $32.28\scriptstyle{\pm1.77}$ & $64.08\scriptstyle{\pm5.38}$ & $76.55\scriptstyle{\pm1.72}$ & $77.67\scriptstyle{\pm2.05}$ & $57.79\scriptstyle{\pm3.42}$ & $55.79\scriptstyle{\pm3.16}$ \\
    \midrule
    \textsc{GraphGlue} & \underline{$37.02\scriptstyle{\pm2.33}$} & $\mathbf{73.29}\scriptstyle{\pm0.70}$ & $\mathbf{85.05}\scriptstyle{\pm1.17}$ & $\mathbf{81.51}\scriptstyle{\pm2.31}$ & $\mathbf{65.32}\scriptstyle{\pm2.45}$ & $\mathbf{61.55}\scriptstyle{\pm2.66}$ \\
    \bottomrule
    \end{tabular}%
  }
  \label{app:5-shot_cross-domain_results}%
\end{table}%

\begin{table}[htbp]
  \centering
  \caption{Performance of intra-domain transfer on various downstream tasks in the 1-shot setting, reported as mean $\pm$ std over 10 runs. The highest result is \textbf{bolded}, and the runner-up is \underline{underlined}.}
  \resizebox{\textwidth}{!}{
    \begin{tabular}{ccccccc}
    \toprule
    \multirow{2}[3]{*}{Model} & \multicolumn{3}{c}{Node Classification} & Link Classification & \multicolumn{2}{c}{Graph Classification} \\
\cmidrule(lr){2-4} \cmidrule(lr){5-5} \cmidrule(lr){6-7}         & \texttt{Arxiv} & \texttt{Computers} & \texttt{Reddit} & \texttt{FB15k\_237} & \texttt{PROTEINS} & \texttt{HIV} \\
    \midrule
    GCN   & $12.61\scriptstyle{\pm1.75}$ & $33.89\scriptstyle{\pm3.86}$ & $11.15\scriptstyle{\pm2.14}$ & $32.11\scriptstyle{\pm2.37}$ & $60.11\scriptstyle{\pm13.07}$ & $52.56\scriptstyle{\pm5.39}$ \\
    GraphSAGE & $14.68\scriptstyle{\pm3.76}$ & $35.47\scriptstyle{\pm8.29}$ & $14.69\scriptstyle{\pm2.31}$ & $35.74\scriptstyle{\pm2.19}$ & \underline{$68.99\scriptstyle{\pm2.79}$} & $56.78\scriptstyle{\pm3.75}$ \\
    GIN   & $11.20\scriptstyle{\pm2.03}$ & $44.77\scriptstyle{\pm6.02}$ & $18.53\scriptstyle{\pm1.89}$ & $38.25\scriptstyle{\pm2.55}$ & $64.22\scriptstyle{\pm13.50}$ & $52.63\scriptstyle{\pm7.47}$ \\
    \midrule
    GCC   & $12.65\scriptstyle{\pm2.08}$ & $34.82\scriptstyle{\pm6.13}$ & $54.78\scriptstyle{\pm5.64}$ & $47.80\scriptstyle{\pm1.97}$ & $59.20\scriptstyle{\pm7.97}$ & $52.63\scriptstyle{\pm3.63}$ \\
    DGI   & $13.32\scriptstyle{\pm3.35}$ & $35.26\scriptstyle{\pm7.58}$ & $60.08\scriptstyle{\pm4.80}$ & $42.56\scriptstyle{\pm2.05}$ & $53.18\scriptstyle{\pm8.44}$ & $52.80\scriptstyle{\pm7.53}$ \\
    GraphMAE & $12.61\scriptstyle{\pm1.75}$ & $33.89\scriptstyle{\pm3.86}$ & $11.15\scriptstyle{\pm2.14}$ & $51.34\scriptstyle{\pm1.87}$ & $60.11\scriptstyle{\pm13.07}$ & $52.56\scriptstyle{\pm5.39}$ \\
    \midrule
    PRODIGY & $28.45\scriptstyle{\pm2.20}$ & $45.32\scriptstyle{\pm4.10}$ & $35.67\scriptstyle{\pm3.20}$ & $53.50\scriptstyle{\pm1.02}$ & $48.90\scriptstyle{\pm5.40}$ & $41.78\scriptstyle{\pm4.50}$ \\
    GFT   & \underline{$28.83\scriptstyle{\pm1.76}$} & $53.94\scriptstyle{\pm3.47}$ & $63.03\scriptstyle{\pm2.34}$ & \underline{$59.43\scriptstyle{\pm0.87}$} & $63.54\scriptstyle{\pm4.98}$ & $58.17\scriptstyle{\pm5.76}$ \\
    RAGraph & $20.53\scriptstyle{\pm2.13}$ & $50.39\scriptstyle{\pm3.81}$ & $59.91\scriptstyle{\pm2.79}$ & $52.09\scriptstyle{\pm2.57}$ & $52.83\scriptstyle{\pm4.37}$ & $55.73\scriptstyle{\pm3.06}$ \\
    SAMGPT & $25.88\scriptstyle{\pm3.58}$ & $55.31\scriptstyle{\pm6.67}$ & $63.05\scriptstyle{\pm3.75}$ & $58.75\scriptstyle{\pm2.16}$ & $64.59\scriptstyle{\pm2.89}$ & $52.38\scriptstyle{\pm2.71}$ \\
    GCOPE & $27.41\scriptstyle{\pm4.77}$ & \underline{$58.24\scriptstyle{\pm7.48}$} & \underline{$65.07\scriptstyle{\pm3.76}$} & $58.33\scriptstyle{\pm1.79}$ & $68.55\scriptstyle{\pm3.17}$ & $\mathbf{60.67}\scriptstyle{\pm2.42}$ \\
    MDGFM & $10.76\scriptstyle{\pm2.04}$ & $43.22\scriptstyle{\pm8.53}$ & $64.38\scriptstyle{\pm3.11}$ & $58.32\scriptstyle{\pm1.71}$ & $57.79\scriptstyle{\pm11.51}$ & $53.03\scriptstyle{\pm3.88}$ \\
    \midrule
    \textsc{GraphGlue} & $\mathbf{29.73}\scriptstyle{\pm2.56}$ & $\mathbf{61.03}\scriptstyle{\pm7.13}$ & $\mathbf{68.42}\scriptstyle{\pm4.68}$ & $\mathbf{60.89}\scriptstyle{\pm2.11}$ & $\mathbf{69.12}\scriptstyle{\pm4.19}$ & \underline{$58.53\scriptstyle{\pm8.20}$} \\
    \bottomrule
    \end{tabular}%
  }
  \label{app:1-shot_intra-domain_results}%
\end{table}%

\begin{table}[htbp]
  \centering
  \caption{Performance of intra-domain transfer on various downstream tasks in the 5-shot setting, reported as mean $\pm$ std over 10 runs. The highest result is \textbf{bolded}, and the runner-up is \underline{underlined}.}
  \resizebox{\textwidth}{!}{
    \begin{tabular}{ccccccc}
    \toprule
    \multirow{2}[3]{*}{Model} & \multicolumn{3}{c}{Node Classification} & Link Classification & \multicolumn{2}{c}{Graph Classification} \\
\cmidrule(lr){2-4} \cmidrule(lr){5-5} \cmidrule(lr){6-7}         & \texttt{Arxiv} & \texttt{Computers} & \texttt{Reddit} & \texttt{FB15k\_237} & \texttt{PROTEINS} & \texttt{HIV} \\
    \midrule
    GCN   & $27.68\scriptstyle{\pm2.13}$ & $65.78\scriptstyle{\pm4.20}$ & $28.36\scriptstyle{\pm1.01}$ & $52.43\scriptstyle{\pm1.87}$ & $65.04\scriptstyle{\pm9.98}$ & $57.81\scriptstyle{\pm3.91}$ \\
    GraphSAGE & $26.18\scriptstyle{\pm2.21}$ & $66.75\scriptstyle{\pm4.45}$ & $22.27\scriptstyle{\pm1.17}$ & $58.91\scriptstyle{\pm1.52}$ & $70.45\scriptstyle{\pm1.39}$ & $60.59\scriptstyle{\pm0.75}$ \\
    GIN   & $26.06\scriptstyle{\pm2.42}$ & $69.51\scriptstyle{\pm3.50}$ & $29.03\scriptstyle{\pm1.66}$ & $63.76\scriptstyle{\pm1.73}$ & $68.87\scriptstyle{\pm5.05}$ & $59.12\scriptstyle{\pm4.95}$ \\
    \midrule
    GCC   & $26.84\scriptstyle{\pm2.14}$ & $62.63\scriptstyle{\pm3.16}$ & $65.21\scriptstyle{\pm1.56}$ & $73.64\scriptstyle{\pm1.25}$ & $64.20\scriptstyle{\pm3.09}$ & $58.34\scriptstyle{\pm2.19}$ \\
    DGI   & $27.18\scriptstyle{\pm2.33}$ & $61.02\scriptstyle{\pm3.20}$ & $62.72\scriptstyle{\pm2.21}$ & $68.32\scriptstyle{\pm1.47}$ & $53.34\scriptstyle{\pm6.27}$ & $52.23\scriptstyle{\pm8.49}$ \\
    GraphMAE & $27.68\scriptstyle{\pm2.13}$ & $65.78\scriptstyle{\pm4.20}$ & $28.36\scriptstyle{\pm1.01}$ & $77.25\scriptstyle{\pm1.07}$ & $65.04\scriptstyle{\pm9.98}$ & $57.81\scriptstyle{\pm3.91}$ \\
    \midrule
    PRODIGY & $33.67\scriptstyle{\pm2.80}$ & $52.78\scriptstyle{\pm3.60}$ & $42.34\scriptstyle{\pm2.90}$ & $72.17\scriptstyle{\pm6.94}$ & $55.23\scriptstyle{\pm4.70}$ & $48.65\scriptstyle{\pm3.80}$ \\
    GFT   & $39.02\scriptstyle{\pm1.39}$ & \underline{$73.41\scriptstyle{\pm3.21}$} & $71.37\scriptstyle{\pm1.45}$ & \underline{$79.25\scriptstyle{\pm0.94}$} & $\mathbf{74.69}\scriptstyle{\pm2.84}$ & \underline{$61.03\scriptstyle{\pm4.83}$} \\
    RAGraph & $35.74\scriptstyle{\pm1.46}$ & $61.98\scriptstyle{\pm2.79}$ & $66.30\scriptstyle{\pm0.75}$ & $67.86\scriptstyle{\pm1.69}$ & $62.52\scriptstyle{\pm3.83}$ & $59.23\scriptstyle{\pm2.80}$ \\
    SAMGPT & $38.14\scriptstyle{\pm1.87}$ & $64.68\scriptstyle{\pm2.87}$ & $74.89\scriptstyle{\pm1.51}$ & $78.76\scriptstyle{\pm2.33}$ & $70.48\scriptstyle{\pm2.19}$ & $59.09\scriptstyle{\pm2.49}$ \\
    GCOPE & \underline{$39.45\scriptstyle{\pm1.23}$} & $73.06\scriptstyle{\pm2.19}$ & \underline{$82.12\scriptstyle{\pm0.53}$} & $78.69\scriptstyle{\pm1.87}$ & \underline{$73.76\scriptstyle{\pm2.53}$} & $60.05\scriptstyle{\pm1.73}$ \\
    MDGFM & $19.17\scriptstyle{\pm2.39}$ & $68.19\scriptstyle{\pm4.03}$ & $81.27\scriptstyle{\pm1.23}$ & $78.24\scriptstyle{\pm2.35}$ & $65.95\scriptstyle{\pm8.62}$ & $54.73\scriptstyle{\pm4.37}$ \\
    \midrule
    \textsc{GraphGlue} & $\mathbf{39.98}\scriptstyle{\pm1.67}$ & $\mathbf{74.15}\scriptstyle{\pm2.38}$ & $\mathbf{84.89}\scriptstyle{\pm0.68}$ & $\mathbf{79.52}\scriptstyle{\pm1.75}$ & $69.74\scriptstyle{\pm2.38}$ & $\mathbf{62.18}\scriptstyle{\pm2.50}$ \\
    \bottomrule
    \end{tabular}%
  }
  \label{app:5-shot_intra-domain_results}%
\end{table}%

\begin{table}[htbp]
    \centering
    \caption{Ablation study of \textsc{GraphGlue}'s key components.}
    \label{app:ablation}
    \resizebox{\textwidth}{!}{
    \begin{tabular}{cccccccc}
        \toprule
        & Variants & \texttt{Arxiv} &\texttt{Computers} & \texttt{Reddit} & \texttt{FB15k\_237} & \texttt{PROTEINS} & \texttt{HIV} \\
        \midrule
        \multirow{3}{*}{1-shot} & w/o $\mathcal{L}_{\text{curv}}$ & $22.33\scriptstyle{\pm2.56}$ & $49.63\scriptstyle{\pm5.11}$ & $64.38\scriptstyle{\pm5.12}$ & $53.12\scriptstyle{\pm3.74}$ & $51.21\scriptstyle{\pm3.54}$ & $50.34\scriptstyle{\pm3.87}$ \\
          & w/o  $\mathcal{L}_{\text{holo}}$ & $27.14\scriptstyle{\pm3.62}$ & $56.39\scriptstyle{\pm4.16}$ & $65.93\scriptstyle{\pm4.33}$ & $54.85\scriptstyle{\pm4.78}$ & $53.23\scriptstyle{\pm4.48}$ & $54.83\scriptstyle{\pm2.15}$ \\
          & \textsc{GraphGlue} & \textbf{$\mathbf{28.88}\scriptstyle{\pm5.22}$} & $\mathbf{59.50}\scriptstyle{\pm7.05}$ & $\mathbf{67.12}\scriptstyle{\pm3.39}$ & $\mathbf{59.75}\scriptstyle{\pm5.27}$ & $\mathbf{55.87}\scriptstyle{\pm4.85}$ & $\mathbf{60.22}\scriptstyle{\pm3.09}$ \\
    \midrule
    \multirow{3}{*}{5-shot} & w/o $\mathcal{L}_{\text{curv}}$ & $29.17\scriptstyle{\pm3.14}$ & $66.85\scriptstyle{\pm3.58}$ & $74.13\scriptstyle{\pm1.92}$ & $69.33\scriptstyle{\pm3.98}$ & $58.77\scriptstyle{\pm4.35}$ & $57.13\scriptstyle{\pm3.33}$ \\
          & w/o  $\mathcal{L}_{\text{holo}}$ & $35.77\scriptstyle{\pm2.64}$ & $67.16\scriptstyle{\pm2.45}$ & $79.11\scriptstyle{\pm3.47}$ & $74.02\scriptstyle{\pm0.97}$ & $58.74\scriptstyle{\pm2.18}$ & $54.12\scriptstyle{\pm4.19}$ \\
          & \textsc{GraphGlue} & $\mathbf{37.02}\scriptstyle{\pm2.33}$ & $\mathbf{73.29}\scriptstyle{\pm0.70}$ & $\mathbf{85.05}\scriptstyle{\pm1.17}$ & $\mathbf{81.51}\scriptstyle{\pm2.31}$ & $\mathbf{65.32}\scriptstyle{\pm2.45}$ & $\mathbf{61.55}\scriptstyle{\pm2.66}$ \\
    \bottomrule
    \end{tabular}
    }
\end{table}

\begin{figure}
    \centering
  \includegraphics[width=\linewidth]{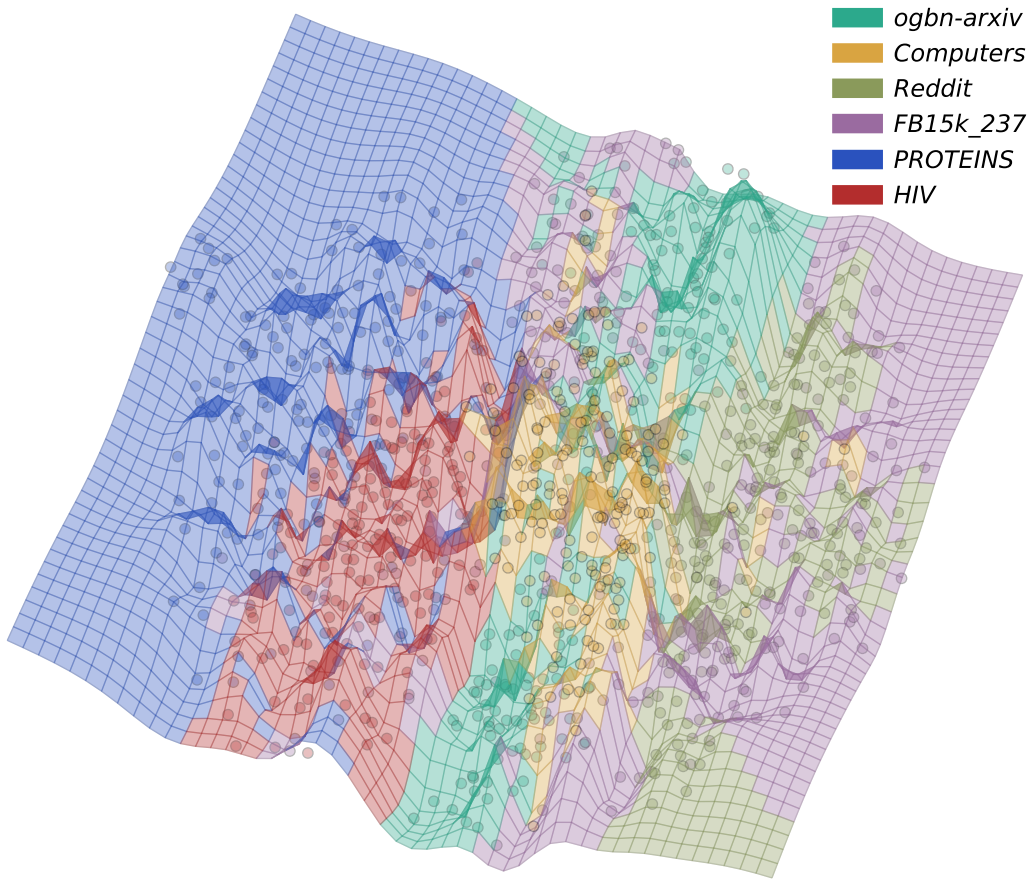}
         \vspace{-0.2in}
 \caption{Visualization of the pre-trained manifold from 6 datasets.}
    \label{app:visualization}
\end{figure}

\newpage
\section{Reproducibility Statement}

This part provides the reproducibility statement on claims, theory assumptions and proofs, empirical result reproducibility, empirical setting/details, empirical statistical significance, open access to data/code, computation resources, code of ethics, safeguards, licenses for existing assets, new assets, crowdsourcing and research with human subjects, declaration of LLM usage, and broader impacts.

\begin{enumerate}

\item {\bf Claims.} Do the main claims made in the abstract and introduction accurately reflect the paper's contributions and scope?
    
{\bf Yes.} Main claims made in the abstract and introduction reflect the contributions in Sections 4, 5 and 6.

\item {\bf Theory assumptions and proofs.} For each theoretical result, does the paper provide the full set of assumptions and a complete (and correct) proof?

{\bf Yes.} The theoretical results including the assumptions are clearly stated in Theorems in this paper, while the complete and correct proofs are provided in Appendix B.

\item {\bf Empirical result reproducibility.}  Does the paper fully disclose all the information needed to reproduce the main experimental results of the paper to the extent that it affects the main claims and/or conclusions of the paper (regardless of whether the code and data are provided or not)?
    
{\bf Yes.} Key information is introduced in the subsection of ``Evaluation Protocals'', and further details are disclosed in Appendix F entitled ``Empirical Details''.

\item {\bf Empirical setting/details.} Does the paper specify all the training and test details (e.g., data splits, hyperparameters, how they were chosen, type of optimizer, etc.) necessary to understand the results?

{\bf Yes.} Specifications are provided in Appendix F entitled ``Empirical Details'', and the full details are included in the code.

\item {\bf Empirical statistical significance.} Does the paper report error bars suitably and correctly defined or other appropriate information about the statistical significance of the experiments?

{\bf Yes.} In the experiment, each case undergoes $10$ independent runs, and we report the mean with the error bar of standard derivations. 
    
\item {\bf Open access to data and code.} Does the paper provide open access to the data and code, with sufficient instructions to faithfully reproduce the main experimental results?

 {\bf Yes.} Codes and data are available at the anonymous GitHub link with sufficient instructions.

\item {\bf Computation resources.} For each experiment, does the paper provide sufficient information on the computer resources (type of compute workers, memory, time of execution) needed to reproduce the experiments?

{\bf Yes.}  The computer resources for the evaluation are described in Appendix F entitled ``Empirical Details''.

\item {\bf Code of ethics.} Does the research conducted in the paper conform, in every respect, with the ICLR  Code of Ethics \url{https://iclr.cc/public/CodeOfEthics}?

{\bf Yes.}   We confirm that the research conducted in the paper conform, in every respect, with the ICLR Code of Ethics.

\item {\bf Safeguards.} Does the paper describe safeguards that have been put in place for responsible release of data or models that have a high risk for misuse (e.g., pretrained language models, image generators, or scraped datasets)?

{\bf Not available.}

\item {\bf Licenses for existing assets.} Are the creators or original owners of assets (e.g., code, data, models), used in the paper, properly credited and are the license and terms of use explicitly mentioned and properly respected?

{\bf Yes.}  The original papers that produced the code package or dataset are properly cited in this submission.

\item {\bf New assets.} Are new assets introduced in the paper well documented and is the documentation provided alongside the assets?

{\bf Yes.}  The documentation is provided alongside the Codes of the proposed model.

\item {\bf Crowdsourcing and research with human subjects.} For crowdsourcing experiments and research with human subjects, does the paper include the full text of instructions given to participants and screenshots, if applicable, as well as details about compensation (if any)? 

{\bf Not available.} This paper does not involve crowdsourcing nor research with human subjects.

\item {\bf Institutional review board (IRB) approvals or equivalent for research with human subjects.} Does the paper describe potential risks incurred by study participants, whether such risks were disclosed to the subjects, and whether Institutional Review Board (IRB) approvals (or an equivalent approval/review based on the requirements of your country or institution) were obtained?

{\bf Not available.} This paper does not involve crowdsourcing nor research with human subjects.

\item {\bf Declaration of LLM usage.} Does the paper describe the usage of LLMs (especially when it is an important, original, or non-standard component of the core methods in this research)? 

{\bf Yes.} LLM is used to polish writing only, and we include a section of ``Declaration of LLM Usage'' in the Appendix.

\item {\bf Broader impacts.} Does the paper discuss both potential positive societal impacts and negative societal impacts of the work performed?

{\bf Yes.} Both potential positive societal impacts and negative societal impacts are included in the section of  ``Broader Impact and Limitations'' in the Appendix.
    
\end{enumerate}

\section{Ethics Statement}

We confirm that the research conducted in the paper conform, in every respect, with the ICLR Code of Ethics \url{https://iclr.cc/public/CodeOfEthics}.

\section{Declaration of LLM Usage}

 Large Language Model (LLM) is used to polish writing. Concretely, we refine the textual contents in Section 1 (Introduction) and Section 7 (Conclusion) with LLM.
 
\section{Broader Impact}

Our work brings together two previously separate domains -- multi-domain graph pre-training and differential geometry. Our constructions taking in multi-domain graphs with a unified, smooth Riemannian manifold, thus enabling the solid tools of differential geometry to systematically understand the knowledge integration and transfer across graphs.
Theoretically, we develop the neural manifold gluing that makes the differential geometry principles implementable through deep learning.  
In practice, the proposed  pre-training model paves the way to build a powerful graph foundation model with better generality and quantifiable transferability.

Positive societal impacts lie in the transferability and generality of the proposed graph pre-training model, allowing for the analysis on more complicated real-world graphs. None of negative societal impacts we feel must be specifically highlighted.

\end{document}